\def\eqref#1{equation~\ref{#1}}
\def\1{\bm{1}}
\DeclareMathAlphabet{\mathsfit}{\encodingdefault}{\sfdefault}{m}{sl}
\SetMathAlphabet{\mathsfit}{bold}{\encodingdefault}{\sfdefault}{bx}{n}
\author{
Kamil Ciosek \\
\hspace{0.045em} Spotify \\
\hspace{0.045em} \texttt{kamilc@spotify.com}
}
\date {}
\title{Imitation Learning \\ by Reinforcement Learning}
\newcommand{\indicator}[0]{\mathbbm{1}}
\newcommand{\TV}[0]{\text{\normalfont TV}}
\newcommand{\JS}[0]{\text{\normalfont JS}}
\newcommand*{\expe}{\mathbb{E}}
\newcommand*{\rewardI}{R_\text{\normalfont int}}
\newcommand*{\rewardE}{R}
\newcommand*{\mixingTime}{\tau_\text{\normalfont mix}}
\newcommand{\legendExpert}{\includegraphics[height=1.4ex]{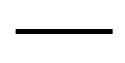}}
\newcommand{\legendBC}{\includegraphics[height=1.4ex]{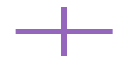}}
\newcommand{\legendGAIL}{\includegraphics[height=1.4ex]{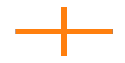}}
\newcommand{\legendGMMIL}{\includegraphics[height=1.4ex]{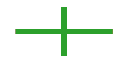}}
\newcommand{\legendILR}{\includegraphics[height=1.4ex]{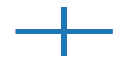}}
\newcommand{\legendSQIL}{\includegraphics[height=1.4ex]{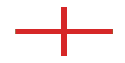}}
\newcommand{\legendbgGAIL}{\includegraphics[height=1.4ex]{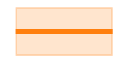}}
\newcommand{\legendbgGMMIL}{\includegraphics[height=1.4ex]{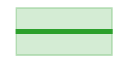}}
\newcommand{\legendbgILR}{\includegraphics[height=1.4ex]{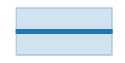}}
\newcommand{\legendbgSQIL}{\includegraphics[height=1.4ex]{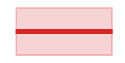}}
\newtheorem{proposition}{Proposition}
\newtheorem{assumption}{Assumption}
\newtheorem{lemma}{Lemma}
\begin{document}

\maketitle

\begin{abstract}
Imitation learning algorithms learn a policy from demonstrations of expert behavior. We show that, for deterministic experts, imitation learning can be done by reduction to reinforcement learning with a stationary reward. Our theoretical analysis both certifies the recovery of expert reward and bounds the total variation distance between the expert and the imitation learner, showing a link to adversarial imitation learning. We conduct experiments which confirm that our reduction works well in practice for continuous control tasks. 
\end{abstract}

\section{Introduction}
Typically, reinforcement learning (RL) assumes access to a pre-specified reward and then learns a policy maximizing the expected average of this reward along a trajectory. However, specifying rewards is difficult for many practical tasks \citep{atkeson1997robot, zhang2018deep, ibarz2018reward}. In such cases, it is convenient to instead perform Imitation Learning (IL), learning a policy from expert demonstrations.

There are two major categories of Imitation Learning algorithms: behavioral cloning and inverse reinforcement learning. Behavioral cloning learns the policy by supervised learning on expert data, but is not robust to training errors, failing in settings where expert data is limited \citep{ross2010efficient}. Inverse reinforcement learning (IRL) achieves improved performance on limited data by constructing reward signals and calling an RL oracle to maximize these rewards \citep{ng2000algorithms}.

The most versatile IRL method is adversarial IL \citep{ho2016generative, li2017infogail, ghasemipour2020divergence}, which minimizes a divergence between the distribution of data produced by the agent and provided by the expert. Adversarial IL learns a representation and a policy simultaneously by using a non-stationary reward obtained from a discriminator network. However, training adversarial IL combines two components which are hard to stabilize: a discriminator network, akin to the one used in GANs, as well as a policy, typically learned with an RL algorithm with actor and critic networks. This complexity makes the training process very brittle. 

%often requiring careful environment-specific tuning when combined with deep reinforcement learning.

There is a clear need for imitation learning algorithms that are simpler and easier to deploy. To address this need,  \citet{wang2019random} proposed to reduce imitation learning to a single instance of reinforcement learning problem, where reward is defined to be one for state-action pairs from the expert trajectory and zero for other state-action pairs. A closely related, but not identical, algorithm has been proposed by \citet{reddy2020sqil} (we describe the differences in Section \ref{sec-realted-work}). However, while empirical performance of these approaches has been good, they enjoy no performance guarantees at all, even in the asymptotic setting where expert data is infinite.

\paragraph{Contributions} 
We fill in this missing justification for this algorithm, providing the needed theoretical analysis. Specifically, in Sections \ref{sec-theory} and \ref{sec-proof}, we show a total variation bound between the expert policy and the imitation policy, providing a high-probability performance guarantee for a finite dataset of expert data and linking the reduction to adversarial imitation learning algorithms. For stochastic experts, we describe how the reduction fails, completing the analysis. Moreover, in Section \ref{sec-experiments}, we empirically evaluate the performance of the reduction as the amount of available expert data varies. 

%We also investigate how attaining a high intrinsic reward provides a certificate of good performance with respect to extrinsic rewards. 

\section{Preliminaries}
\label{sec-preliminaries}
\paragraph{Markov Decision Process} An average-reward Markov Decision Process \citep{puterman2014markov, feinberg2012handbook} is a tuple $(S,A,T,R,s_1)$, where $S$ is a state space, $A$ is the action space, ${ T: S \times A \rightarrow P(S) }$ is the transition model, $R: S \times A \rightarrow [0,1]$ is a bounded reward function and $s_1$ is the initial state. Here, we write $P(X)$ to denote probability distributions over a set $X$. A stationary policy $\pi: S \rightarrow P(A)$ maps environment states to distributions over actions. A policy $\pi$ induces a Markov chain $P_{\pi}$ over the states. In the theoretical part of the paper, we treat MDPs with finite state and action spaces. Given the starting state $s_1$ of the MDP, and a policy $\pi$, the limiting distribution over states is defined as
\begin{gather}
    \rho^{\pi}_S = \mathbbm{1}(s_1)^\top\lim_{N \rightarrow \infty } \frac1N \sum_{i=0}^N P_{\pi}^i,
    \label{cesaro-limit}
\end{gather}
where $\mathbbm{1}(s_1)$ denotes the indicator vector. We adopt the convention that the subscript $S$ indicates a distribution over states and no subscript indicates a distribution over state-action pairs. We denote $\rho^{\pi}(s,a) = \rho^{\pi}_S(s) \pi(a \vert s)$. While the limit in \eqref{cesaro-limit} is guaranteed to exist for all finite MDPs \citep{puterman2014markov}, without requiring ergodicity, in this paper we consider policies that induce ergodic chains. Correspondingly, our notation for $\rho^{\pi}_S$ and $\rho^{\pi}$ does not include the dependence on the initial state. The expected per-step reward of $\pi$ is defined as
\begin{gather}
    V^{\pi} = \lim_{N \rightarrow \infty} \expe_{\pi} \left[ J_N/N   \;\vert\; s_1 \right] = \expe_{\rho^{\pi}} [R(s,a)],
\end{gather}
where $J_N = \sum_{t=1}^N R(s_t,a_t)$ is the total return. In this paper we consider undiscounted MDPs because they are enough to model the essential properties of the imitation learning problem while being conceptually simpler than discounted MDPs. However, we believe that similar results could be obtained for discounted MDPs.  

\paragraph{Expert Dataset}
Assume that the expert follows an ergodic policy $\pi_E$. Denote the corresponding distribution over expert state-action pairs as $\rho^E(s,a) = \rho^{\pi_E}(s,a)$, which does not depend on the initial state by ergodicity. Consider a finite trajectory $s_1, a_1, s_2, a_2, \dots, s_N, a_N$. Denote by $\hat{\rho}(s,a) = \frac1N  \sum_{t=1}^N \indicator\left\{ (s,a) = (s_t,a_t) \right\}$ the histogram (empirical distribution) of data seen along this trajectory and denote by $\hat{\rho}_S(s) = \frac1N  \sum_{t=1}^N \indicator\left\{ s = s_t \right\}$ the corresponding histogram over the states. Denote by $D$ the support of $\hat{\rho}$, i.e. the set of state-action pairs visited by the expert.

\paragraph{Total Variation}
Consider probability distributions defined on a discrete set $X$. The total variation distance between distributions is defined as
\begin{gather}
\| \rho_1 - \rho_2 \|_\TV = \sup_{M \subseteq X} |\rho_1(M) - \rho_2(M) |.
\label{eq-def-tv}
\end{gather}
In our application, the set $X$ is either the set of MDP states $S$ or the set of state state-action pairs $S \times A$. Below, we restate in our notation standard results about the total variation distance. See Appendix \ref{appdx-proofs} for proofs. 
\begin{lemma}
If $\expe_{\rho_1}[ v ]- \expe_{\rho_2} [ v ] \leq \epsilon$ for any vector $v \in [0,1]^{|X|}$, then $\|\rho_1 - \rho_2\|_\TV \leq \epsilon$.
\label{property-tv}
\end{lemma}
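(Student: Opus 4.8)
The plan is to test the hypothesis on the family of indicator vectors, since these are exactly the objects that reproduce the set functions appearing in the definition \eqref{eq-def-tv} of total variation. First I would fix an arbitrary subset $M \subseteq X$ and take $v = \indicator_M \in [0,1]^{|X|}$, the indicator vector of $M$. This is an admissible test vector because its entries lie in $\{0,1\} \subseteq [0,1]$. Then by definition $\expe_{\rho_1}[v] = \sum_{x \in X} \rho_1(x)\indicator_M(x) = \rho_1(M)$ and likewise $\expe_{\rho_2}[v] = \rho_2(M)$, so the hypothesis immediately yields $\rho_1(M) - \rho_2(M) \leq \epsilon$.

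The only remaining gap is that the total variation distance involves the absolute value $|\rho_1(M) - \rho_2(M)|$, whereas the indicator substitution so far controls only the signed difference in one direction. To close this, I would apply the very same bound to the complement $M^c$, which is equally admissible since $\indicator_{M^c} \in [0,1]^{|X|}$. Because $\rho_i$ are probability distributions, $\rho_i(M^c) = 1 - \rho_i(M)$, and therefore $\rho_1(M^c) - \rho_2(M^c) = -(\rho_1(M) - \rho_2(M)) \leq \epsilon$, i.e.\ $\rho_2(M) - \rho_1(M) \leq \epsilon$. Combining the two one-sided inequalities gives $|\rho_1(M) - \rho_2(M)| \leq \epsilon$ for every $M \subseteq X$, and taking the supremum over all such $M$ yields $\|\rho_1 - \rho_2\|_\TV \leq \epsilon$ directly from \eqref{eq-def-tv}.

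I do not expect a genuine obstacle here: the statement is essentially the observation that the linear-functional characterization of total variation restricts to indicator functionals, and the entire content is a short substitution. The one place to be careful is the sign bookkeeping, namely ensuring that the quantifier ``for any vector $v \in [0,1]^{|X|}$'' is invoked twice, once for $\indicator_M$ and once for $\indicator_{M^c}$, so that both directions of the absolute value are bounded before passing to the supremum.
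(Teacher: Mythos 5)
Your proof is correct and takes essentially the same route as the paper's: the paper first symmetrizes over general test vectors via $v' = 1 - v$ and then instantiates indicator vectors, whereas you instantiate indicators for $M$ and $M^c$ first, which is the identical complementation trick applied in the opposite order. No gap; nothing further is needed.
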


\begin{lemma}
If $\| \rho_1 - \rho_2 \|_\TV \leq \epsilon$, then $\expe_{\rho_1}[ v ]- \expe_{\rho_2} [ v ] \leq 2 \epsilon$  for any vector $v \in [0,1]^{|X|}$.
\label{lemma-tv-reward}
\end{lemma}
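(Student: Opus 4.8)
The plan is to express the difference of expectations as a pairing against the signed difference $\rho_1 - \rho_2$ and then control it by a Hölder-type bound, invoking the standard identity that relates the total variation distance to the $L^1$ norm of the difference. Since $X$ is discrete, I would first write
\begin{gather}
\expe_{\rho_1}[ v ] - \expe_{\rho_2}[ v ] = \sum_{x \in X} v(x)\bigl( \rho_1(x) - \rho_2(x) \bigr),
\end{gather}
and bound the right-hand side in absolute value by $\|v\|_\infty \sum_{x \in X} |\rho_1(x) - \rho_2(x)|$. Because $v \in [0,1]^{|X|}$ we have $\|v\|_\infty \leq 1$, so this expression is at most $\sum_{x \in X} |\rho_1(x) - \rho_2(x)|$, the $L^1$ distance between the two distributions.

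The one remaining step, which I expect to be the only substantive point, is the identity $\sum_{x} |\rho_1(x) - \rho_2(x)| = 2\|\rho_1 - \rho_2\|_\TV$. To establish it I would split $X$ into the set $P = \{x : \rho_1(x) \geq \rho_2(x)\}$ and its complement. Since $\rho_1$ and $\rho_2$ are both probability distributions, the total signed difference over $X$ vanishes, so the positive mass accumulated on $P$ exactly cancels the negative mass on $X \setminus P$; each of these equals half the $L^1$ distance. On the other hand, the supremum defining the total variation distance in \eqref{eq-def-tv} is attained at $M = P$, because including any point where $\rho_1 < \rho_2$ can only decrease $\rho_1(M) - \rho_2(M)$ and excluding any point of $P$ does likewise. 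Hence $\|\rho_1 - \rho_2\|_\TV = \rho_1(P) - \rho_2(P) = \tfrac12 \sum_{x} |\rho_1(x) - \rho_2(x)|$, which is the desired identity.

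Combining the two steps gives $\expe_{\rho_1}[ v ] - \expe_{\rho_2}[ v ] \leq \sum_{x} |\rho_1(x) - \rho_2(x)| = 2\|\rho_1 - \rho_2\|_\TV \leq 2\epsilon$, as claimed. I would note that this route deliberately trades sharpness for simplicity: it discards the sign information in $v \in [0,1]$ in favor of the cruder $\|v\|_\infty$ estimate. A tighter argument, retaining only the terms with $\rho_1(x) > \rho_2(x)$ and using $v(x) \leq 1$ there, would actually yield the factor $\epsilon$ rather than $2\epsilon$; but the factor $2\epsilon$ is all that is needed downstream, and it matches the symmetric duality between total variation and test functions valued in $[-1,1]$, making it the natural form to state here as a partial converse to Lemma \ref{property-tv}.
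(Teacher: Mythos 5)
Your proof is correct and follows essentially the same route as the paper's: write $\expe_{\rho_1}[v]-\expe_{\rho_2}[v]$ as a pairing with $\rho_1-\rho_2$, bound it by the $L^1$ distance using $\|v\|_\infty \leq 1$, and invoke the identity $\|\rho_1-\rho_2\|_\TV = \tfrac12\|\rho_1-\rho_2\|_1$ (which the paper simply cites as standard, while you prove it via the positive set $P$). Your closing observation is also accurate: since $v \in [0,1]^{|X|}$ and the constant $\tfrac12$ pairs to zero against $\rho_1-\rho_2$, the sharp constant is $\epsilon$ rather than $2\epsilon$, but the looser bound suffices for the paper's purposes.
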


\paragraph{Imitation Learning} Learning to imitate means obtaining a policy that mimics expert behavior. This can be formalized in two ways. First, we can seek an imitation policy $\pi_I$ which obtains a expected per-step reward that is $\epsilon$-close to the expert on any  reward signal bounded in the interval $[0,1]$.
Denote the limiting distribution of state-action tuples generated by the imitation learner with $\rho^I(s,a)$. Formally, we want to satisfy 
\begin{gather}
\forall R:  S \times A \rightarrow [0,1], \;\;\;\; \expe_{\rho^E}[R] - \expe_{\rho^I}[R] \leq \epsilon,
\label{eq-returns-close}
\end{gather}
where $\epsilon$ is a small constant. Second, we can seek to ensure that the distributions of state-action pairs generated by the expert and the imitation learner are similar \citep{ho2016generative, finn2016guided}. In particular, if we measure the divergence between distributions with total variation,  we want to ensure  
\begin{gather}
    \|  \rho^E - \rho^I \|_\TV \leq \epsilon.
    \label{eq-total-variation-div}
\end{gather}

Lemmas \ref{property-tv} and \ref{lemma-tv-reward} show that the equations \ref{eq-returns-close} and \ref{eq-total-variation-div} are closely related. In other words, attaining expected per-step reward close to the expert's is the same (up to a multiplicative constant) as closeness in total variation between the state-action distribution of the expert and the imitation learner. We provide further background on adversarial imitation learning and other divergences in Section \ref{sec-realted-work}.

\paragraph{Markov Chains} When coupled with the dynamics of the MDP, a policy induces a Markov chain, In this paper, we focus our attention on chains which are irreducible and aperiodic. Crucially, such chains approach the stationary distribution at an exponential rate. In the following Lemma, we formalize this insight.  The proof is given in Appendix \ref{appdx-proofs}.

\begin{lemma}
Denote with $P_E$ the Markov chain induced by an irreducible and aperiodic expert policy. The mixing time of the chain $\mixingTime$, defined as the smallest $t$ so that $\max_{s'} \| \indicator(s')^\top P_E^t - \rho^E_S \|_\TV \leq \frac14$ is bounded. Moreover, for any probability distribution $p_1$, we have $\sum_{t=0}^\infty \| p_1^\top P_E^t - \rho^E_S \|_\TV \leq 2 \mixingTime$.
\label{corollary-tv-close}
\end{lemma}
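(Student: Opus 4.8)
The plan is to treat the two assertions separately, drawing on the standard theory of finite ergodic Markov chains. For the boundedness of $\mixingTime$, I would invoke the convergence (Perron--Frobenius) theorem: an irreducible and aperiodic chain on a finite state space has a unique stationary distribution $\rho^E_S$, and its $t$-step transition matrix converges entrywise, $\indicator(s')^\top P_E^t \to \rho^E_S$ as $t \to \infty$. Consequently the quantity $d(t) := \max_{s'} \| \indicator(s')^\top P_E^t - \rho^E_S \|_\TV$ tends to $0$, so there exists a finite $t$ at which $d(t) \leq \tfrac14$; the smallest such $t$ is exactly $\mixingTime$, which is therefore finite.

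For the summation bound I would first reduce an arbitrary start $p_1$ to the worst-case deterministic start. Writing $\rho^E_S = \sum_{s'} p_1(s')\, \rho^E_S$ and using convexity of the total variation norm, $\| p_1^\top P_E^t - \rho^E_S \|_\TV \leq \sum_{s'} p_1(s') \| \indicator(s')^\top P_E^t - \rho^E_S \|_\TV \leq d(t)$, so it suffices to bound $\sum_{t=0}^\infty d(t)$.

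The core step is to establish geometric decay in multiples of $\mixingTime$, namely $d(\ell \mixingTime) \leq 2^{-\ell}$. I would do this through the auxiliary pairwise quantity $\bar d(t) := \max_{x,y} \| \indicator(x)^\top P_E^t - \indicator(y)^\top P_E^t \|_\TV$, which satisfies (i) $d(t) \leq \bar d(t) \leq 2\, d(t)$ and (ii) the submultiplicativity $\bar d(s+t) \leq \bar d(s)\, \bar d(t)$. Property (ii) is the delicate one: it is proved by a coupling argument, running two copies of the chain and showing the probability that they have not yet coalesced by time $s+t$ factorizes across the two blocks. Granting these, $d(\ell \mixingTime) \leq \bar d(\mixingTime)^\ell \leq (2\, d(\mixingTime))^\ell \leq (2 \cdot \tfrac14)^\ell = 2^{-\ell}$.

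Finally I would assemble the sum by grouping the terms into consecutive blocks of length $\mixingTime$ and using that $d$ is non-increasing, which follows from the contraction of the transition matrix under total variation together with $\rho^E_S P_E = \rho^E_S$. Each block contributes at most $\mixingTime \cdot d(\ell \mixingTime) \leq \mixingTime \cdot 2^{-\ell}$, so $\sum_{t=0}^\infty d(t) \leq \mixingTime \sum_{\ell=0}^\infty 2^{-\ell} = 2 \mixingTime$, which is the claimed bound. I expect the main obstacle to be the submultiplicativity (ii), since it requires the coupling construction; by contrast the monotonicity of $d$, the convexity reduction, and the block summation are routine.
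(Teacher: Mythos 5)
Your proof is correct and follows essentially the same route as the paper's: the paper likewise reduces to the worst-case start, invokes the key bound $d(\ell\,\mixingTime) \leq 2^{-\ell}$ (citing equation 4.35 of Levin--Peres--Wilmer rather than re-deriving it via the submultiplicativity of $\bar{d}$ and coupling, as you do), and finishes with the same monotonicity-plus-blocking summation $\sum_{t\geq 0} d(t) \leq \mixingTime \sum_{\ell \geq 0} 2^{-\ell} = 2\mixingTime$. The only other difference is cosmetic: for finiteness of $\mixingTime$ the paper cites geometric ergodicity (Theorem 4.9 of the same book, which also yields an explicit bound on $\mixingTime$), whereas you use the plain convergence theorem, which suffices for finiteness alone.
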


\section{Imitation Learning by Reinforcement Learning}
\label{sec-theory}

\begin{algorithm}[t]
\caption{Imitation Learning by Reinforcement Learning (ILR) }\label{alg:cap}
\begin{algorithmic}
\Require expert dataset $D$, \textsc{environment} (without access to extrinsic reward)
\State $\rewardI(s,a) \leftarrow \mathbbm{1}\left\{ (s,a) \in D \right\}$
\State $\pi_I \leftarrow \textsc{RL-Solver(environment, $\rewardI$)}$
\end{algorithmic}
\end{algorithm}

\paragraph{Algorithm} To perform imitation learning, we first obtain an expert dataset $D$ and construct an intrinsic reward 
\begin{gather}
\label{eq-intrinsic-reward}
\rewardI(s,a) = \mathbbm{1}\left\{ (s,a) \in D \right\}.
\end{gather}
We can then use any RL algorithm to solve for this reward. We note that for finite MDPs \eqref{eq-intrinsic-reward} exactly matches the algorithm proposed by \citet{wang2019random} as a heuristic. The reward defined in equation \ref{eq-intrinsic-reward} is intrinsic, meaning we construct it artificially in order to elicit a certain behavior from the agent. This is in contrast to an extrinsic reward, which is what the agent gathers in the real world and what we want to recover. We will learn in Proposition \ref{prop-main} that the two are in fact closely related and solving for the intrinsic reward guarantees recovery of the extrinsic reward.  

\paragraph{Guarantee on Imitation Policy} The main focus of the paper lies on showing theoretical properties of the imitation policy obtained when solving for the reward signal defined in \eqref{eq-intrinsic-reward}. Our formal guarantees hold under three assumptions.  
\begin{assumption}
The expert policy is deterministic. 
\label{expert-policy-deterministic}
\end{assumption}
\begin{assumption}
The expert policy induces an irreducible and aperiodic Markov chain. 
\label{expert-chain-ergodic}
\end{assumption}
\begin{assumption}
The imitation learner policy induces an irreducible and aperiodic Markov chain. 
\label{imitation-chain-ergodic}
\end{assumption}
Assumption \ref{expert-policy-deterministic} is critical for our proof to go through and cannot be relaxed. It is also essential to rationalize the approach of \citet{wang2019random}. In fact, no reduction involving a single call to an RL solver with stationary reward can exist for stochastic experts since for any MDP there is always an optimal policy which is deterministic. Assumptions \ref{expert-chain-ergodic} and \ref{imitation-chain-ergodic} could in principle be relaxed, to allow for periodic chains, but it would complicate the reasoning, which we wanted to avoid. We note that we only interact with the expert policy via a finite dataset of $N$ samples. 

% Also, note that Assumption \ref{expert-chain-ergodic} does not imply that the whole MDP is ergodic. Instead, it only applies to the expert policy. 

Our main contribution is Proposition \ref{prop-main}, in which we quantify the performance of our imitation learner.  We state it below and prove it in Section \ref{sec-proof}.

\begin{proposition}
Consider an imitation learner trained on a dataset of size $N$, which attains the limiting distribution of state-action pairs $\rho^I$. Under Assumptions \ref{expert-policy-deterministic}, \ref{expert-chain-ergodic} and \ref{imitation-chain-ergodic}, given that we have $N \geq \max \left\{ 800 |S|,\;\;
450 \log \left( \frac{2}{\delta} \right) \right\} \mixingTime^3 \eta^{-2}$ expert demonstrations, with probability at least $ 1 - \delta $, the imitation learner attains total variation distance from the expert of at most
\begin{gather}
\| \rho^E - \rho^I \|_\TV \leq \eta.
\label{eq-prop-tv}
\end{gather}
The constant $\mixingTime$ is the mixing time of the expert policy and has been defined in Section \ref{sec-preliminaries}. Moreover, with the same probability, for any bounded extrinsic reward function $R$, the imitation learner achieves expected per-step extrinsic reward of at least
\begin{gather}
\expe_{\rho^I}[\rewardE] \geq \expe_{\rho^E}[\rewardE] - \eta.
\label{eq-recover-return}
\end{gather}
\label{prop-main}
\end{proposition}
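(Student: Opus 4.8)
The plan is to combine two ingredients: a Markov-chain perturbation bound that turns the rate at which the imitation learner disagrees with the expert into a total-variation bound, and a missing-mass estimate that controls this disagreement rate through the intrinsic reward. The starting observation is that the RL solver returns a policy $\pi_I$ optimal for $\rewardI$, so it does at least as well on the intrinsic reward as the feasible expert policy: $\expe_{\rho^I}[\rewardI] \ge \expe_{\rho^E}[\rewardI]$. Because the expert is deterministic (Assumption \ref{expert-policy-deterministic}), the support $D$ consists exactly of the pairs $(s,\pi_E(s))$ for the visited states $s$, so $\expe_{\rho^E}[\rewardI]=\rho^E(D)=1-m_0$, where $m_0$ is the stationary mass the expert places on unvisited states. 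Writing $\epsilon := 1-\expe_{\rho^I}[\rewardI]$, optimality gives $\epsilon \le m_0$, and since membership in $D$ forces the expert action, a short computation shows that $\epsilon$ upper-bounds the stationary probability $\Pr_{\rho^I}[a\neq\pi_E(s)]$ that the learner plays a non-expert action.

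I would next convert this disagreement rate into closeness of the induced chains; the existence and uniqueness of the stationary distributions and the mixing bound I invoke are exactly what Assumptions \ref{expert-chain-ergodic} and \ref{imitation-chain-ergodic} provide. The key structural point is that the transition model is shared: whenever $\pi_I$ plays the expert action it reproduces the expert transition, so the imitation chain $P_I$ differs from the expert chain $P_E$ only through the disagreeing mass, giving $\|\rho^I_S(P_I-P_E)\|_\TV \le \Pr_{\rho^I}[a\neq\pi_E(s)] \le \epsilon$. Using the stationarity identities $\rho^I_S P_I=\rho^I_S$ and $\rho^E_S P_E=\rho^E_S$ one obtains the exact relation $(\rho^I_S-\rho^E_S)(I-P_E)=\rho^I_S(P_I-P_E)$. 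Inverting $I-P_E$ on the mean-zero subspace amounts to applying $\sum_{t\ge 0}P_E^t$, regularized by subtracting the stationary projector, to the signed measure $\rho^I_S(P_I-P_E)$; decomposing that measure into its positive and negative parts and summing the mixing bound $\sum_t\|p^\top P_E^t-\rho^E_S\|_\TV\le 2\mixingTime$ of Lemma \ref{corollary-tv-close} over both parts yields $\|\rho^I_S-\rho^E_S\|_\TV \le 4\mixingTime\,\epsilon$. Lifting from states to state-action pairs is routine: splitting $\|\rho^E-\rho^I\|_\TV$ according to whether $a=\pi_E(s)$, the off-diagonal part equals the disagreement probability ($\le\epsilon$) while the diagonal part is at most $2\|\rho^E_S-\rho^I_S\|_\TV+\epsilon$, so after the factor $\tfrac12$ one gets $\|\rho^E-\rho^I\|_\TV \le \|\rho^E_S-\rho^I_S\|_\TV+\epsilon \le (4\mixingTime+1)\,\epsilon \le (4\mixingTime+1)\,m_0$.

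The remaining and main obstacle is to show that the missing mass $m_0$ is of order $\eta/\mixingTime$ with probability at least $1-\delta$, the difficulty being that the trajectory is a single dependent path rather than i.i.d. draws. I would bound $m_0$ in two stages. First, in expectation: thresholding states into rare ones with $\rho^E_S(s)\le\theta$, whose total mass is at most $|S|\theta$ irrespective of the sample, and frequent ones, each visited within $N$ steps except with probability decaying like $\exp(-cN\theta/\mixingTime)$ (their expected visit count is of order $N\theta$, slowed by mixing); a union bound over the at most $1/\theta$ frequent states and a balanced choice of $\theta$ yield $\expe[m_0]=O(|S|\mixingTime/N)$, which is the source of the $|S|$ term. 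Second, a concentration step: a Markov-chain McDiarmid/Bernstein inequality with effective sample size $N/\mixingTime$ shows $m_0$ exceeds its mean by at most $O\!\left(\sqrt{\mixingTime\log(2/\delta)/N}\right)$, the source of the $\log(2/\delta)$ term. Requiring the sum of these to be at most $\eta/(4\mixingTime+1)$ forces both the fluctuation width and the mean below order $\eta/\mixingTime$, which after clearing the extra $\mixingTime$ and $\eta$ factors gives the stated $N\ge\max\{800|S|,\,450\log(2/\delta)\}\mixingTime^3\eta^{-2}$; combined with the previous paragraph this establishes \eqref{eq-prop-tv}. The reward guarantee \eqref{eq-recover-return} is then immediate, since for any $\rewardE:S\times A\to[0,1]$ the definition of total variation gives $\expe_{\rho^E}[\rewardE]-\expe_{\rho^I}[\rewardE]\le\|\rho^E-\rho^I\|_\TV\le\eta$ (cf.\ Lemma \ref{lemma-tv-reward}).
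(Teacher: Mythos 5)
Your skeleton (RL-solver optimality gives intrinsic suboptimality $\epsilon$ at most the expert's missing mass $m_0$; $m_0$ is small with high probability; small $\epsilon$ forces $\rho^I$ close to $\rho^E$) matches the paper's, but your middle step is a genuinely different argument, and it is correct. The paper converts intrinsic into extrinsic reward at the trajectory level: it decomposes the learner's path into streaks of agreement with the expert, bounds each streak's average reward through mixing (Lemma \ref{erg-seq-reward}), passes to the limit with an almost-sure law-of-large-numbers argument (Lemma \ref{lemma-ir-implies-er}), and only then extracts the TV bound from the all-rewards guarantee via Lemma \ref{property-tv}. You instead work with stationary distributions directly: the identity $(\rho^I_S-\rho^E_S)(I-P_E)=\rho^I_S(P_I-P_E)$, inversion of $I-P_E$ on the mean-zero subspace by $\sum_{t\ge 0}P_E^t$, and Lemma \ref{corollary-tv-close} applied to the positive and negative parts give $\|\rho^I_S-\rho^E_S\|_\TV\le 4\mixingTime\,\epsilon$, and your lift to state-action pairs gives $(1+4\mixingTime)\epsilon$ --- the same constant as the paper's $\kappa(1+4\mixingTime)$ --- with the reward guarantee then deduced from TV rather than the other way round. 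I checked the perturbation identity, the bound $\|\rho^I_S(P_I-P_E)\|_\TV\le\Pr_{\rho^I}[a\neq\pi_E(s)]\le\epsilon$, and the telescoping of the inverse; this is a clean, rigorous alternative to Lemma \ref{lemma-ir-implies-er}. (One small repair: derive the final one-sided inequality $\expe_{\rho^E}[\rewardE]-\expe_{\rho^I}[\rewardE]\le\|\rho^E-\rho^I\|_\TV$ directly from the definition of TV; Lemma \ref{lemma-tv-reward} as stated only yields $2\eta$.)

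The genuine gap is the concentration half of your missing-mass bound. You invoke ``a Markov-chain McDiarmid/Bernstein inequality'' to claim $m_0\le\expe[m_0]+O\bigl(\sqrt{\mixingTime\log(2/\delta)/N}\bigr)$, but McDiarmid does not give this: writing $m_0=\sum_s\rho^E_S(s)\,\indicator\{s\notin\{s_1,\dots,s_N\}\}$, changing a single coordinate $s_t$ can add or delete an entire state from the visited set, so the bounded-differences constant is $\max_s\rho^E_S(s)$, not $O(1/N)$; a Paulin-style McDiarmid bound then yields a deviation of order $\max_s\rho^E_S(s)\sqrt{N\mixingTime\log(2/\delta)}$, which is vacuous as $N$ grows. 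Missing-mass concentration at rate $\sqrt{\log(1/\delta)/N}$ does hold for i.i.d.\ samples, but it requires the specialized McAllester--Ortiz-type argument (negative dependence plus the compensation $\Pr[s\ \text{unseen}]\le e^{-N\rho^E_S(s)}$ for heavy states), and you would additionally need to extend it to Markovian sampling; that is a substantive missing ingredient, not a routine citation. The standard repair is exactly the paper's Lemma \ref{tv-bound-dataset}: since $\hat\rho_S$ vanishes on the unvisited set $M$, one has $m_0=\rho^E_S(M)-\hat\rho_S(M)\le\|\rho^E_S-\hat\rho_S\|_\TV$, and the TV functional does have bounded differences of order $1/N$, so Paulin's concentration result applies; the price is the weaker expectation term $\sqrt{8|S|\mixingTime/N}$ in place of your $O(|S|\mixingTime/N)$, which is precisely the source of the paper's $800|S|$ term. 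With that substitution, and with the closing arithmetic actually carried out (you assert that the constants $800$ and $450$ ``come out'' rather than deriving them; the paper does this in Section \ref{sec-proof-main}), your proof goes through.
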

Proposition \ref{prop-main} shows that the policy learned by imitation learner satisfies two important desiderata: we can guarantee both the closeness of the generated state-action distribution to the expert distribution as well as recovery of expert reward. Moreover, the total variation bound in \eqref{eq-prop-tv} links the obtained policy to adversarial imitation learning algorithms. We describe this link in more detail in Section \ref{sec-realted-work}.

\section{Proof}
\label{sec-proof}

\paragraph{Structure of Results} 
We begin by showing a result about mixing in Markov chains in Section \ref{sec-mixinglemma}. 
We then give our proof, which has two main parts. In the first part, in Section \ref{sec-intrinsic-reward}, we show that it is possible to achieve a high expected per-step intrinsic reward defined in \eqref{eq-intrinsic-reward}. In the second part, in Section \ref{sec-extrinsic-reward}, we show that, achieving a large expected per-step intrinsic reward guarantees a large expected per-step extrinsic reward for any extrinsic reward bounded in $[0,1]$. In Section \ref{sec-proof-main}, we combine these results and prove Proposition \ref{prop-main}. 

%In Section \ref{sec-realted-work}, we relate our technique to existing Imitation Learning algorithms. In Section \ref{sec-experiments}, we evaluate our reduction on continuous control tasks.

\subsection{Mixing in Markov Chains}
\label{sec-mixinglemma}
We now show a lemma that quantifies how fast the histogram approaches the stationary distribution. 
The proof of the lemma relies on standard results about the Total Variation distance between probability distributions as well as generic results about mixing in MDPs, developed by \citet{paulin2015concentration}.

\begin{lemma}
The total variation distance between the expert distribution and the expert histogram $\hat{\rho}$ based on $N$ samples can be bounded as
\begin{gather}
    \| \rho^E - \hat{\rho} \|_\TV \leq \epsilon + \sqrt{\frac{8|S|\mixingTime}{N}} 
\end{gather}
with probability at least $ 1 - 2 \exp \left\{ - \frac{\epsilon^2 N }{ 4.5 \tau_{\text{mix}} }  \right\}$ (where the probability space is defined by the process generating the expert histogram).
\label{tv-bound-dataset}
\end{lemma}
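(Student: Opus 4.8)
The plan is to reduce the claim to a statement about state marginals using determinism, then decompose $\| \rho^E_S - \hat{\rho}_S \|_\TV$ into its expectation and a fluctuation around it, bounding the former by a direct bias-plus-variance estimate and the latter by a concentration inequality adapted to Markov chains. First I would invoke Assumption \ref{expert-policy-deterministic}: since the expert is deterministic, every visit to a state $s$ contributes the same action $\pi_E(s)$ to both $\rho^E$ and $\hat{\rho}$, so the two state-action distributions differ coordinatewise exactly as their state marginals do. Hence $\| \rho^E - \hat{\rho} \|_\TV = \| \rho^E_S - \hat{\rho}_S \|_\TV$, and it suffices to bound the right-hand side, where $\hat{\rho}_S$ is the empirical distribution of the state chain $s_1, \dots, s_N$, itself a Markov chain with transition matrix $P_E$.

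Next I would control $\expe[ \| \rho^E_S - \hat{\rho}_S \|_\TV ]$. Writing the distance as $\tfrac12 \sum_{s} | \rho^E_S(s) - \hat{\rho}_S(s) |$ and inserting $\expe[\hat{\rho}_S(s)]$ inside each absolute value, the triangle inequality splits the expected $L^1$ sum into a bias term $\sum_s |\rho^E_S(s) - \expe[\hat{\rho}_S(s)]|$ and a mean-deviation term $\sum_s \expe|\hat{\rho}_S(s) - \expe[\hat{\rho}_S(s)]|$. The bias equals $\sum_s \big| \tfrac1N \sum_{t} (p_1^\top P_E^{t-1} - \rho^E_S)(s) \big| \le \tfrac2N \sum_{t=0}^{\infty} \| p_1^\top P_E^{t} - \rho^E_S \|_\TV \le \tfrac{4\mixingTime}{N}$ by Lemma \ref{corollary-tv-close}, a lower-order contribution. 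For the mean-deviation term I apply Jensen, $\expe|\hat{\rho}_S(s) - \expe[\hat{\rho}_S(s)]| \le \sqrt{\Var[\hat{\rho}_S(s)]}$, together with the Markov-chain variance bound of \citet{paulin2015concentration}, which inflates the i.i.d.\ variance $\rho^E_S(s)(1-\rho^E_S(s))/N$ by a factor of order $\mixingTime$, so that $\Var[\hat{\rho}_S(s)] \le C\,\mixingTime\, \rho^E_S(s)/N$. Summing over $s$ and applying Cauchy-Schwarz with $\sum_s \sqrt{\rho^E_S(s)} \le \sqrt{|S|}$ collects the constants into $\expe[ \| \rho^E_S - \hat{\rho}_S \|_\TV ] \le \sqrt{8|S|\mixingTime/N}$.

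Finally I would establish concentration around this mean. As a function $g(s_1,\dots,s_N) = \| \rho^E_S - \hat{\rho}_S \|_\TV$ of the trajectory, replacing one $s_t$ moves $\hat{\rho}_S$ by $1/N$ in at most two coordinates and hence changes $g$ by at most $1/N$, so $g$ has bounded differences with constants $c_t = 1/N$. Because the $s_t$ are dependent, ordinary McDiarmid is unavailable; instead I would invoke the McDiarmid-type inequality for Markov chains of \citet{paulin2015concentration}, whose exponent replaces $\sum_t c_t^2 = 1/N$ by a mixing-time-weighted version, yielding $\Pr[ g \ge \expe[g] + \epsilon ] \le 2\exp\{ -\epsilon^2 N / (4.5\,\mixingTime) \}$; combined with the expectation bound this gives the lemma. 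I expect the concentration step to be the main obstacle: the dependence among the $s_t$ forces the Markov-chain bounded-differences inequality rather than the classical one, and reproducing the stated constant $4.5$ requires tracking Paulin's mixing-time normalization precisely, while the companion variance estimate is the point at which both $|S|$ and $\mixingTime$ enter the additive $\sqrt{8|S|\mixingTime/N}$ term.
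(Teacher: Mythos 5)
Your proposal matches the paper's proof in both structure and ingredients: reduce to state marginals using determinism of the expert, bound the expected TV distance by $\sqrt{8|S|\mixingTime/N}$, and then apply Paulin's McDiarmid-type inequality for Markov chains (his Proposition 2.18) to obtain the $2 \exp\left\{ - \epsilon^2 N / (4.5\,\mixingTime) \right\}$ tail. The only difference is that where you re-derive the expectation bound from a bias-plus-variance decomposition with Jensen and Cauchy--Schwarz, the paper simply cites Paulin's Proposition 3.16 together with the pseudo-spectral-gap bound $\gamma_{\text{ps}} \geq 1/(2\mixingTime)$ --- the same argument, packaged as a citation (and your explicit handling of the non-stationary initial distribution is, if anything, slightly more careful).
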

\begin{proof}
We first prove the statement for histograms over the states. Recall the notation $\hat{\rho}_S = \frac1N  \sum_{t=1}^N \indicator\left\{ s_t = s \right\}$. Recall that we denote the stationary distribution (over states) of the Markov chain induced by the expert with $\rho^E_S$.

First, instantiating Proposition 3.16 of \citet{paulin2015concentration}, we obtain
\[ \textstyle \expe_{\hat{\rho}_S}[\|  \rho^E_S - \hat{\rho}_S \|_\TV] \leq \sum_s \min \textstyle \left(\sqrt{\frac{4 \rho^E_S(s)}{N \gamma_\text{ps}}}, \rho^E_S(s) \right),\] where $\gamma_\text{ps}$ is pseudo spectral gap of the chain. Re-write the right-hand side as \[ \textstyle \sum_s \min \textstyle \left(\sqrt{\frac{4 \rho^E_S(s)}{N \gamma_\text{ps}}}, \rho^E_S(s) \right) \leq \sum_s \textstyle \sqrt{\frac{4 \rho^E_S(s)}{N \gamma_\text{ps}}} \leq \sqrt{\frac{4|S|}{N \gamma_\text{ps}}},\] where the last inequality follows from the Cauchy-Schwartz inequality. Using equation 3.9 of \citet{paulin2015concentration}, we have $\gamma_\text{ps} \geq \frac{1}{2\mixingTime}$. Taken together, this gives
\begin{gather}
    \expe_{\hat{\rho}_S}[\| \rho^E_S - \hat{\rho}_S \|_\TV] \leq \textstyle \sqrt{\frac{8|S|\mixingTime}{N}}.
    \label{tv-aprop1}
\end{gather}

Instantiating Proposition 2.18 of \citet{paulin2015concentration}, we obtain
\begin{gather}
    P(| \| \rho^E_S - \hat{\rho}_S \|_\TV - \expe_{\hat{\rho}_S}[\| \rho^E_S - \hat{\rho}_S \|_\TV] | \geq \epsilon) \leq 2 \exp \left\{ - \frac{\epsilon^2 N }{ 4.5 \tau_{\text{mix}} }  \right\}.
    \label{tv-aprop2}
\end{gather}
Combining equations \ref{tv-aprop1} and \ref{tv-aprop2}, we obtain
\begin{gather*}
    P\left(| \| \rho^E_S - \hat{\rho}_S \|_\TV \geq \epsilon + \textstyle \sqrt{\frac{8|S|\mixingTime}{N}} \right) \leq 2 \exp \left\{ - \frac{\epsilon^2 N }{ 4.5 \tau_{\text{mix}} }  \right\}.
\end{gather*}

Since the expert policy is deterministic, the total variation distance between the distributions of state-action tuples and states is the same, i.e. $ \| \rho_S^E - \hat{\rho}_S \|_\TV =     \| \rho^E - \hat{\rho} \|_\TV $. 
\end{proof}

\subsection{ High Expected Intrinsic Per-Step Reward is Achievable }
We now want to show that it is possible for the imitation learner to attain a large intrinsic reward. Informally, the proof asks what intrinsic reward the expert would have achieved. We then conclude that a learner specifically optimizing the intrinsic reward will obtain at least as much.  
\label{sec-intrinsic-reward}
\begin{lemma}
Generating an expert histogram with $N$ points, with probability at least $ 1 - 2 \exp \left\{ - \frac{\epsilon^2 N }{ 4.5 \tau_{\text{mix}} }  \right\}$, we obtain a dataset such that a policy $\pi_I$ maximizing the intrinsic reward $\rewardI(s,a) = \mathbbm{1}( (s,a) \in D )$ satisfies
$\expe_{\rho^I}[\rewardI] \geq 1 - \epsilon - \sqrt{\frac{8|S|\mixingTime}{N}}$, where we used the shorthand notation $\rho^I = \rho^{\pi_I}$ to denote the state-action distribution of $\pi_I$.
\label{intrinsic-return-achieveable}
\end{lemma}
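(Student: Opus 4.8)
The plan is to lower-bound the per-step intrinsic reward that the \emph{expert} policy would itself collect, and then appeal to the optimality of $\pi_I$. Since $\pi_I$ is defined as a policy maximizing the expected per-step intrinsic reward, it suffices to exhibit \emph{some} admissible policy meeting the stated bound; the expert $\pi_E$ is the natural candidate, because then $\expe_{\rho^I}[\rewardI] \geq \expe_{\rho^E}[\rewardI]$.

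The central observation is that the intrinsic reward the expert collects is exactly the expert's stationary mass on the visited set $D$:
\begin{gather*}
\expe_{\rho^E}[\rewardI] = \sum_{(s,a)} \rho^E(s,a)\,\indicator\{(s,a) \in D\} = \rho^E(D).
\end{gather*}
Because the histogram $\hat{\rho}$ is by construction supported entirely on $D$, we have $\hat{\rho}(D) = 1$. Taking the particular set $M = D$ in the definition of total variation then gives $\|\rho^E - \hat{\rho}\|_\TV \geq |\rho^E(D) - \hat{\rho}(D)| = 1 - \rho^E(D)$, so that $\expe_{\rho^E}[\rewardI] = \rho^E(D) \geq 1 - \|\rho^E - \hat{\rho}\|_\TV$. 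It now remains only to control this total variation distance, which is precisely Lemma \ref{tv-bound-dataset}: with probability at least $1 - 2\exp\{-\epsilon^2 N / (4.5\,\mixingTime)\}$ we have $\|\rho^E - \hat{\rho}\|_\TV \leq \epsilon + \sqrt{\frac{8|S|\mixingTime}{N}}$. Substituting yields $\expe_{\rho^E}[\rewardI] \geq 1 - \epsilon - \sqrt{\frac{8|S|\mixingTime}{N}}$ on this event, and the optimality of $\pi_I$ lifts the same lower bound to $\expe_{\rho^I}[\rewardI]$. Note the probability here coincides exactly with that of Lemma \ref{tv-bound-dataset}, so no union bound is needed.

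I expect the only genuinely delicate point to be the optimality step $\expe_{\rho^I}[\rewardI] \geq \expe_{\rho^E}[\rewardI]$. This requires that $\pi_E$ is an admissible comparator whose per-step average intrinsic reward equals $\expe_{\rho^E}[\rewardI]$ — which holds because $\pi_E$ induces an ergodic chain by Assumption \ref{expert-chain-ergodic}, so its Cesàro-limit state-action distribution is exactly $\rho^E$ and the average-reward objective is well-defined on it — together with the fact that $\pi_I$ attains the maximal per-step reward. Everything else is a direct chain from the definition of total variation and the already-established Lemma \ref{tv-bound-dataset}, so no fresh concentration argument is required.
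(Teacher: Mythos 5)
Your proposal is correct and is essentially the paper's own argument: the paper likewise invokes Lemma \ref{tv-bound-dataset}, applies the total variation definition with the complementary set $M = \{(s,a) : \hat{\rho}(s,a) = 0\}$ (equivalent to your choice $M = D$ by symmetry of the two events under complementation), concludes that the expert itself earns intrinsic per-step reward at least $1 - \epsilon - \sqrt{8|S|\mixingTime/N}$, and lifts this to $\pi_I$ by optimality. Your treatment of the optimality step is slightly more explicit than the paper's one-line remark, but the substance is identical.
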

\begin{proof}
We invoke Lemma \ref{tv-bound-dataset} obtaining $\| \rho^E - \hat{\rho} \|_\TV \leq \epsilon + \sqrt{\frac{8|S|\mixingTime}{N}}$ with probability as in the statement of the lemma. First, we prove
\begin{gather} \textstyle
\sum_{s,a} \rho^E(s,a) \indicator\left\{ \hat{\rho}(s,a) = 0 \right\} \leq \epsilon + \sqrt{\frac{8|S|\mixingTime}{N}},
\label{eq-tech-tv}
\end{gather}
by setting $\rho_1 = \rho^E$, $\rho_2 = \hat{\rho}$ and $M = \{ (s,a): \hat{\rho}(s,a) = 0 \}$ in  \eqref{eq-def-tv}. 

Combining
\begin{gather} \textstyle
    1 = \sum_{s,a} \rho^E(s,a) = \sum_{s,a} \rho^E(s,a) \indicator\left\{ \hat{\rho}(s,a) = 0 \right\} + \sum_{s,a} \rho^E(s,a) \indicator\left\{ \hat{\rho}(s,a) > 0 \right\}
\end{gather}
and \eqref{eq-tech-tv}, we obtain
\begin{gather} \textstyle
\sum_{s,a} \rho^E(s,a) \indicator \left\{ \hat{\rho}(s,a) > 0 \right\} \geq 1 - \epsilon - \sqrt{\frac{8|S|\mixingTime}{N}},
\end{gather}
which means that the expert policy achieves expected per-step intrinsic reward of at least $1 - \epsilon - \sqrt{\frac{8|S|\mixingTime}{N}}$. This lower bounds the expected per-step reward obtained by the optimal policy. 
\end{proof}

\subsection{Maximizing Intrinsic Reward Leads to High Per-Step Extrinsic Reward}
\label{sec-extrinsic-reward}
We now aim to prove that the intrinsic and extrinsic rewards are connected, i.e. maximizing the intrinsic reward leads to a large expected per-step extrinsic reward. The proofs in this section are based on the insight that, by construction of intrinsic reward in \eqref{eq-intrinsic-reward}, attaining intrinsic reward of one in a given state implies agreement with the expert. In the following Lemma, we quantify the outcome of achieving such agreement for $\ell$ consecutive steps.

\begin{lemma}
Assume an agent traverses a sequence of $\ell$ state-action pairs, in a way consistent with the expert policy. Denote the expert's expected extrinsic per-step reward with $\expe_{\rho^E}[\rewardE]$. Denote by $\tilde{\rho}_t^{p_1}(s,a) = (p_1^\top P_E^t)(s) \pi_E(a |s)$ the expected state-action occupancy at time $t$, starting in state distribution $p_1$. The per-step extrinsic reward $\tilde{V}^{\ell}_{p_1}$ of the agent in expectation over realizations of the sequence satisfies 
\begin{gather}
\tilde{V}^{\ell}_{p_1} = \textstyle \sum_{s,a} \left( \left(\sum_{t=0}^{\ell-1} \frac1\ell \tilde{\rho}_t^{p_1}(s,a)\right) \rewardE(s,a) \right) \geq \expe_{\rho^E}[\rewardE] - \frac{4\mixingTime}{\ell}.
\label{eq-v-hat}
\end{gather}
\label{erg-seq-reward}
\end{lemma}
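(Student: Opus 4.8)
The plan is to read $\tilde{V}^{\ell}_{p_1}$ as a time-average of per-step expected rewards and then control each term by the total variation distance to the stationary distribution. Writing $\tilde{V}^{\ell}_{p_1} = \frac1\ell \sum_{t=0}^{\ell-1} \expe_{\tilde{\rho}_t^{p_1}}[\rewardE]$, the deficit relative to the stationary value becomes
\begin{gather*}
\expe_{\rho^E}[\rewardE] - \tilde{V}^{\ell}_{p_1} = \frac1\ell \sum_{t=0}^{\ell-1}\left( \expe_{\rho^E}[\rewardE] - \expe_{\tilde{\rho}_t^{p_1}}[\rewardE]\right),
\end{gather*}
so it suffices to bound each summand by a mixing quantity and then sum.

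First I would bound each summand using Lemma \ref{lemma-tv-reward}: since $\rewardE$ is bounded in $[0,1]$, we have $\expe_{\rho^E}[\rewardE] - \expe_{\tilde{\rho}_t^{p_1}}[\rewardE] \leq 2\|\rho^E - \tilde{\rho}_t^{p_1}\|_\TV$. The crucial step is then to reduce this state-action total variation distance to a state-only distance. Because the expert policy is deterministic (Assumption \ref{expert-policy-deterministic}), both $\rho^E(s,a)$ and $\tilde{\rho}_t^{p_1}(s,a)$ factor through the same conditional $\pi_E(a\vert s)$, so the action marginal contributes nothing to the discrepancy and $\|\rho^E - \tilde{\rho}_t^{p_1}\|_\TV = \|\rho^E_S - p_1^\top P_E^t\|_\TV$, exactly as in the final step of the proof of Lemma \ref{tv-bound-dataset}.

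Next I would extend the finite sum to an infinite one (all the total variation terms are nonnegative) and invoke the mixing bound of Lemma \ref{corollary-tv-close}, which gives $\sum_{t=0}^\infty \|p_1^\top P_E^t - \rho^E_S\|_\TV \leq 2\mixingTime$. Combining these steps yields
\begin{gather*}
\expe_{\rho^E}[\rewardE] - \tilde{V}^{\ell}_{p_1} \leq \frac{2}{\ell}\sum_{t=0}^{\ell-1}\|\rho^E_S - p_1^\top P_E^t\|_\TV \leq \frac{2}{\ell}\sum_{t=0}^{\infty}\|\rho^E_S - p_1^\top P_E^t\|_\TV \leq \frac{4\mixingTime}{\ell},
\end{gather*}
which is the claimed inequality after rearranging.

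I do not expect a hard technical obstacle here; the content is entirely in assembling the right ingredients in the right order. The one step that must be handled with care is the determinism reduction converting the state-action total variation distance into a state-only distance, since it is what licenses the application of the scalar mixing bound of Lemma \ref{corollary-tv-close} and is the only place Assumption \ref{expert-policy-deterministic} enters the argument. A minor bookkeeping point is ensuring the $t=0$ term, where $\tilde{\rho}_0^{p_1}$ has state marginal $p_1$, is retained in the summation so that the infinite-sum bound of Lemma \ref{corollary-tv-close} applies verbatim.
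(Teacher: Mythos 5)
Your proposal is correct and follows essentially the same route as the paper's proof: bound each per-step deficit via Lemma~\ref{lemma-tv-reward}, use determinism of the expert policy to equate state-action and state-only total variation distances, extend the finite sum of nonnegative terms to an infinite one, and apply the mixing bound of Lemma~\ref{corollary-tv-close}. The only cosmetic difference is that you phrase the argument as bounding the deficit $\expe_{\rho^E}[\rewardE] - \tilde{V}^{\ell}_{p_1}$ from above rather than bounding $\tilde{V}^{\ell}_{p_1}$ from below, which is the same computation.
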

\begin{proof}
Invoking Lemma \ref{lemma-tv-reward}, we have that $\expe_{\rho^E}[\rewardE] - \expe_{\tilde{\rho}_t^{p_1}}[\rewardE] \leq 2 \| \tilde{\rho}_t^{p_1} - \rho^E \|_\TV$ for any timestep $t$. In other words, the expected per-step reward obtained in step $t$ of the sequence is at least $\expe_{\rho^E}[\rewardE] - 2 \| \tilde{\rho}_t^{p_1} - \rho^E \|_\TV $. The per-step reward in the sequence is at least
\begin{align*} \textstyle
    \tilde{V}^{\ell}_{p_1} \geq \frac1\ell \sum_{i=0}^{\ell-1} (\expe_{\rho^E}[\rewardE] - 2 \| \tilde{\rho}_i^{p_1} - \rho^E \|_\TV ) &= \textstyle \expe_{\rho^E}[\rewardE] - \frac2\ell  \sum_{i=0}^{\ell-1} \| \tilde{\rho}_i^{p_1} - \rho^E \|_\TV \\  &\geq  \textstyle \expe_{\rho^E}[\rewardE] - \frac2\ell  \sum_{i=0}^{\infty} \| \tilde{\rho}_i^{p_1} - \rho^E \|_\TV
\end{align*}

Invoking Lemma \ref{corollary-tv-close}, we have that $\sum_{t=0}^\infty \| p_1^\top P_E^t - \rho^E_S \|_\TV \leq 2 \mixingTime$. Since the expert policy is deterministic, distances between distributions of states and state-action pairs are the same. We thus get $\tilde{V}^{\ell}_{p_1} \geq \expe_{\rho^E}[\rewardE] - \frac{4\mixingTime}{\ell} $.

\end{proof}

In Lemma \ref{erg-seq-reward}, we have shown that, on average, agreeing with the expert for a number of steps guarantees a certain level of extrinsic reward. We will now use this result to guarantee extrinsic reward obtained over a long trajectory.

\begin{lemma}
For any extrinsic reward signal $\rewardE$ bounded in $[0,1]$, an imitation learner which attains expected per-step intrinsic reward of $\expe_{\rho^I}[\rewardI] = 1 - \kappa$ also attains extrinsic per-step reward of at least 
\[
\expe_{\rho^I}[\rewardE] \geq (1 - \kappa) (\expe_{\rho^E}[\rewardE]) - 4 \mixingTime \kappa,
\]
with probability one, where $\expe_{\rho^E}[\rewardE]$ is the expected per-step extrinsic reward of the expert.
\label{lemma-ir-implies-er}
\end{lemma}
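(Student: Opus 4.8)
The plan is to analyze a single long trajectory $s_1,a_1,s_2,a_2,\dots$ of the imitation learner and to pass to the stationary distribution only at the end, via the ergodic theorem, which applies by Assumption \ref{imitation-chain-ergodic}; this is what furnishes the ``with probability one'' in the statement. Along such a trajectory the long-run fraction of steps with $(s_t,a_t)\in D$ equals $\expe_{\rho^I}[\rewardI]=1-\kappa$ almost surely, so a $\kappa$ fraction of steps are \emph{off-support}. The key structural observation is that, because the expert is deterministic (Assumption \ref{expert-policy-deterministic}), every pair in $D$ has the form $(s,\pi_E(s))$; hence whenever the learner is on-support it is playing the expert action, and the transition out of that state is governed by the expert kernel $P_E$. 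Consequently each maximal block of consecutive on-support steps (a \emph{run}) is a sequence traversed consistently with the expert policy, exactly the object treated by Lemma \ref{erg-seq-reward}.

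I would then decompose the extrinsic reward accumulated over the first $T$ steps. Discarding the off-support steps, which contribute at least $0$ since $\rewardE\ge 0$, the total is at least the sum of the per-run rewards. For a run of length $\ell_j$, Lemma \ref{erg-seq-reward} gives per-step reward at least $\expe_{\rho^E}[\rewardE]-4\mixingTime/\ell_j$, so the reward collected during that run is at least
\[
\ell_j\Big(\expe_{\rho^E}[\rewardE]-\tfrac{4\mixingTime}{\ell_j}\Big)=\ell_j\,\expe_{\rho^E}[\rewardE]-4\mixingTime .
\]
The cancellation of $\ell_j$ is the crux: the mixing penalty is a flat $4\mixingTime$ per run, independent of the run's length. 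Summing over all runs, whose lengths total the $(1-\kappa)T$ on-support steps, the reward over the first $T$ steps is at least $(1-\kappa)T\,\expe_{\rho^E}[\rewardE]-4\mixingTime\,(\#\text{runs})$.

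It then remains to control the number of runs. Each run is immediately preceded by an off-support step, or by the start of the trajectory, and distinct runs are preceded by distinct off-support steps, so $\#\text{runs}\le(\text{number of off-support steps})+1\le\kappa T+1$. Dividing by $T$ and letting $T\to\infty$, the $+1$ washes out and the left-hand side converges almost surely to $\expe_{\rho^I}[\rewardE]$, giving
\[
\expe_{\rho^I}[\rewardE]\ \ge\ (1-\kappa)\,\expe_{\rho^E}[\rewardE]-4\mixingTime\,\kappa ,
\]
which is the claim.

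The step I expect to be most delicate is the per-run application of Lemma \ref{erg-seq-reward}. That lemma assumes the within-sequence occupancy is the free expert evolution $\tilde{\rho}_t^{p_1}=p_1^\top P_E^t$ from the run's starting distribution $p_1$, whereas conditioning on a run having a given length biases the intermediate states toward regions where the learner tends to remain on-support, so the realized occupancy need not be exactly this free evolution. I would argue that this bias cannot defeat the final bound because it is coupled to the run count: runs that terminate early (small $\ell_j$) are precisely the ones that create extra off-support steps and thereby inflate $\kappa$, which is already charged the $4\mixingTime$ penalty. Making this precise, either through a stopping-time or renewal argument keyed to the off-support steps, or by checking that the relevant conditional start distribution still satisfies the Lemma \ref{erg-seq-reward} estimate, is where the most care is needed; the run-counting and the $\ell_j$-cancellation are the comparatively routine ingredients.
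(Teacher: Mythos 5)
Your decomposition is essentially the paper's own: maximal expert-agreement runs, a flat $4\mixingTime$ penalty per run via Lemma \ref{erg-seq-reward}, a run count bounded by the number of off-support steps plus one, and a passage to the limit $T \to \infty$ by the ergodic theorem. The genuine gap is the step you yourself flag: Lemma \ref{erg-seq-reward} bounds the \emph{expected} per-step reward of an expert-consistent sequence ($\tilde{V}^{\ell}_{p_1}$ is defined as an average over realizations of the sequence), but you apply it to the \emph{realized} reward of each individual run. Pathwise this is false: a single run of length $\ell_j$ can collect zero extrinsic reward regardless of its expectation, so the inequality asserting that the reward collected during that run is at least $\ell_j\,\expe_{\rho^E}[\rewardE]-4\mixingTime$ does not hold run by run. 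The bound can only be recovered after averaging over many runs, and that averaging step carries real probabilistic content — it cannot be treated as a routine ingredient alongside the $\ell_j$-cancellation.

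The paper closes exactly this gap with machinery your sketch omits: it groups the runs by length $\ell$, lets $M_\ell$ count them, writes the average realized per-run reward within each length class as the expected value $\tilde{V}^{\ell}_{p(s_1^\ell)}$ plus an error $\Delta_\ell$, and then proves that the total error $\sum_\ell \frac{\ell M_\ell}{T}\Delta_\ell$ vanishes almost surely, using ergodicity to show that every run length that occurs at all occurs infinitely often, so the law of large numbers applies within each class. This is precisely the renewal-type repair you gesture at in your final paragraph, but it is the core of the proof rather than an afterthought. Your second worry — that conditioning on a run's length biases the within-run occupancy away from the free evolution $p_1^\top P_E^t$ — is a legitimate subtlety, and one the paper's proof itself handles only implicitly (it identifies the LLN limit with $\tilde{V}^{\ell}_{p(s_1^\ell)}$ without further comment); so you have correctly located the delicate point, but as written your proposal leaves the lemma unproven, whereas the paper supplies the averaging argument that makes the per-run bound meaningful.
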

We provide the proof in Appendix \ref{appdx-proofs}.

\subsection{Bringing the Pieces Together}
\label{sec-proof-main}
We now show how Lemma \ref{intrinsic-return-achieveable} and Lemma \ref{lemma-ir-implies-er} can be combined to obtain Proposition \ref{prop-main} (stated in Section \ref{sec-theory}).

\begin{proof}[Proof of Proposition \ref{prop-main}]
We use the assumption that $N \geq \max \left\{ 800 |S|,\;
450 \log \left( \frac{2}{\delta} \right) \right\} \mixingTime^3 \eta^{-2}$. Since $\mixingTime$ is either zero or $\mixingTime \geq 1$, this implies 
\begin{gather}
N \geq \max \left\{ 32 |S| \mixingTime \left( 1+ 4\mixingTime \right)^2 \eta^{-2},\;\;
\log \left( \frac{2}{\delta} \right) 18 \mixingTime \left( 1+ 4\mixingTime \right)^2 \eta^{-2} \right\}.
\label{eq-assumption-samples}
\end{gather}

We will instantiate Lemma \ref{intrinsic-return-achieveable} with 
\begin{gather}
\epsilon = \frac\eta{2+8\mixingTime}.
\label{eq-epsilon-eta}
\end{gather}

Equations \ref{eq-assumption-samples} and \ref{eq-epsilon-eta} imply that $N \geq \log \left( \frac{2}{\delta} \right) 18 \mixingTime \left( 1+ 4\mixingTime \right)^2 \eta^{-2} = \log \left( \frac{2}{\delta} \right) 4.5 \mixingTime \epsilon^{-2} $. We can rewrite this as $ \log \left( \frac{2}{\delta} \right) \leq \frac{N \epsilon^2}{4.5 \mixingTime} $, which implies $- \frac{\epsilon^2 N}{4.5  \mixingTime} \leq \log \frac{\delta}{2}$. This implies that 
\begin{gather}
\delta \geq 2 \exp \left\{ - \frac{\epsilon^2 N }{ 4.5 \tau_{\text{mix}} }  \right\}.
\label{eq-delta-prob}
\end{gather}

Moreover, using \eqref{eq-assumption-samples} again, we have that $N \geq 32 |S| \mixingTime \left( 1+ 4\mixingTime \right)^2 \eta^{-2}$. This is equivalent to $\frac{8|S|\mixingTime}{N} \leq \frac14 \frac{\eta^2}{\left( 1+ 4\mixingTime \right)^2}$, or $\sqrt{\frac{8|S|\mixingTime}{N} }\leq \frac12 \frac{\eta}{1 + 4\mixingTime}  $. Using \eqref{eq-epsilon-eta}, we obtain $ \epsilon + \sqrt{\frac{8|S|\mixingTime}{N}} \leq   \frac{\eta}{1+4\mixingTime}$,  equivalent to $ \left (\epsilon + \sqrt{\frac{8|S|\mixingTime}{N}} \right) (1+4\mixingTime) \leq \eta $. Rewriting this gives 
\begin{gather}
\kappa \left(1+ 4\mixingTime \right) \leq \eta,
\label{eq-eta-kappa}
\end{gather}
where we use the notation $\kappa = \epsilon + \sqrt{\frac{8|S|\mixingTime}{N}} $.

We first show \eqref{eq-recover-return}, which states that we are able to recover expected per-step expert reward. Invoking Lemma \ref{intrinsic-return-achieveable} and using \eqref{eq-delta-prob}, we have that it is possible for the imitation learner to achieve per-step expected intrinsic reward of at least $1 - \kappa$ with probability $ 1 - \delta$. Invoking Lemma \ref{lemma-ir-implies-er}, this implies achieving extrinsic reward of at least
\[
\expe_{\rho^I}[\rewardE] \geq \expe_{\rho^E}[\rewardE] (1 - \kappa) - 4\mixingTime \kappa.
\]
This implies
\begin{align*}
\expe_{\rho^E}[\rewardE] - \expe_{\rho^I}[\rewardE] &\leq \expe_{\rho^E}[\rewardE] - 
\expe_{\rho^E}[\rewardE] (1 - \kappa) + 4\mixingTime \kappa \\ &= \kappa \expe_{\rho^E}[\rewardE] + 4\mixingTime \kappa \leq \kappa + 4\mixingTime \kappa \leq \eta,
\end{align*}
where the last inequality follows from \eqref{eq-eta-kappa}. 
Since this statement holds for any extrinsic reward signal $\rewardE$, we obtain the total variation bound by invoking Lemma \ref{property-tv}, getting $
\| \rho^E - \rho^I \|_\TV \leq \eta$.
\end{proof}

\section{Related Work}
\label{sec-realted-work}
\paragraph{Behavioral Cloning} The simplest way of performing imitation learning is to learn a policy by fitting a Maximum Likelihood model on the expert dataset. While such `behavioral cloning' is easy to implement, it does not take into account the sequential nature of the Markov Decision Process, leading to catastrophic compounding of errors \citep{ross2010efficient}. In practice, the performance of policies obtained with behavioral cloning is highly dependent on the amount of available data. In contrast, our reduction avoids the problem faced by behavioral cloning and provably works with limited expert data. 

\paragraph{Apprenticeship Learning}
Apprenticeship Learning \citep{abbeel2004apprenticeship} assumes the existence of a pre-learned linear representation for rewards and then proceeds in multiple iterations. In each iteration, the algorithm first computes an intrinsic reward signal and then calls an RL solver to obtain a policy. Our algorithm also uses an RL solver, but unlike Apprenticeship Learning, we only call it once. 

\paragraph{Adversarial IL} Modern adversarial IL algorithms \citep{ho2016generative, finn2016guided, li2017infogail, sun2019provably} remove the requirement to provide a representation by learning it online. They work by minimizing a divergence between the expert state-action distribution and the one generated by the RL agent. For example, the GAIL algorithm \citep{ho2016generative} minimizes the Jensen-Shannon divergence. It can be related to the total variation objective of \eqref{eq-total-variation-div} in two ways. First, the TV and JS divergences obey $2 \sqrt{D_\text{JS}(\rho^E,\rho^I)} \geq \|  \rho^E - \rho^I \|_\TV$, so that $D_\text{JS}(\rho^E,\rho^I) \leq \epsilon'$ implies $\|  \rho^E - \rho^I \|_\TV \leq 2 \sqrt{\epsilon'}$. This means that GAIL indirectly minimizes the total variation distance. Second, we have 
$\| \rho^E - \rho^I \|_\TV \geq D_\JS(\rho^E , \rho^I)$. Since our algorithm minimizes Total Variation for sufficiently large sizes of the expert dataset, this means that it also minimizes the GAIL objective. Both of these properties imply that our reduction converges to a fixpoint with similar properties as GAIL's. This has huge practical significance because our algorithm does not need to train a discriminator.

Another adversarial algorithm, InfoGAIL \citep{li2017infogail} minimizes the 1-Wasserstein divergence, which obeys $ \expe_{\rho^E}[R_L] - \expe_{\rho^I}[R_L] \leq D_{\text{W}^1}(\rho^E,\rho^I) $ for any reward function $R_L$ Lipschitz in the $L_1$ norm. This is similar to the property defined by \eqref{eq-returns-close}, implied by our Proposition \ref{prop-main}, except we guarantee closeness of expected per-step reward for any bounded reward function as opposed to Lipschitz-continuous functions. 

%While the idea of directly minimizing  divergences is very versatile, and works for any class of expert, it comes at a cost of very expensive training. Since the optimization in adversarial Imitation Learning combines the complexity of learning a discriminator, akin to a GAN, and a reinforcement learning policy, it requires careful tuning. On the other hand, for deterministic experts, our reduction has the benefits of adversarial IL while avoiding the complex training process.  

\paragraph{Random Expert Distillation} \citet{wang2019random} propose an algorithm which, for finite MDPs, is the same as ours. However, they do not justify the properties of the imitation policy formally. Our work can be thought of as complementary. While \citet{wang2019random} conducted an empirical evaluation using sophisticated support estimators, we fill in the missing theory. Moreover, while our empirical evaluation is smaller in scope, we attempt to be more complete, demonstrating convergence in cases where only partial trajectories are given. 

\paragraph{The SQIL heuristic}
SQIL \citep{reddy2020sqil} is close to our work in that it proposes a similar algorithm. At any given time, SQIL performs off-policy reinforcement learning on a dataset sampled from the mixture distribution $\frac12 (\rho^E(s,a) + \rho_\text{RL}(s,a)) $, where $\rho^E(s,a)$ is the distribution of data under the expert. Since the rewards are one for the data sampled from $\rho^E$ and zero for state-action pairs sampled from $\rho_\text{RL}$, the expected reward obtained at a state-action pair $(s,a)$ is given by $\frac{\rho^E(s,a)}{\rho^E(s,a) + \rho_\text{RL}(s,a)}$, which is non-stationary and varies between zero and one. The benefit of SQIL is that it does not require a support estimate. However, while SQIL has demonstrated good empirical performance, it does not come with a theoretical guarantee of any kind, making it hard to deploy in settings where we need a theoretical certificate of policy quality.

\paragraph{Expert Feedback Loop}
IL algorithms such as SMILe \citep{ross2010efficient} and DAGGER \citep{ross2011reduction} assume the ability to query the expert for more data. While this makes it easier to reproduce expert behavior, the ability to execute queries is not always available in realistic scenarios. Our reduction is one-off, and does not need to execute expert queries to obtain more data.

\section{Experiments}
\label{sec-experiments}
We investigate the performance of our reduction on continuous control tasks. Because \citet{wang2019random} have already conducted an extensive evaluation of various ways of estimating the support for the expert distribution, we do not attempt this. Instead, we focus on two aspects: the amount of expert data needed to achieve good imitation and the relationship between intrinsic and extrinsic return. 

\paragraph{Continuous Relaxation} In order to adapt the discrete reduction to a continuous state-action space, we define the reward as $R'_I(s,a) = 1 - \min_{(s',a') \in D} d_{L_2}((s,a),(s',a'))^2$. To see how this is related to the reward in equation \ref{eq-intrinsic-reward}, consider a scaled version of the binary reward used in the theoretical analysis which ranges from $1-L$ to $1$ instead of from $0$ to $1$, where $L$ is the $L_2$-diameter of the state-action space. Then the relaxed reward is an upper bound on the scaled theoretical reward. Since optimizing this upper bound is different from optimizing the reward in equation \ref{eq-intrinsic-reward} directly, we can no longer theoretically claim the recovery of expert reward (as in Lemma \ref{lemma-ir-implies-er}). However, this property still holds in practice, as verified below. In settings where such reduction is not feasible, the random expert distillation approach can be used, which \citet{wang2019random} have demonstrated can work quite well and \citet{burda2018exploration} have shown can work with pixel observations.
 
\paragraph{Experimental Setup} We use the Hopper, Ant, Walker and HalfCheetah continuous control environments from the \texttt{PyBullet} gym suite \citep{benelot2018}. The expert policy is obtained by training a SAC agent for 2 million steps. We compare 5 methods. ILR denotes our imitation learning reduction. BC denotes a behavioral cloning policy, obtained after training on the expert dataset for 500000 steps. GAIL \citep{ho2016generative} denotes the Generative Adversarial Imitation Learning algorithm. GMMIL \citep{kim2018imitation} denotes a variant of adversarial IL minimizing the MMD divergence. SQIL \citep{reddy2020sqil} denotes the SQIL heuristic. In order to ensure a fair comparison among algorithms, we re-implemented all of them using the same basis RL algorithm, SAC \citep{haarnoja2018soft}. All imitation learning agents were run for 500000 environment interactions. We repeated our experiments using 5 different random seeds. The remaining details about the implementation and hyperparameters are provided in Appendix \ref{sec-details-experiments}. 

\begin{figure}[t]
  \centering
  (\legendExpert) expert \hspace{0.25cm}
  (\legendILR) ILR \hspace{0.25cm}
  (\legendBC) BC \hspace{0.25cm}
  (\legendGAIL) GAIL \hspace{0.25cm}
  (\legendGMMIL) GMMIL \hspace{0.25cm}
  (\legendSQIL) SQIL \\
    \begin{subfigure}[b]{0.24\textwidth}
         \centering
         \includegraphics[width=\textwidth]{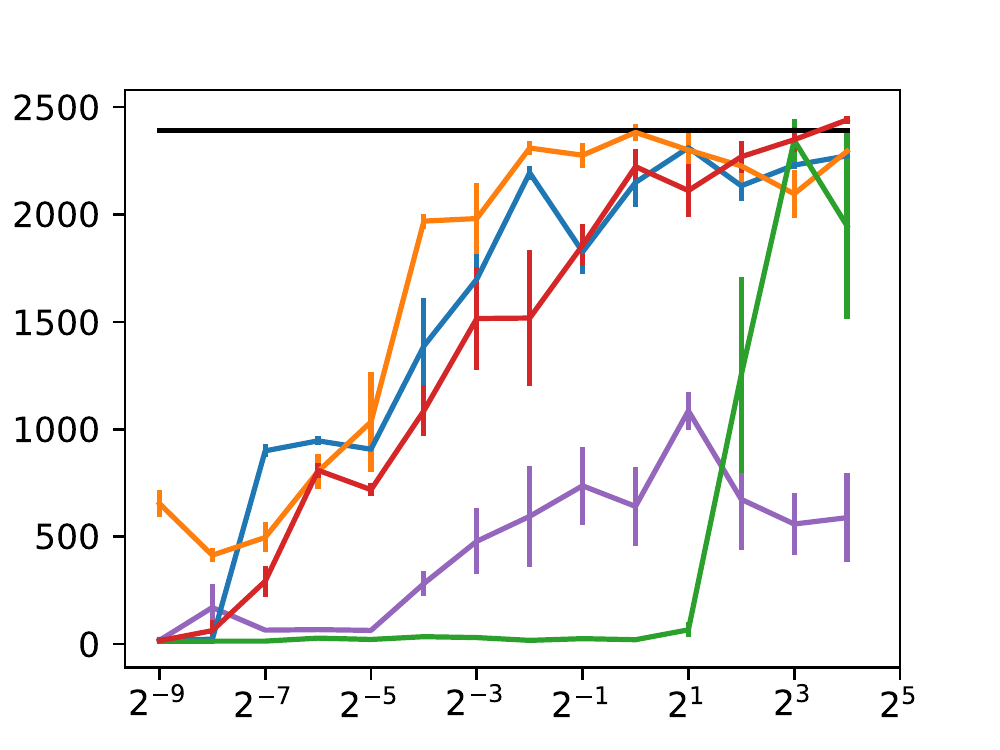}
         \caption{Hopper}
     \end{subfigure}
     \hfill
     \begin{subfigure}[b]{0.24\textwidth}
         \centering
         \includegraphics[width=\textwidth]{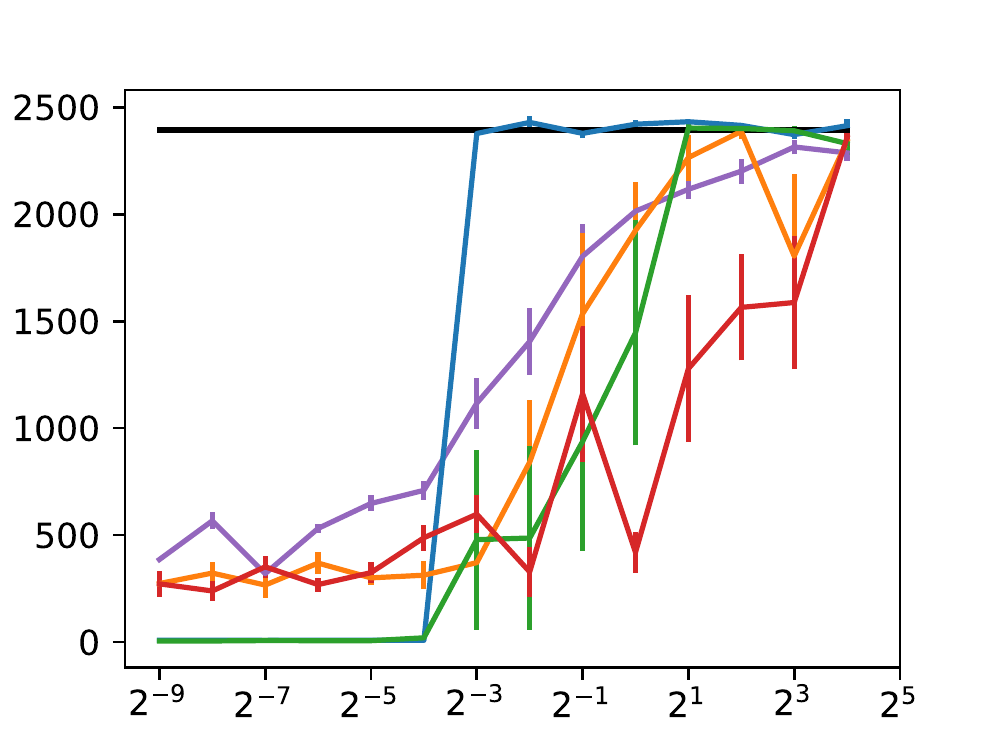}
         \caption{Ant}
     \end{subfigure}
     \hfill
     \begin{subfigure}[b]{0.24\textwidth}
         \centering
         \includegraphics[width=\textwidth]{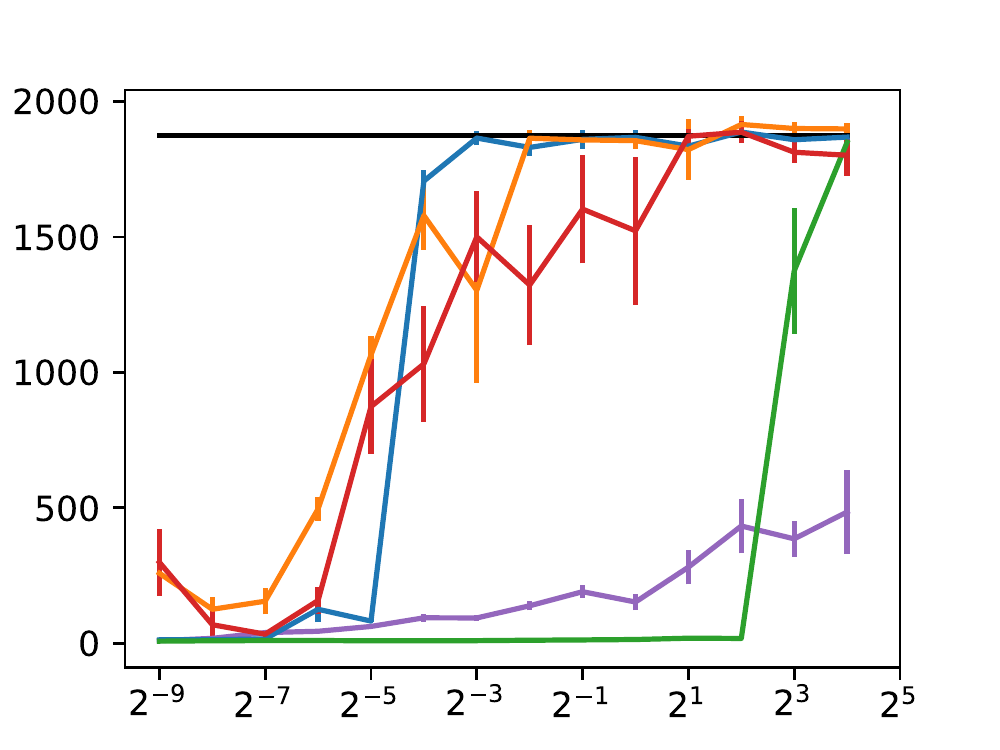}
         \caption{Walker}
     \end{subfigure}
     \hfill
     \begin{subfigure}[b]{0.24\textwidth}
         \centering
         \includegraphics[width=\textwidth]{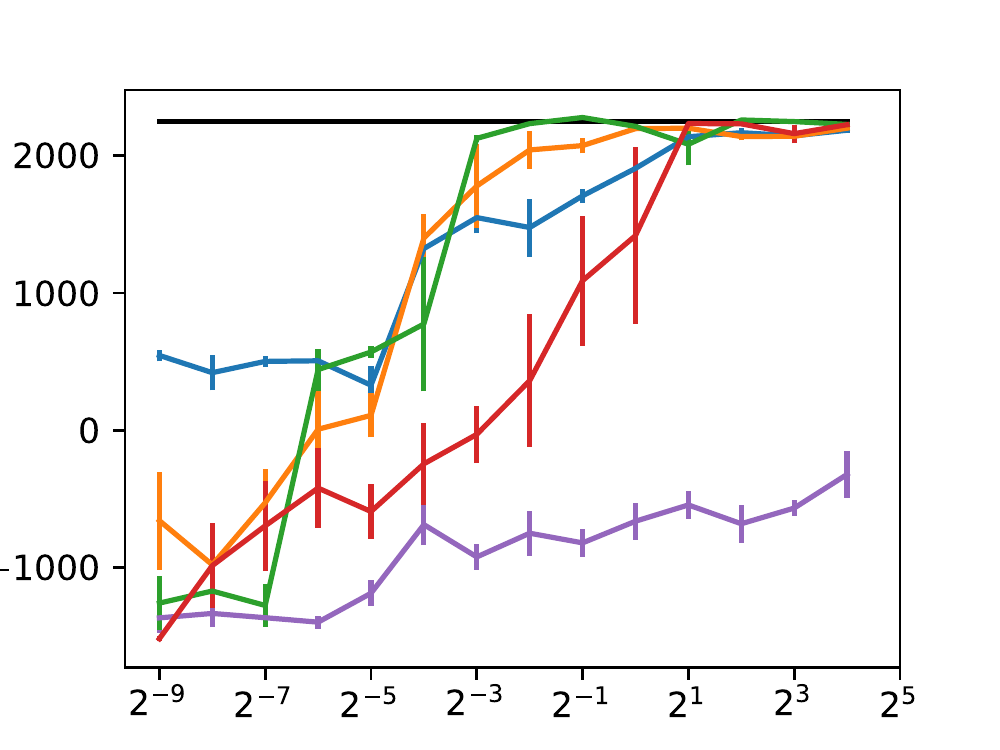}
         \caption{HalfCheetah}
     \end{subfigure}
  \caption{Extrinsic return of imitation learners as a function of the number of episodes of demonstration data. For comparison, we also show expert performance. Confidence bars denote one standard error.}
  \vspace{-0.5cm}
  \label{fig-main}
\end{figure}

\paragraph {Performance as Function of Quantity of Expert Data} The plot in Figure \ref{fig-main} shows the performance of various imitation learners as a function of how much expert data is available. The amount of data is measured in episodes, where fractional numbers mean that state-action tuples are taken from the beginning of the episode. Confidence bars represent one standard error. All of the methods except BC work well when given a large quantity of expert data (16 episodes, on the right hand side of the plot). However, for smaller sizes of the expert dataset, the performance of various methods deteriorates at different rates. In particular, GMMIL needs more expert data than other methods on Hopper and Walker, while on Ant ILR achieves top performance with less data than  other methods. Across all the environments, ILR is competitive with the best of the other methods. We claim that our method is preferable since it is the simplest one to implement, is computationally cheap and does not introduce new hyperparameters beyond those of the RL solver. %We provide a runtime benchmark of the different methods in Appendix \ref{sec-runtime}. 

\begin{figure}[t]
    \centering
  (\legendExpert) expert \hspace{0.25cm}
  (\legendbgILR) ILR \hspace{0.25cm}
  (\legendbgGAIL) GAIL \hspace{0.25cm}
  (\legendbgGMMIL) GMMIL \hspace{0.25cm}
  (\legendbgSQIL) SQIL \\
    \begin{subfigure}[b]{0.24\textwidth}
         \centering
         \includegraphics[width=\textwidth]{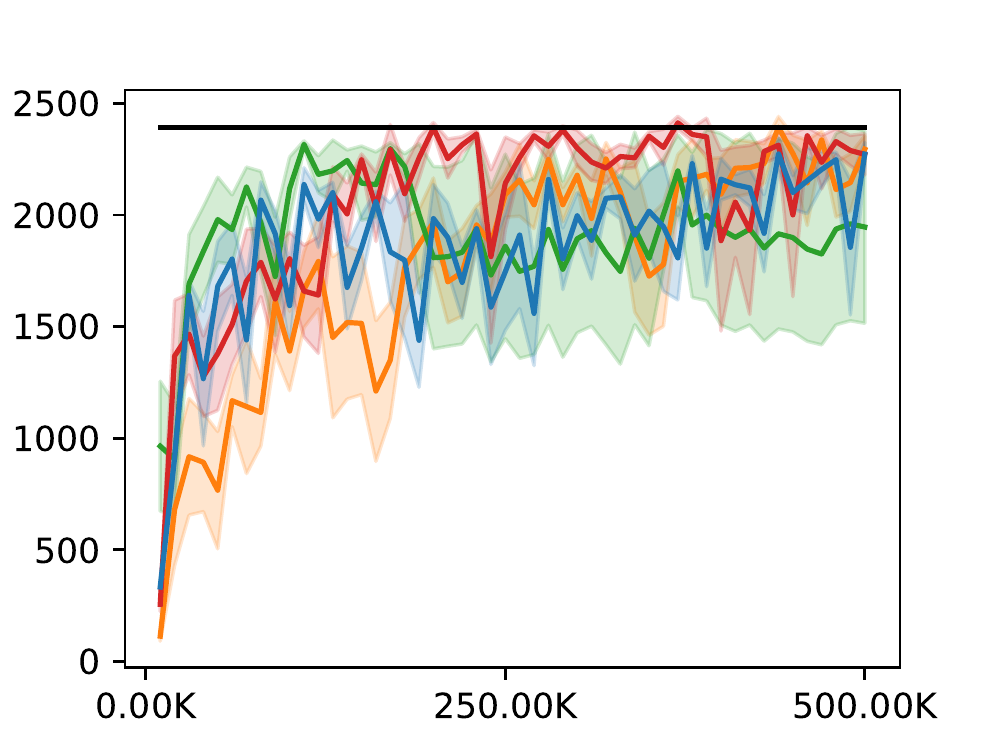}
         \caption{Hopper}
     \end{subfigure}
     \hfill
     \begin{subfigure}[b]{0.24\textwidth}
         \centering
         \includegraphics[width=\textwidth]{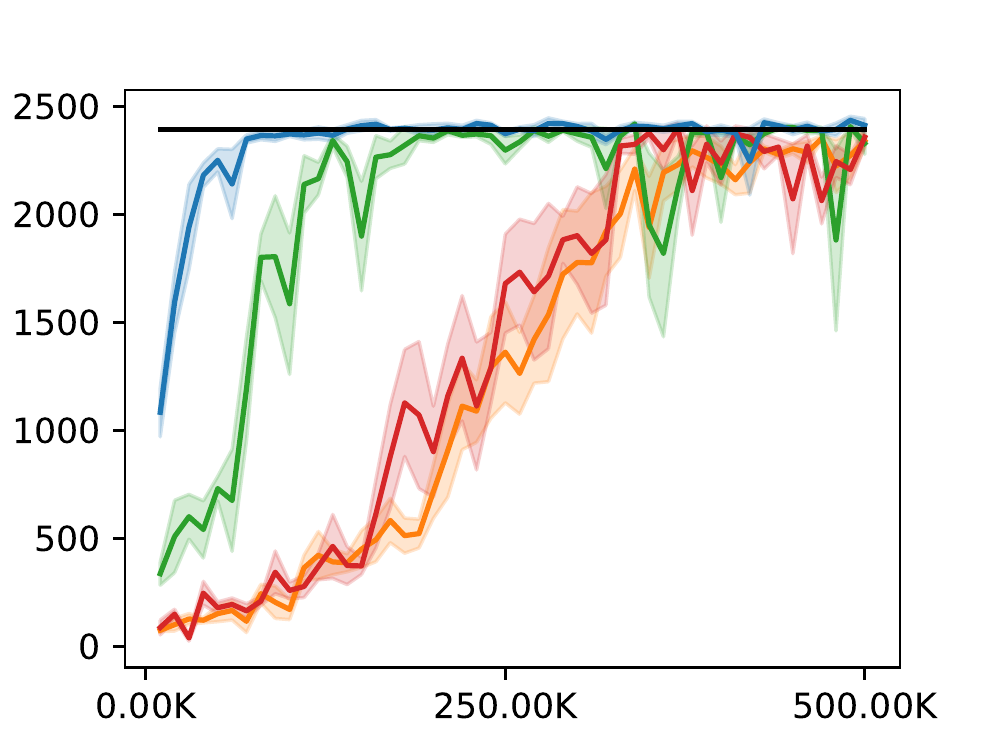}
         \caption{Ant}
     \end{subfigure}
     \hfill
     \begin{subfigure}[b]{0.24\textwidth}
         \centering
         \includegraphics[width=\textwidth]{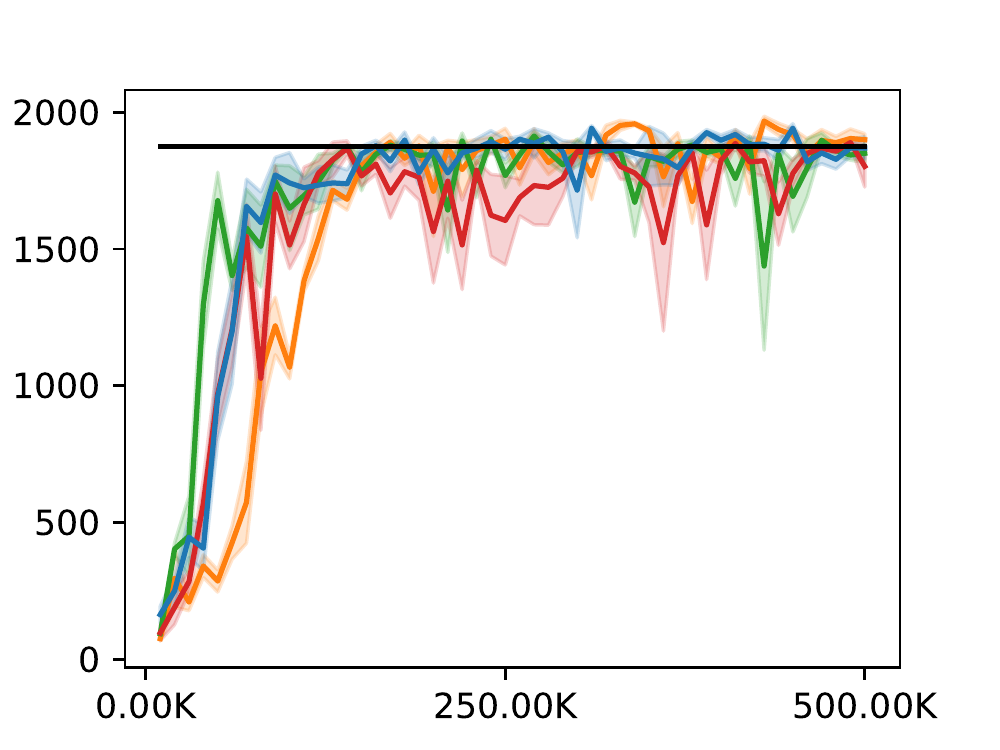}
         \caption{Walker}
     \end{subfigure}
     \hfill
     \begin{subfigure}[b]{0.24\textwidth}
         \centering
         \includegraphics[width=\textwidth]{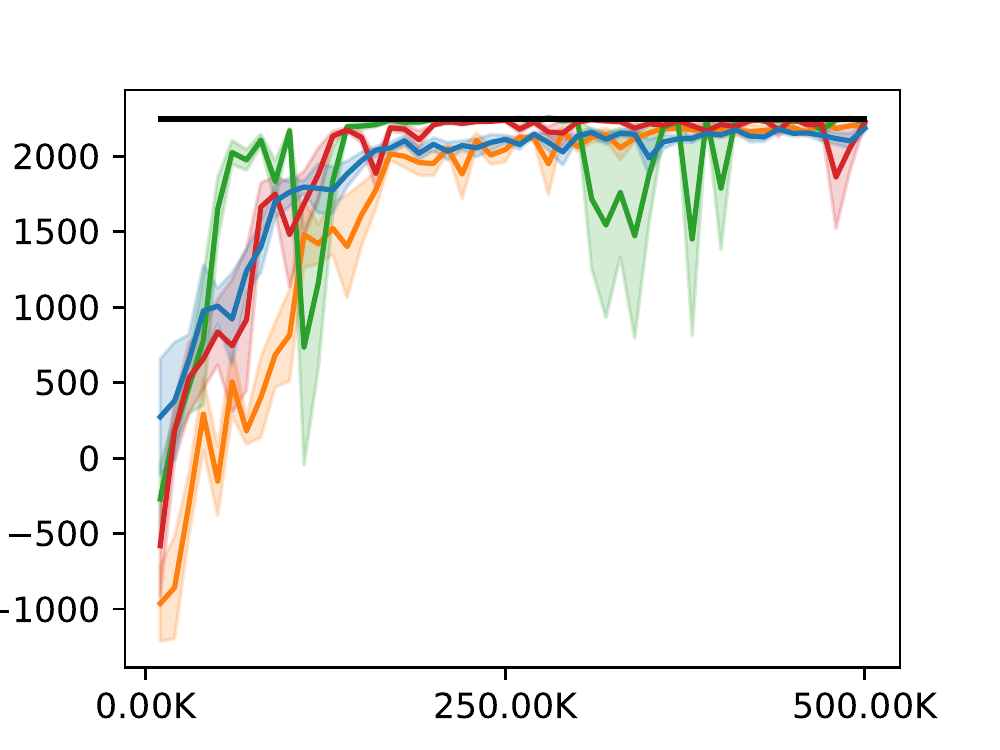}
         \caption{HalfCheetah}
     \end{subfigure}
 
  \caption{Extrinsic return of imitation learner as a function of the number of environment interactions. Imitation learners were trained on 16 expert trajectories. Shaded area denotes one standard error.}
  \vspace{-0.5cm}
  \label{fig-env-interactions}
\end{figure}

\paragraph{Performance as Function of Quantity of Environment Interactions}
In Figure \ref{fig-env-interactions}, we examine the progress of the various agents as a function of environment interactions for a dataset containing 16 expert episodes. ILR shows improved sample efficiency on Ant, while behaving similarly to other algorithms on Hopper, Walker and HalfCheetah. Again, we are led to conclude that ILR is preferable to to the other algorithms because of its simplicity. We provide analogous plots for different sizes of the expert dataset in Appendix \ref{perf-as-fun-of-env-interactions}.

\begin{wrapfigure}{r}{0.3\textwidth}
  \vspace*{-0.6cm}
  \begin{center}
    \includegraphics[width=0.28\textwidth]{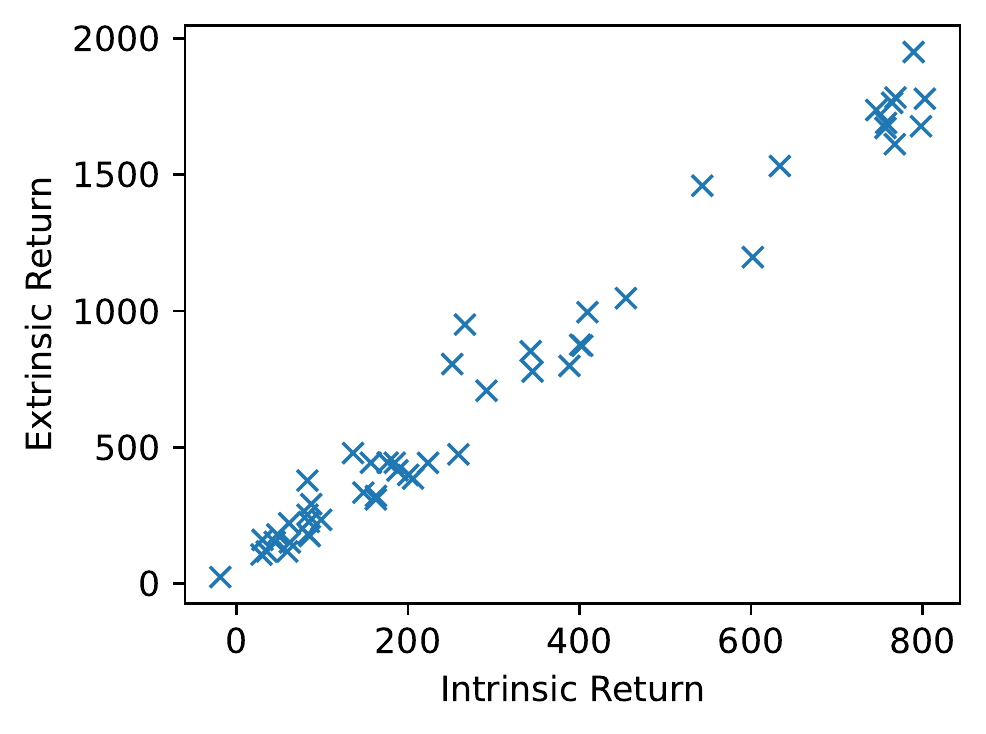}
  \end{center}
  \caption{Intrinsic vs Extrinsic returns}
  \label{fig-intrinsic-extrinsic}
  \vspace*{-0.5cm}
\end{wrapfigure}
\paragraph{Intrinsic and Extrinsic Returns are Related in Practice} 
Our reasoning in Section \ref{sec-extrinsic-reward} shows that achieving a large intrinsic return serves as a performance certificate, guaranteeing a certain level of extrinsic return. We conducted an experiment to determine if this property still holds for our continuous-state relaxation. Figure \ref{fig-intrinsic-extrinsic} shows the relationship between the intrinsic return (on the horizontal axis) and the extrinsic return (on the vertical axis), during the first 50K steps of training using the Hopper environment. Each point on the plot corresponds to evaluating the policy 100 times and averaging the results. The plot confirms that there is a clear, close-to-linear, relationship between the intrinsic and extrinsic return.   

\section{Conclusions}
We have shown that, for deterministic experts, imitation learning can be performed with a single invocation of an RL solver. We have derived a bound guaranteeing the performance of the obtained policy and relating the reduction to adversarial imitation learning algorithms. Finally, we have evaluated the proposed reduction on a family of continuous control tasks, showing that it achieves competitive performance while being comparatively simple to implement. 

\section{Acknowledgments}
The author thanks Lucas Maystre and Daniel Russo for proofreading the draft and making helpful suggestions. Any mistakes are my own.

\section{Ethics Statement}
\label{sec-societal}
Since our most important contribution is a bound stating that the imitation learner closely mimics the expert, the ethical implications of actions taken by our algorithm crucially depend on the provided demonstrations. The most direct avenue for misuse is for malicious actors to intentionally demonstrate unethical behavior. Second, assuming the demonstrations are provided in good faith, there is a risk of excessive reliance on the provided bound. Specifically, it is still possible that our algorithm fails to recover expert behavior in situations where assumptions needed by our proof are not met, for example if the expert policy is stochastic. In certain settings, the bounds can also turn out to be very loose, for example when the expert policy takes long to mix. This means that one should not deploy our reduction in safety-critical environments without validating the assumptions first. However, the proposed reduction also has positive effects. Specifically, we provide the benefits of adversarial imitation learning, but without having to train the discriminator. This means that training is cheaper, making imitation learning more affordable and research on imitation learning more democratic. 

\section{Reproducibility Statement}
Our main contribution is Proposition \ref{prop-main}, for which we provided a complete proof. The external results we rely on are generic properties of the Total Variation distance and of Markov chain mixing\footnote{In particular, we use a corollary of McDiarmid's inequality for Markov chains by \citet{paulin2015concentration}.}, for all of which we have provided references. Our experimental setup is described in detail in Appendix \ref{sec-details-experiments}. Moreover, we make the source code available.

{
\bibliography{il_reduction}
\bibliographystyle{iclr2022_conference}
}

\appendix

\section{Details of Experimental Setup}
\label{sec-details-experiments}
To ensure a fair comparison, we implemented all imitation learners (ILR, GAIL, GMMIL and SQIL) using the same base reinforcement learning solver (SAC). The implementation of SAC we used as a basis came from the \texttt{d4rl-pybullet} repository\footnote{https://github.com/takuseno/d4rl-pybullet}. The hyperparameters of SAC are given in the table below.

\begin{center}
\begin{tabular}{ l l }
 \bf{hyperparameter} \hspace{3cm} & \bf{value} \\
 actor optimizer &  Adam \\
 actor learning rate & 3e-4 \\
 critic optimizer &  Adam \\
 critic learning rate & 3e-4 \\
 batch size & 100 \\
 update rate for target network (tau) & 0.005 \\
 $\gamma$ & 0.99 \\
 
\end{tabular}
\end{center}

The policy network and the critic network have two fully connected middle layers with 256 neurons each followed by ReLU non-linearities. The entropy term was not used when performing the actor update after we found that including it had no effect on performance. Instead, a fixed Gaussian noise with standard deviation of 0.01 was used for exploration.  

For our behavioral cloning baseline, we performed supervised learning for 500K steps, using the Adam optimizer with learning rate 3e-4. We used the same policy network architecture as the SAC policy network.

ILR does not introduce any additional hyperparameters.

GAIL has the following additional hyperparameters.
\begin{center}
\begin{tabular}{ l l }
 \bf{hyperparameter} \hspace{3cm} & \bf{value} \\
 discriminator optimizer &  Adam \\
 discriminator learning rate & 3e-4 \\
 discriminator batch size & 100 \\
 discriminator update frequency & 32 \\
 discriminator training iterations & 500 \\
\end{tabular}
\end{center}
The last two hyperparameters mean that the discriminator network in GAIL is updated every 32 simulation steps, using 500 batches of data.

A direct implementation of GMMIL requires looping over the whole replay buffer each time a reward is computed, which makes the algorithm very slow for longer runs. To make it tractable, we we only computed the kernel for 8 points closest to the given state-action pair in the replay buffer and for 8 points in the expert dataset. The data structure used to find the nearest 8 points was updated every 32 steps. Kernel hyper-parameters were automatically estimated using the median heuristic \citep{kim2018imitation} each time the algorithm was run. 

SQIL uses a batch size of 200 (100 for state-action pairs from the expert and 100 for state-action pairs coming from the replay buffer). It does not have additional hyperparameters.

\section{Plots Measuring Sample Efficiency}
\label{perf-as-fun-of-env-interactions}
In this section, we include plots showing the efficiency of all algorithms measured in terms of environment interactions, for different quantities of available expert data. The plots can be thought as an extension of Figure \ref{fig-main} in the main paper to what is happening during learning, not just at the end. 

\begin{center}
  (\legendExpert) expert \hspace{0.25cm}
  (\legendbgILR) ILR \hspace{0.25cm}
  (\legendbgGAIL) GAIL \hspace{0.25cm}
  (\legendbgGMMIL) GMMIL \hspace{0.25cm}
  (\legendbgSQIL) SQIL
\begin{longtable}{c c c c c}
 & Hopper & Ant & Walker & HalfCheetah \\
\rotatebox{90}{\centering \hspace{0.2cm} 16 episodes} &  \includegraphics[width=3cm]{ei-hopper-bullet-medium-v0-ep16.pdf}
&  \includegraphics[width=3cm]{ei-ant-bullet-medium-v0-ep16.pdf}
&  \includegraphics[width=3cm]{ei-walker2d-bullet-medium-v0-ep16.pdf}
&  \includegraphics[width=3cm]{ei-halfcheetah-bullet-medium-v0-ep16.pdf} \\
\rotatebox{90}{\centering \hspace{0.2cm} 8 episodes} &  \includegraphics[width=3cm]{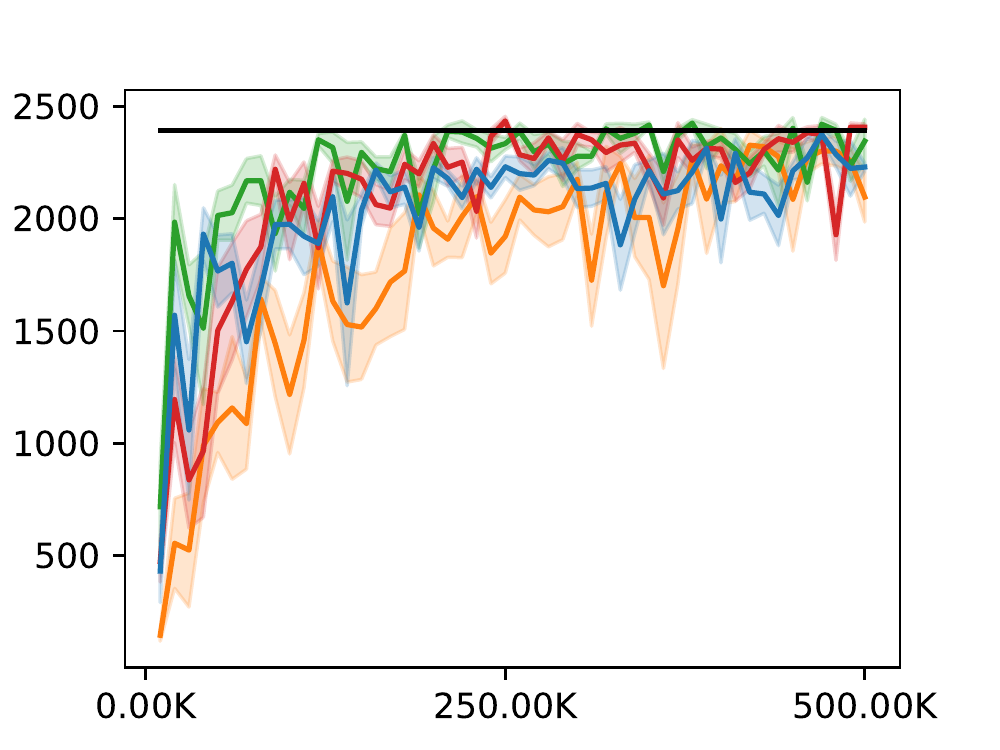}
&  \includegraphics[width=3cm]{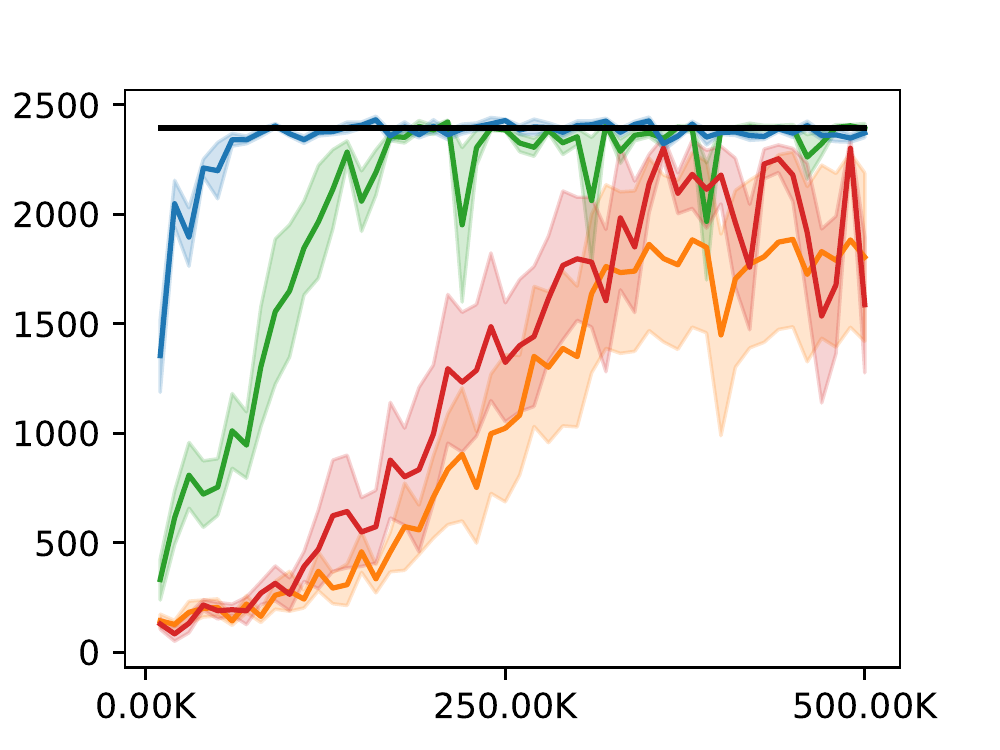}
&  \includegraphics[width=3cm]{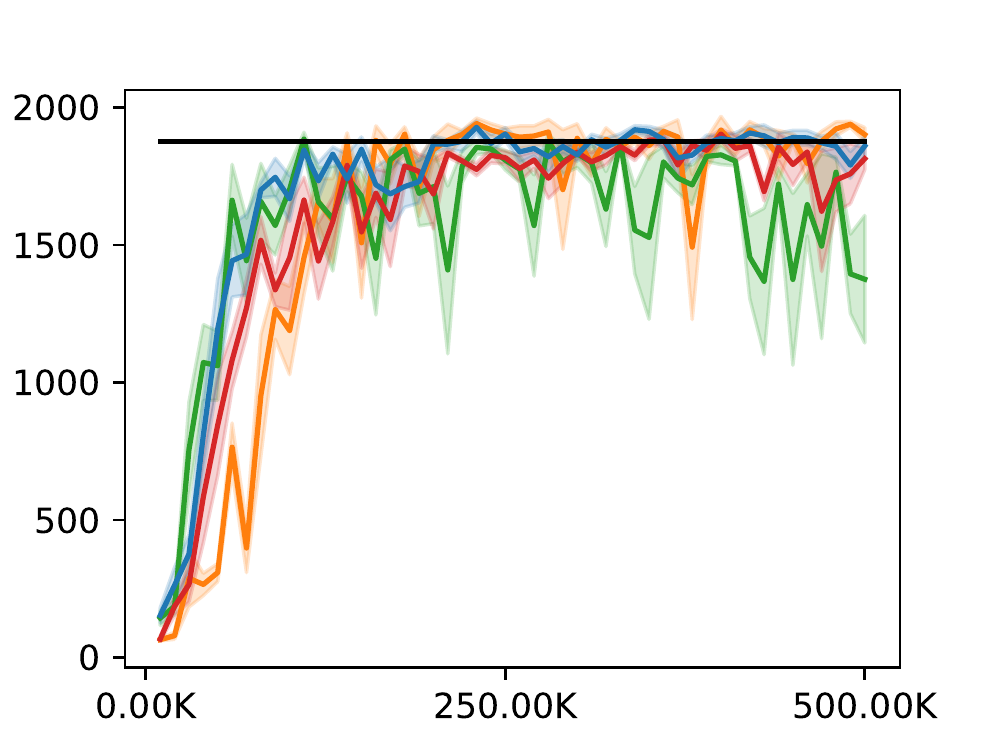}
&  \includegraphics[width=3cm]{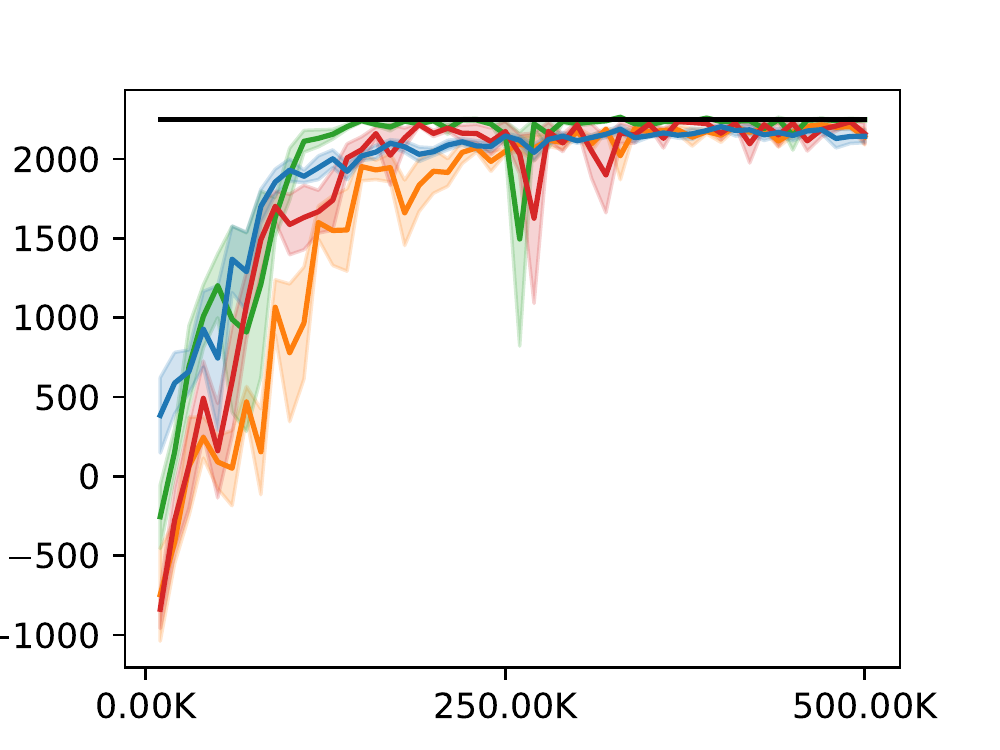} \\
\rotatebox{90}{\centering \hspace{0.2cm} 4 episodes} &  \includegraphics[width=3cm]{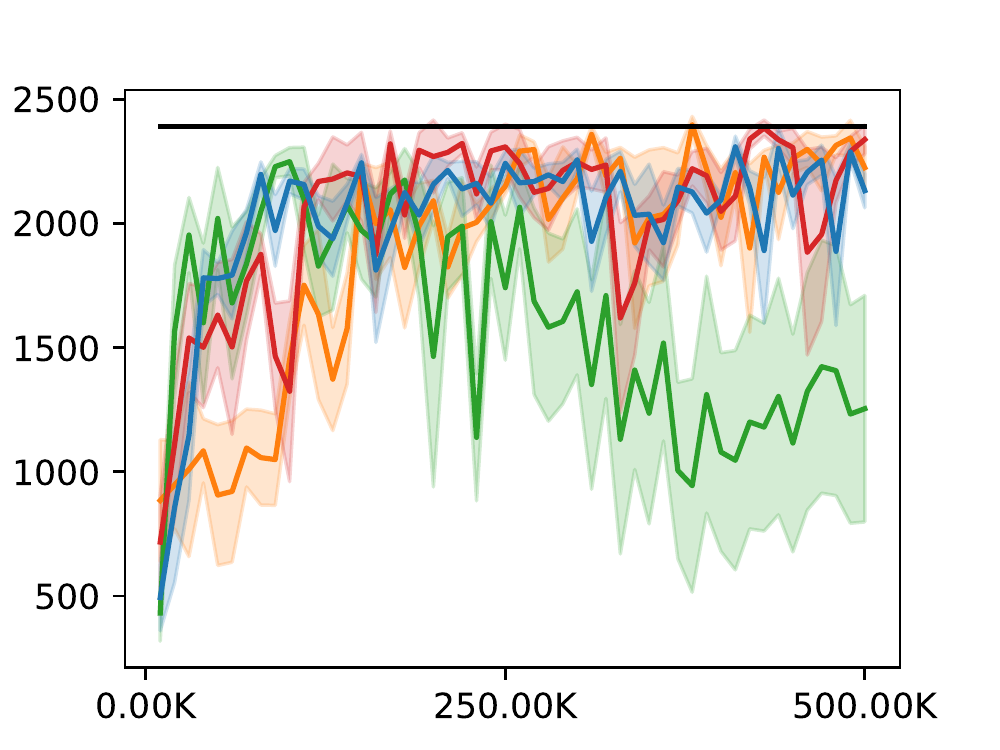}
&  \includegraphics[width=3cm]{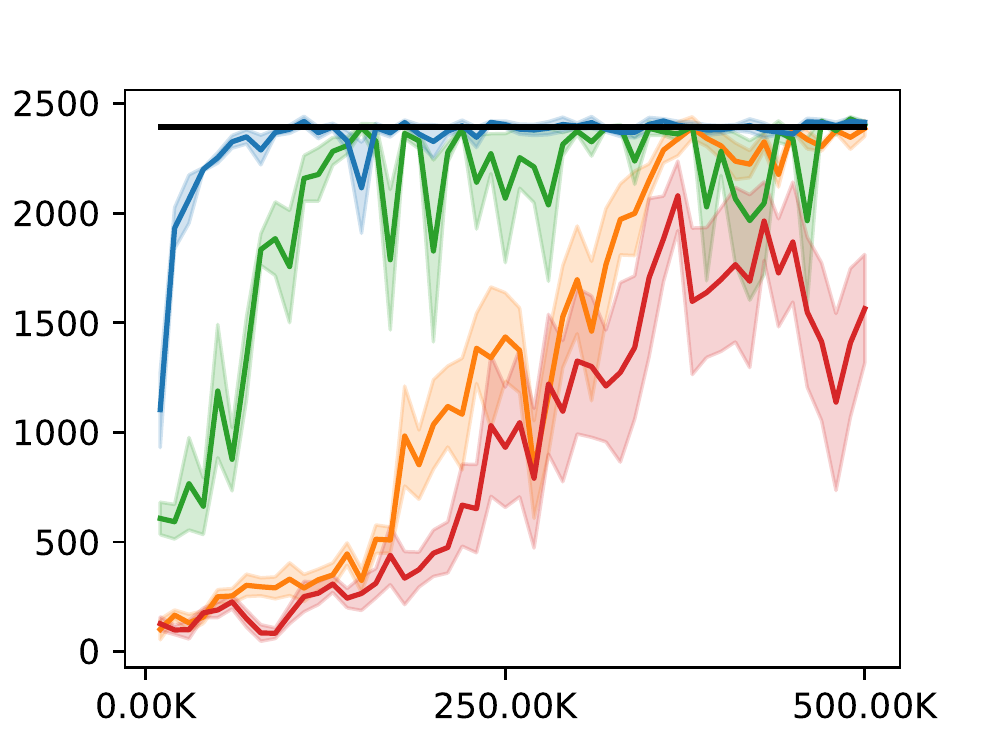}
&  \includegraphics[width=3cm]{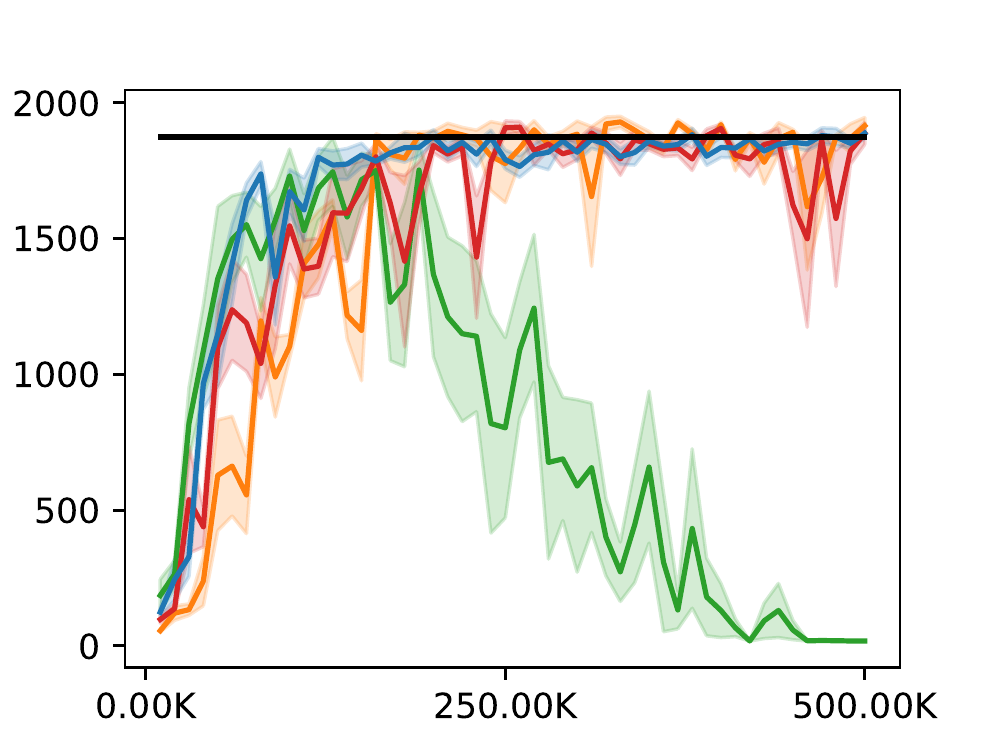}
&  \includegraphics[width=3cm]{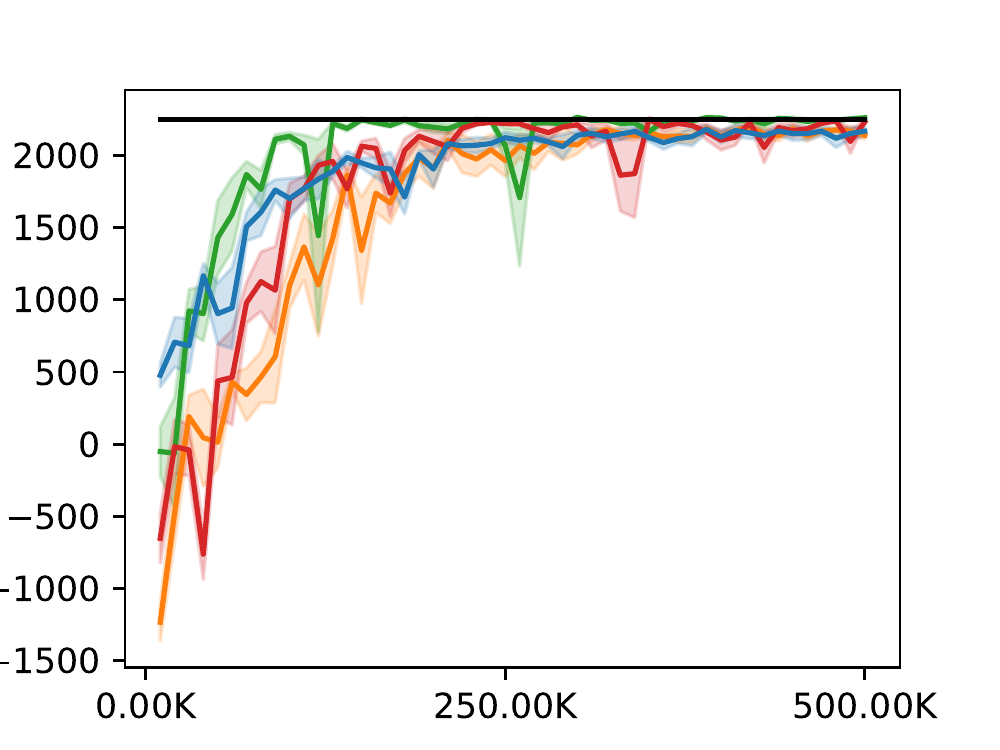} \\
\rotatebox{90}{\centering \hspace{0.2cm} 1 episode} &  \includegraphics[width=3cm]{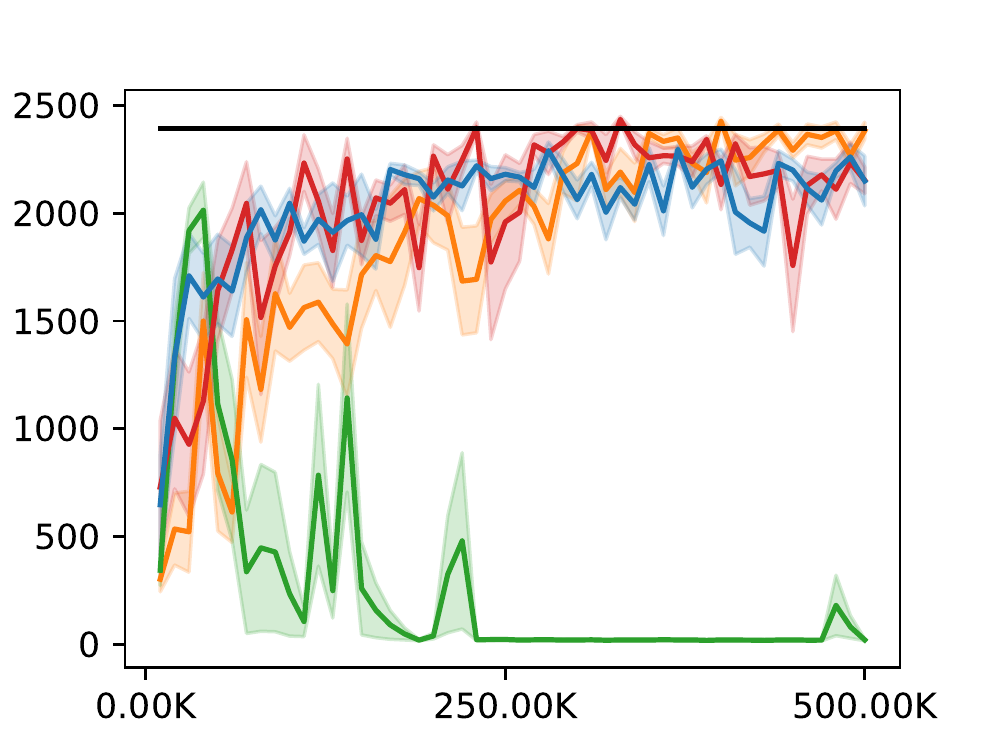}
&  \includegraphics[width=3cm]{ei-ant-bullet-medium-v0-ep16.pdf}
&  \includegraphics[width=3cm]{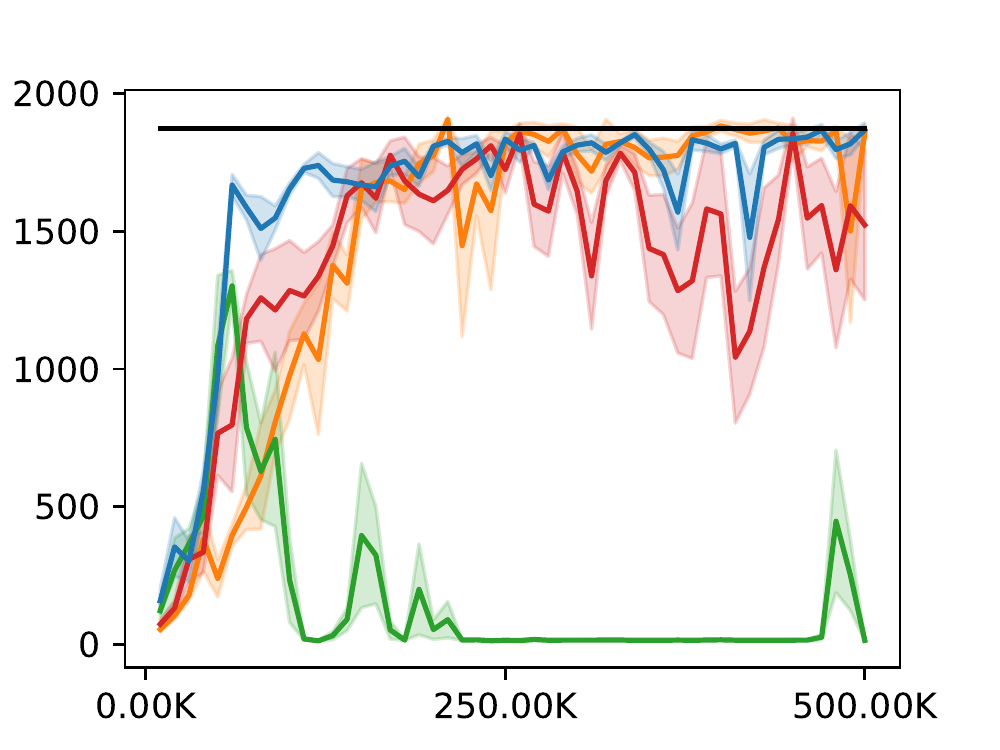}
&  \includegraphics[width=3cm]{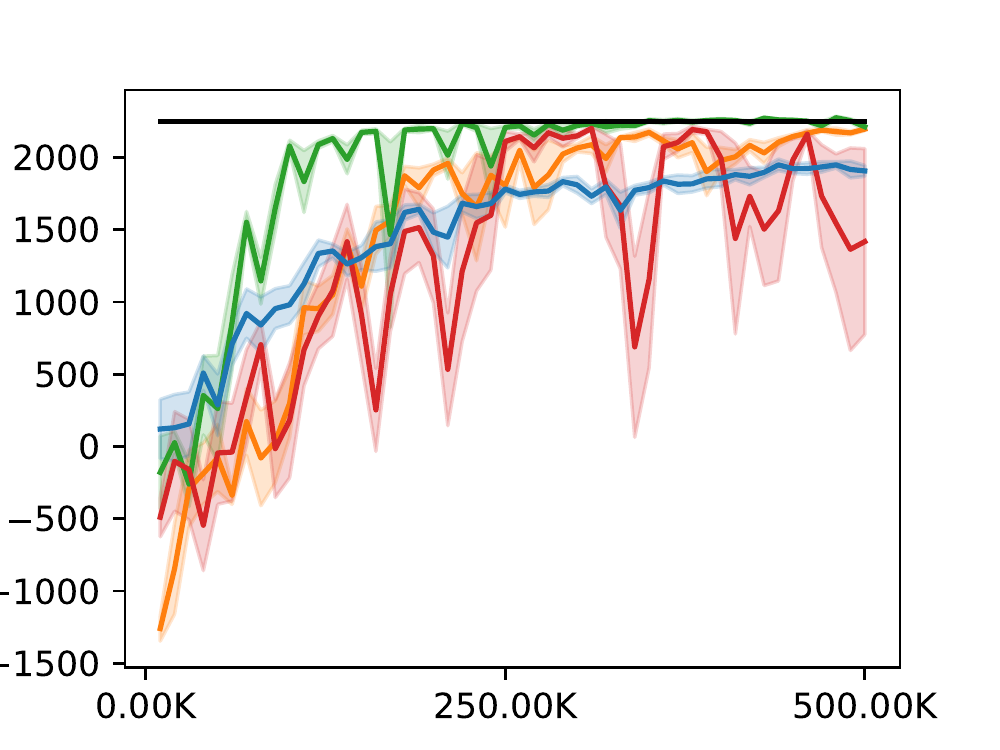} \\
\rotatebox{90}{\centering \hspace{0.2cm} 1/2 episode} &  \includegraphics[width=3cm]{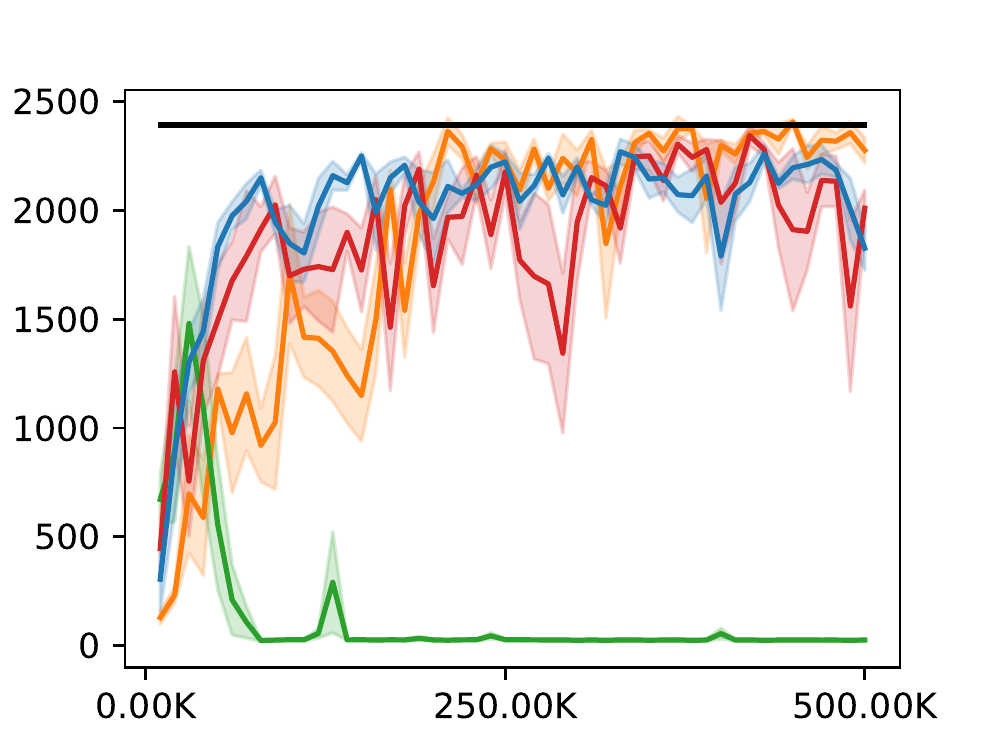}
&  \includegraphics[width=3cm]{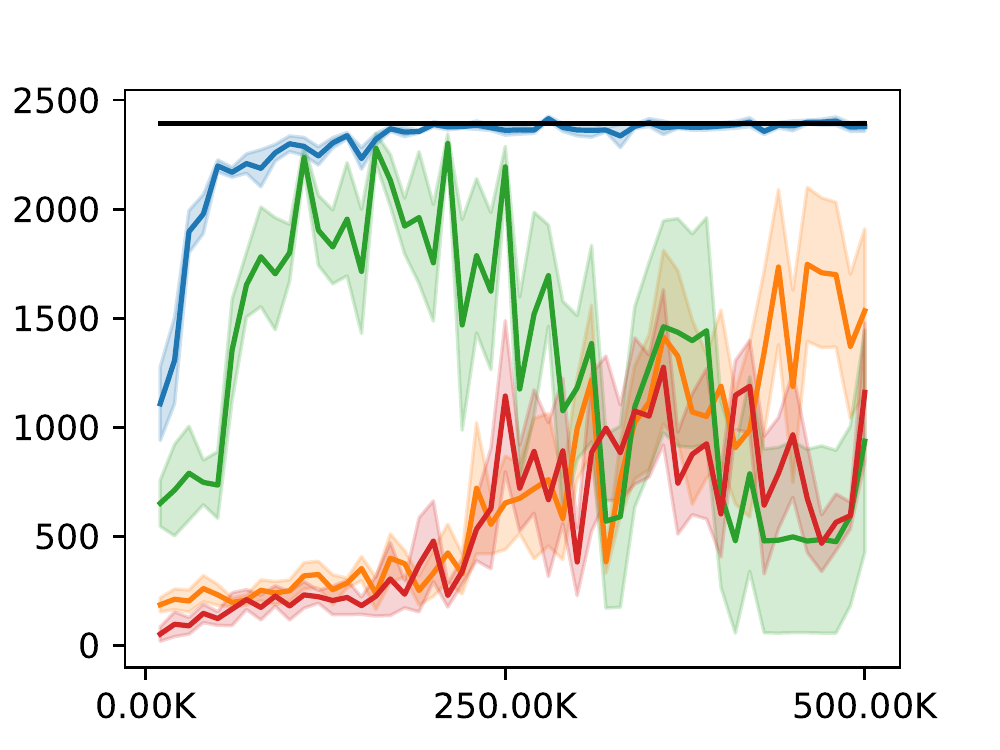}
&  \includegraphics[width=3cm]{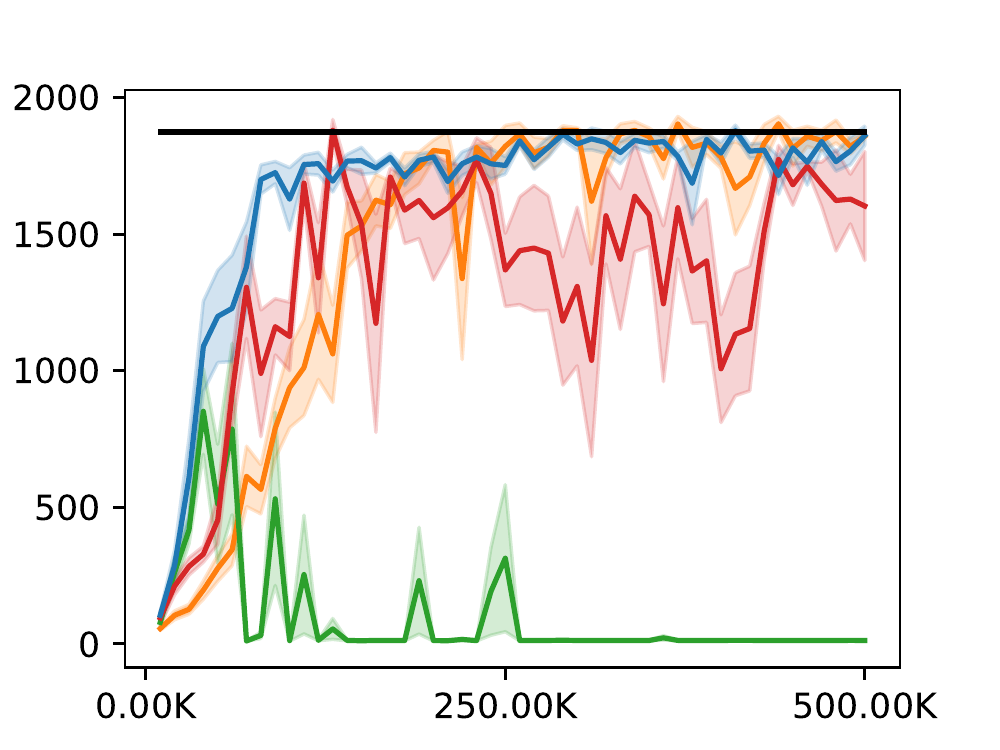}
&  \includegraphics[width=3cm]{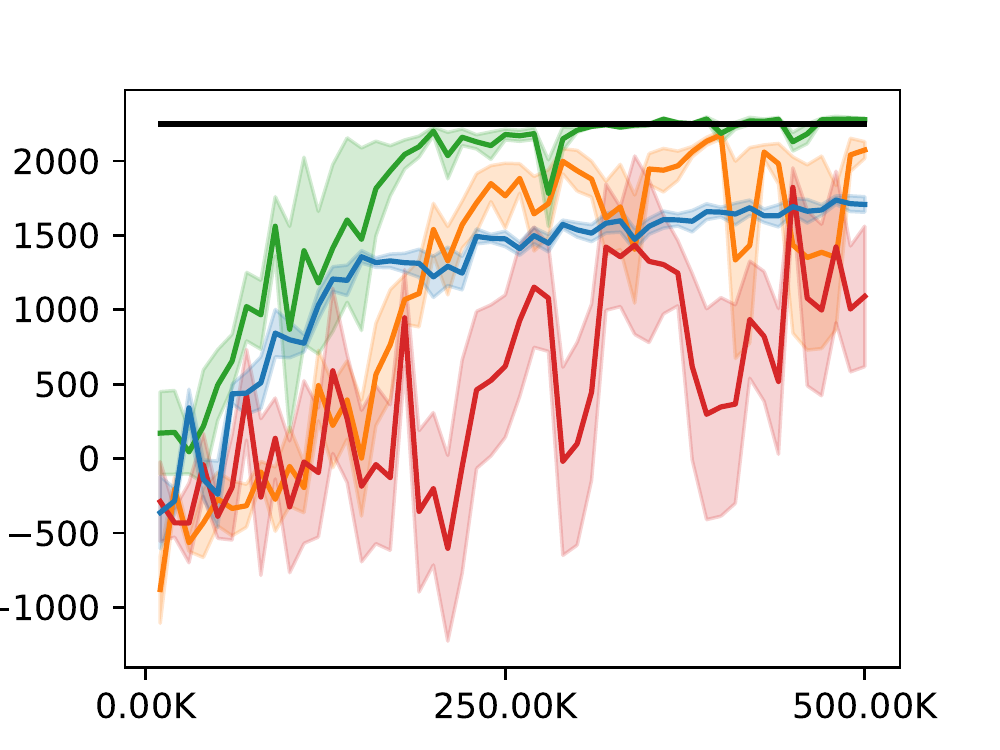} \\
\rotatebox{90}{\centering \hspace{0.2cm} 1/4 episode} &  \includegraphics[width=3cm]{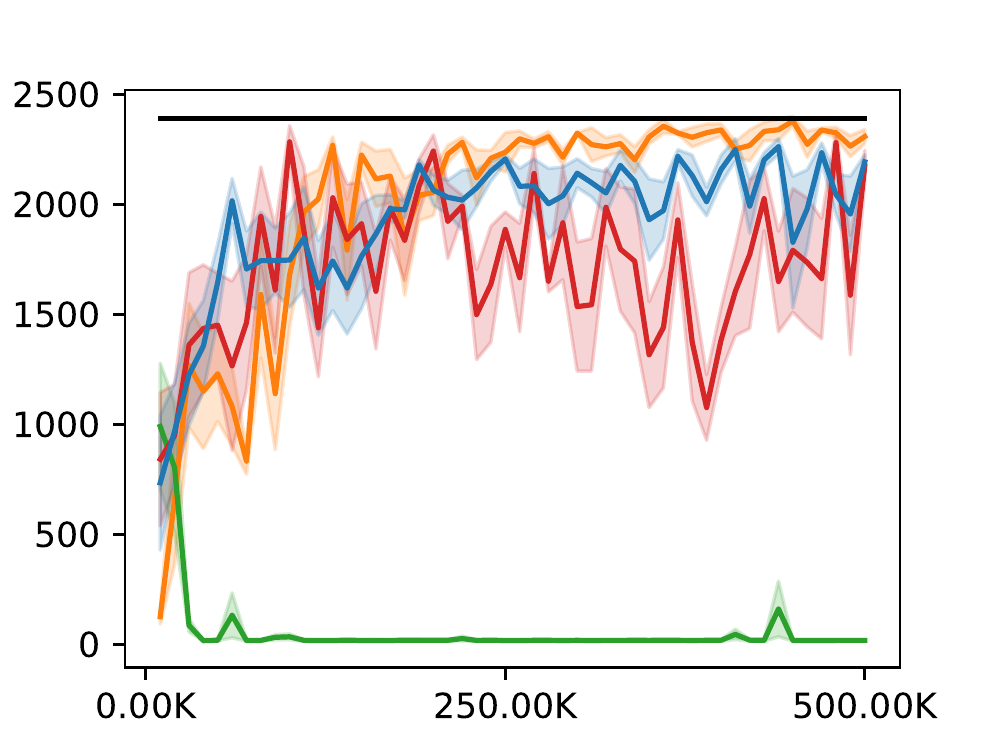}
&  \includegraphics[width=3cm]{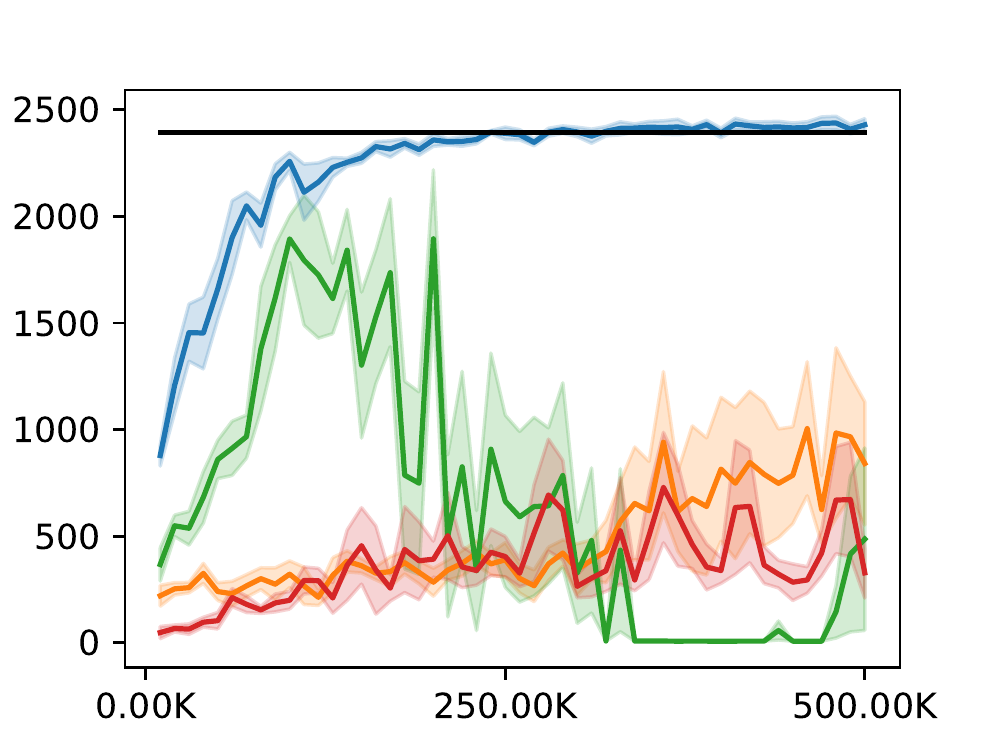}
&  \includegraphics[width=3cm]{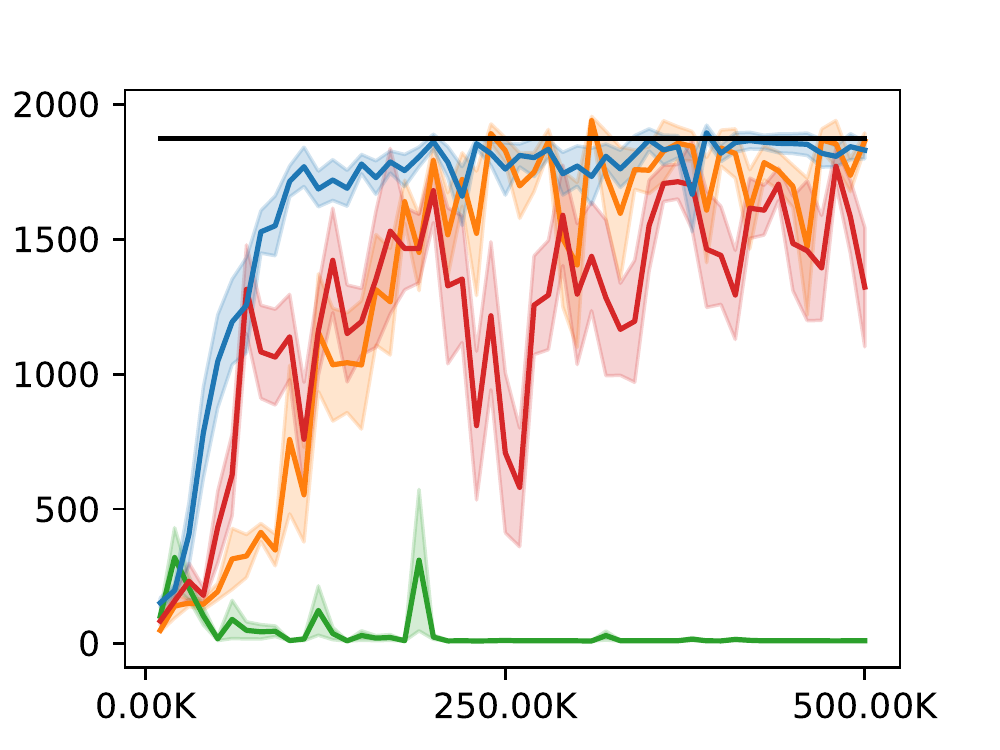}
&  \includegraphics[width=3cm]{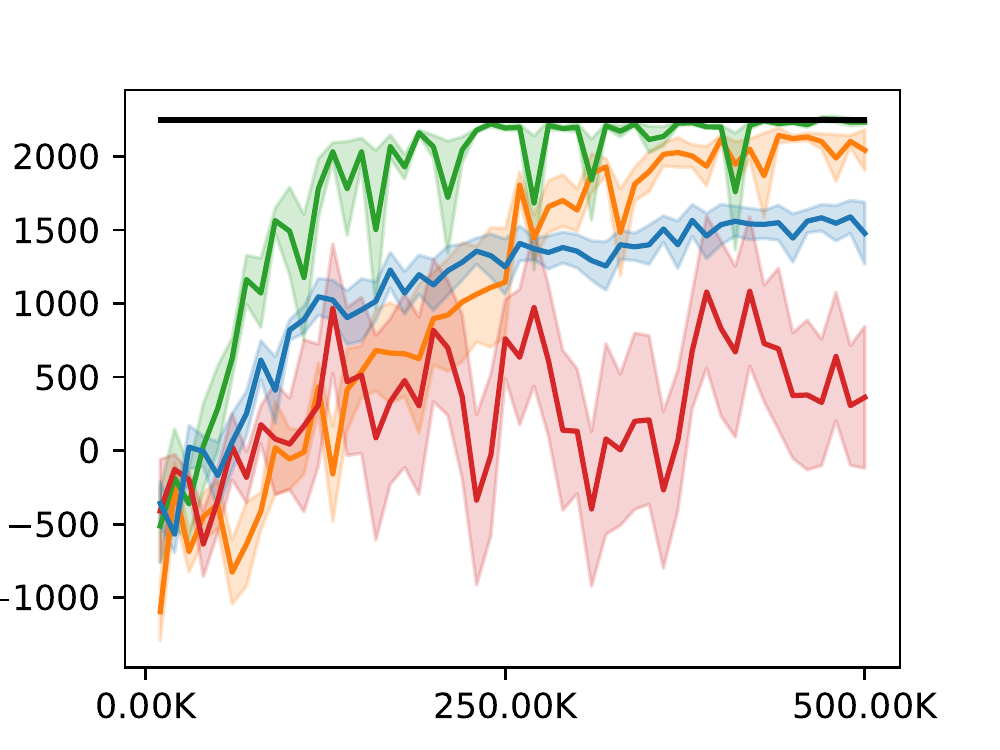} \\
\rotatebox{90}{\centering \hspace{0.2cm} 1/8 episode} &  \includegraphics[width=3cm]{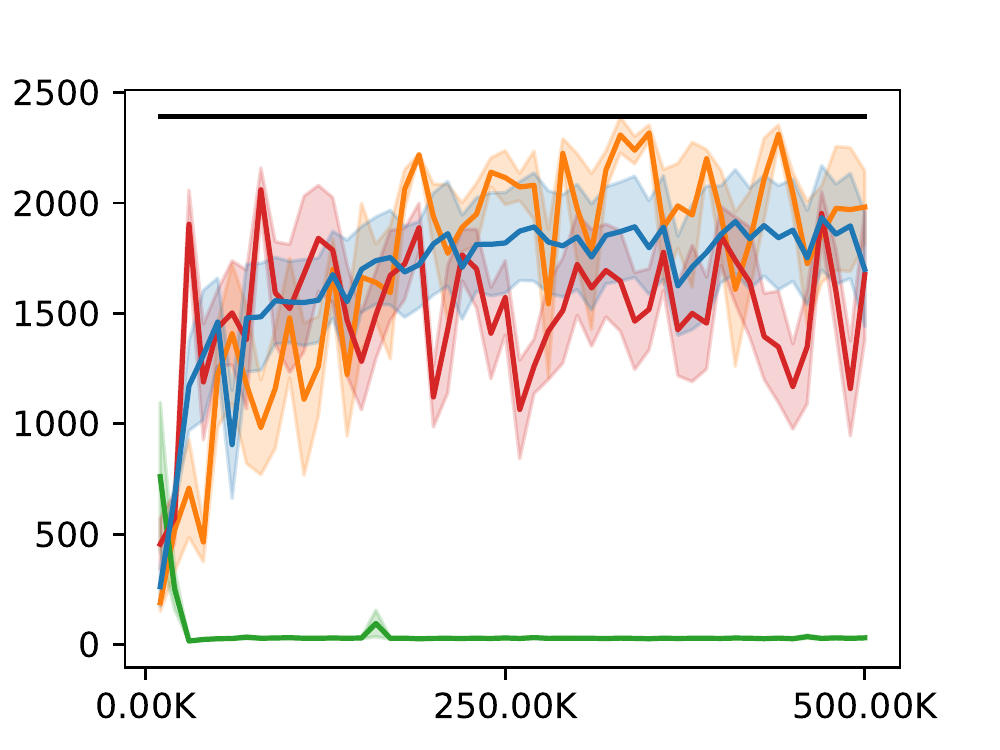}
&  \includegraphics[width=3cm]{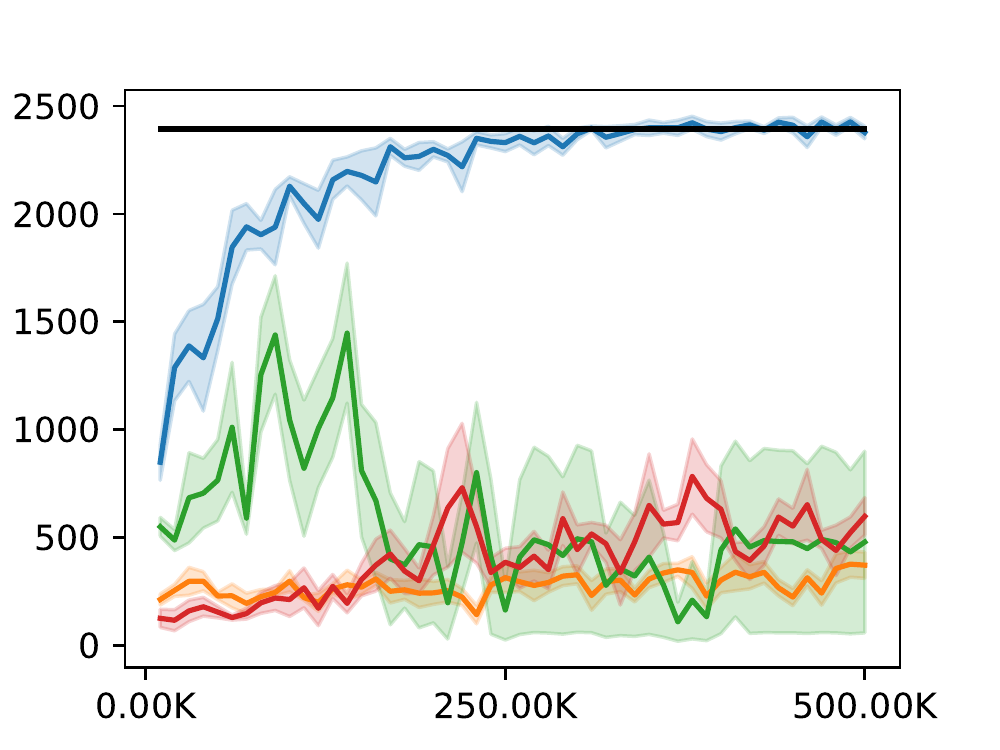}
&  \includegraphics[width=3cm]{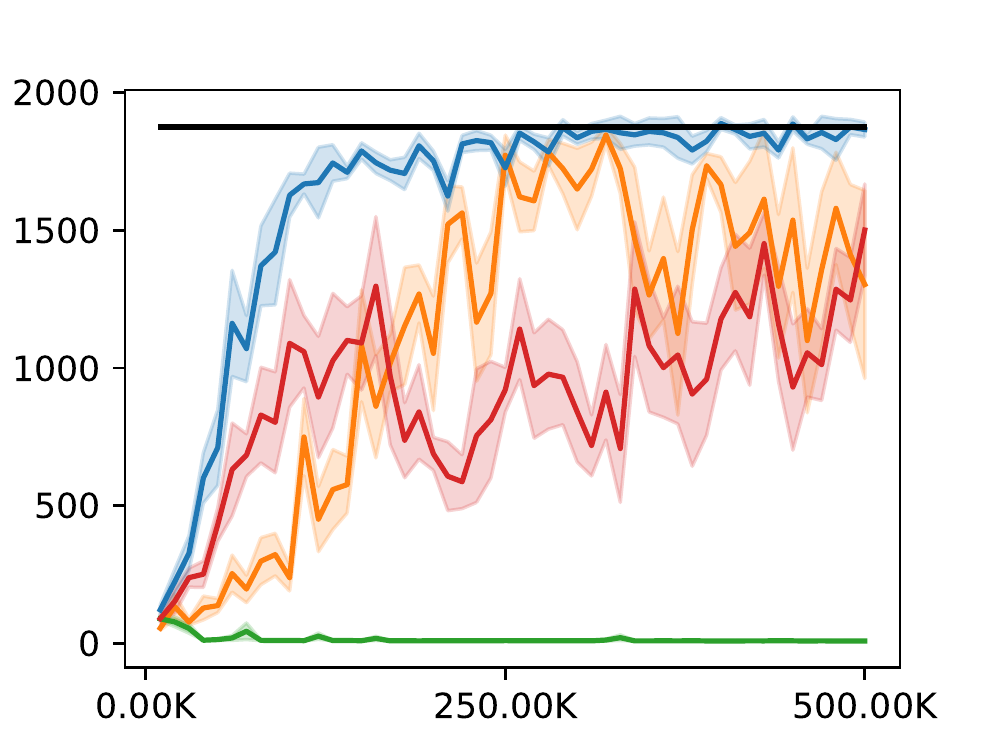}
&  \includegraphics[width=3cm]{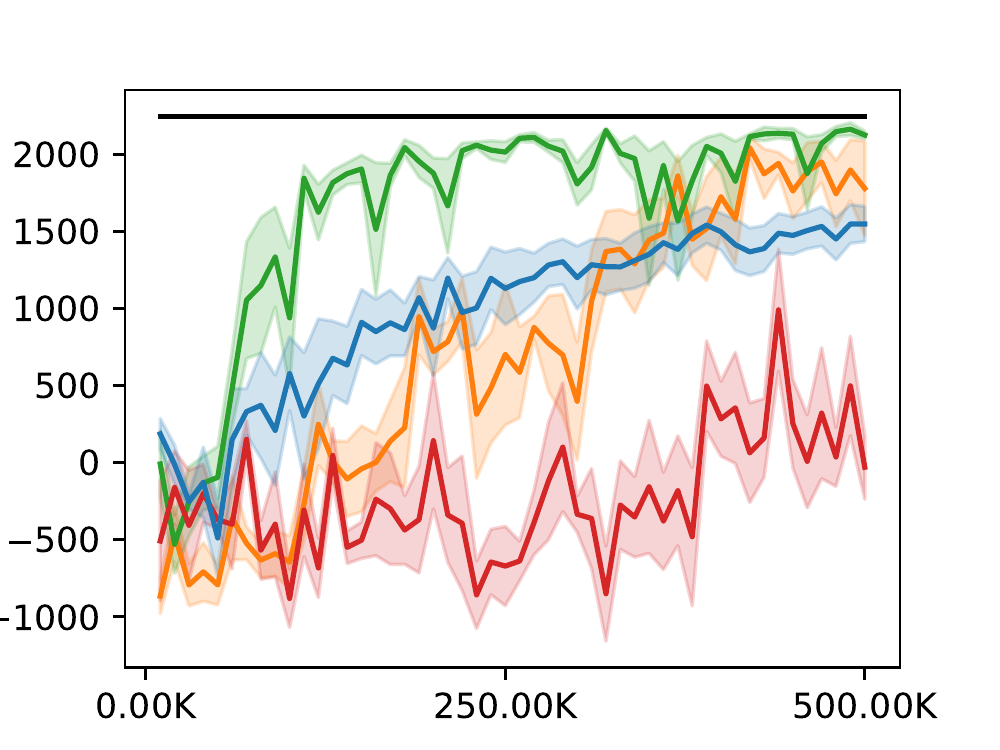} \\
\rotatebox{90}{\centering \hspace{0.2cm} 1/16 episode} &  \includegraphics[width=3cm]{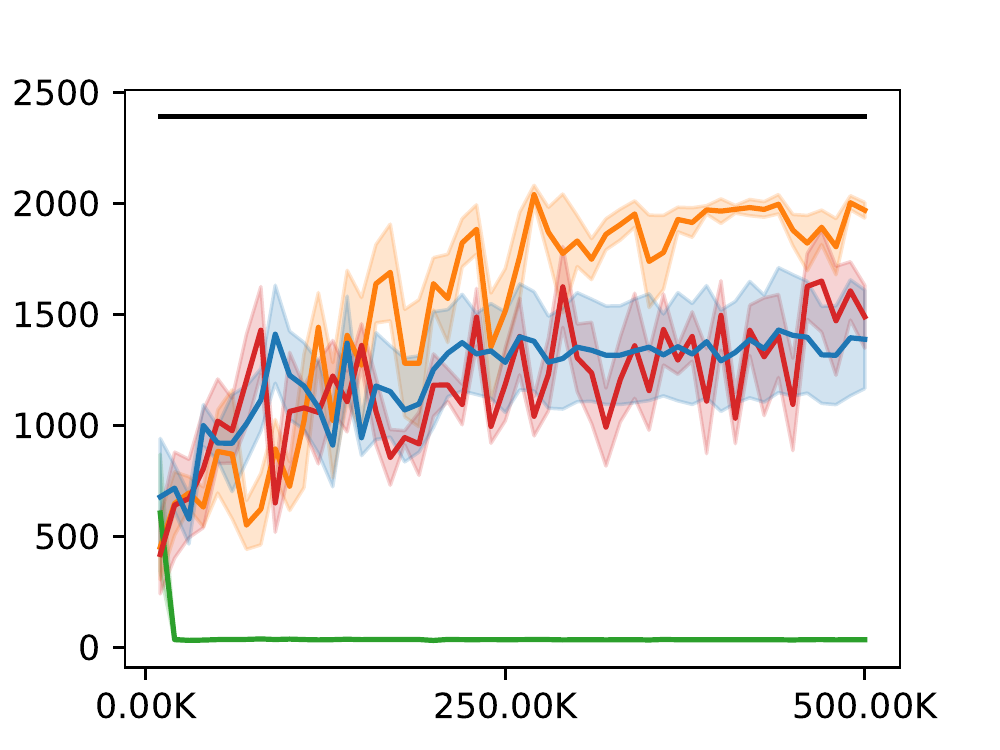}
&  \includegraphics[width=3cm]{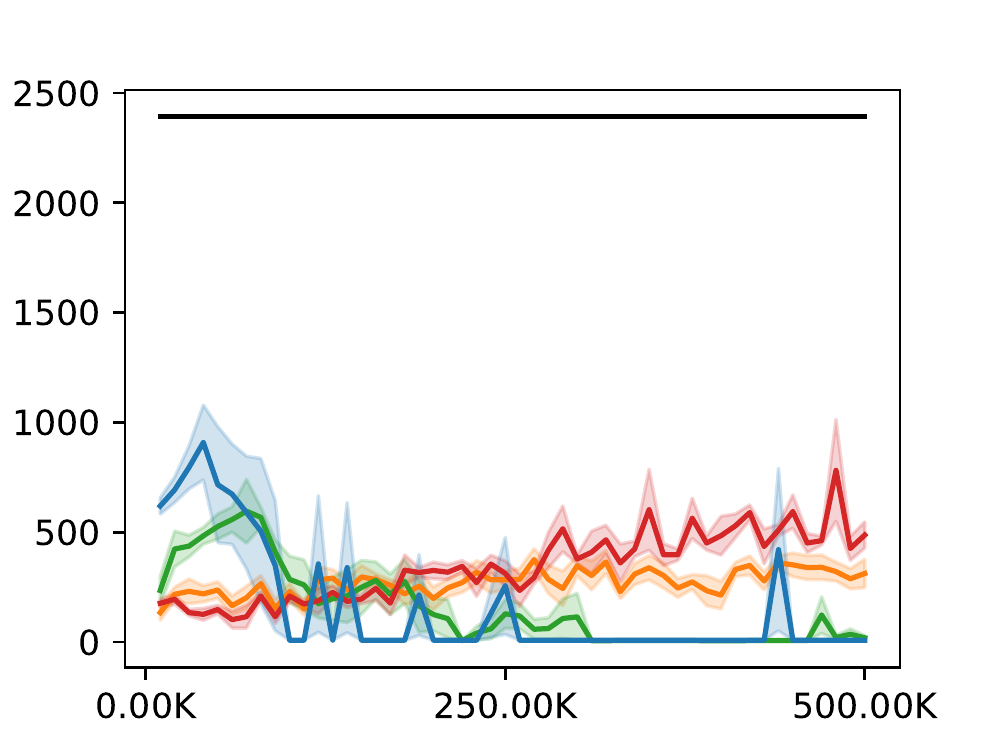}
&  \includegraphics[width=3cm]{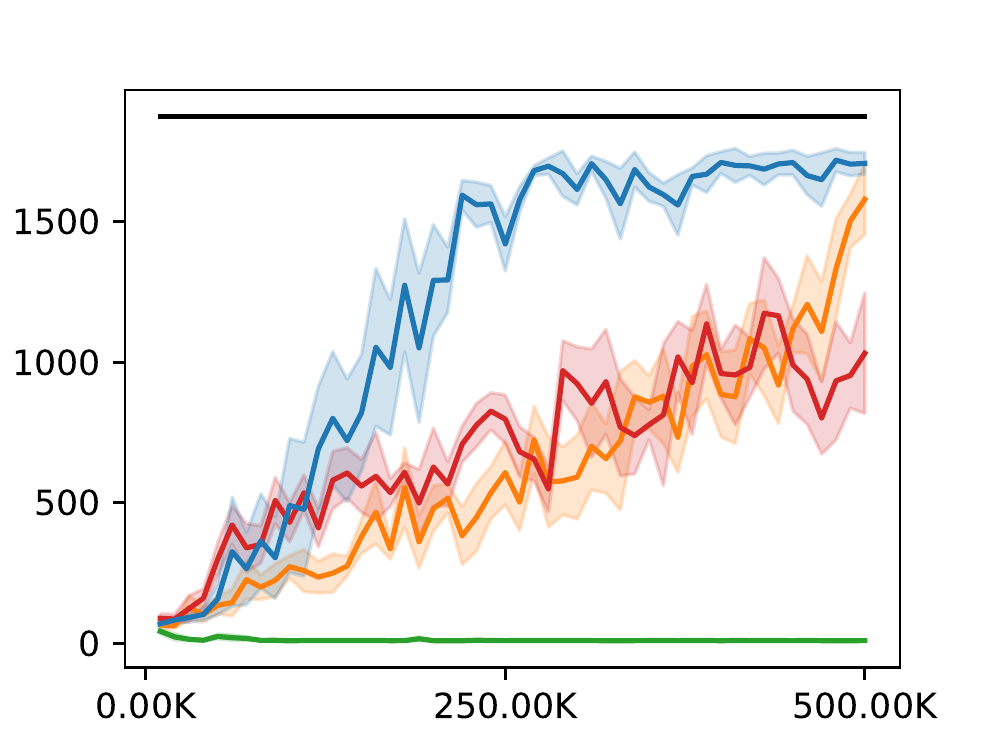}
&  \includegraphics[width=3cm]{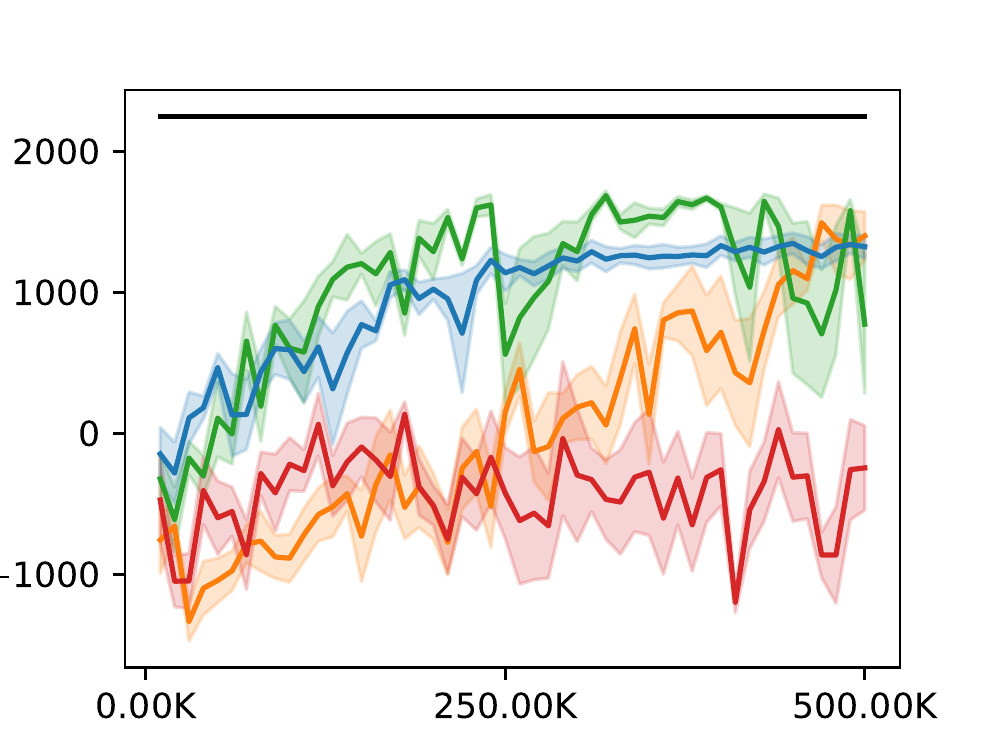} \\
\rotatebox{90}{\centering \hspace{0.2cm} 1/32 episode} &  \includegraphics[width=3cm]{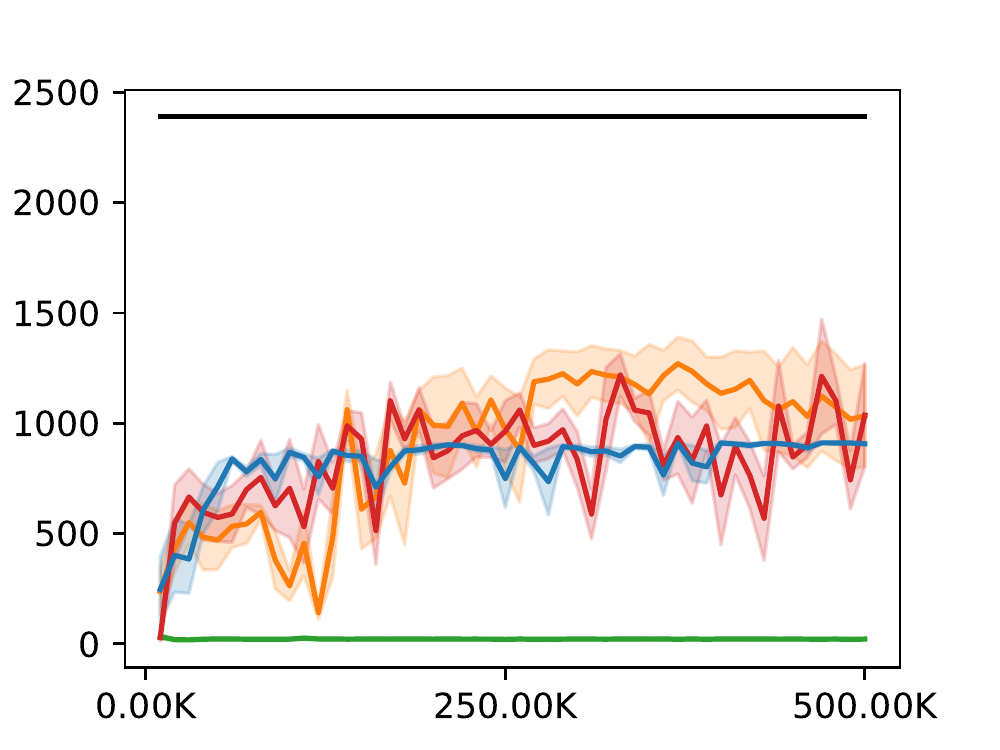}
&  \includegraphics[width=3cm]{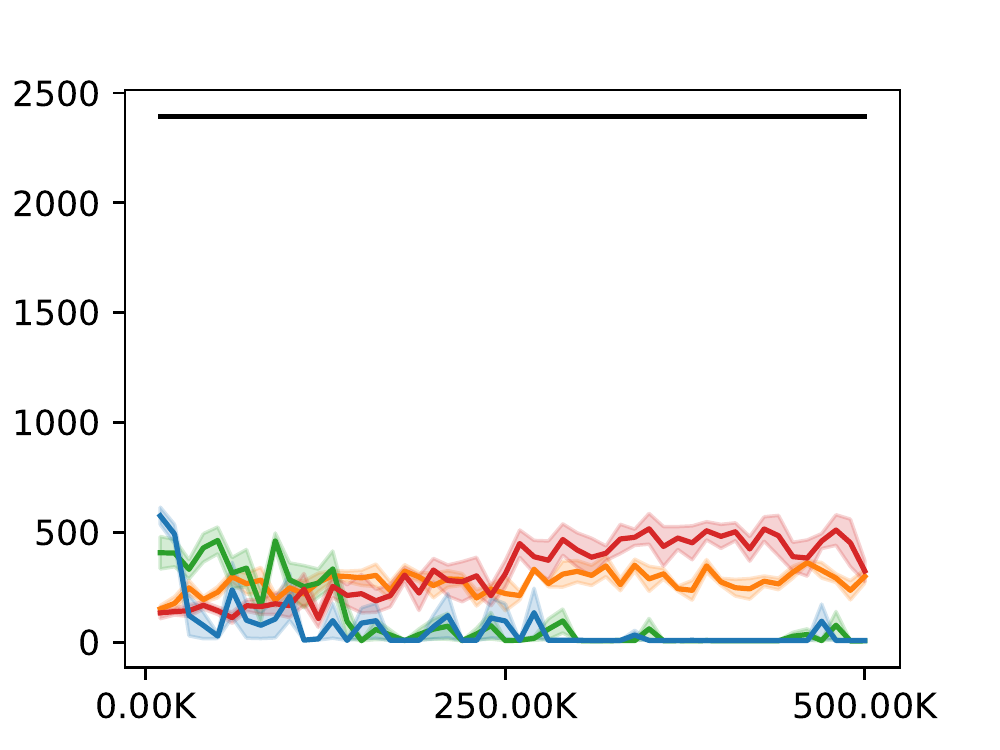}
&  \includegraphics[width=3cm]{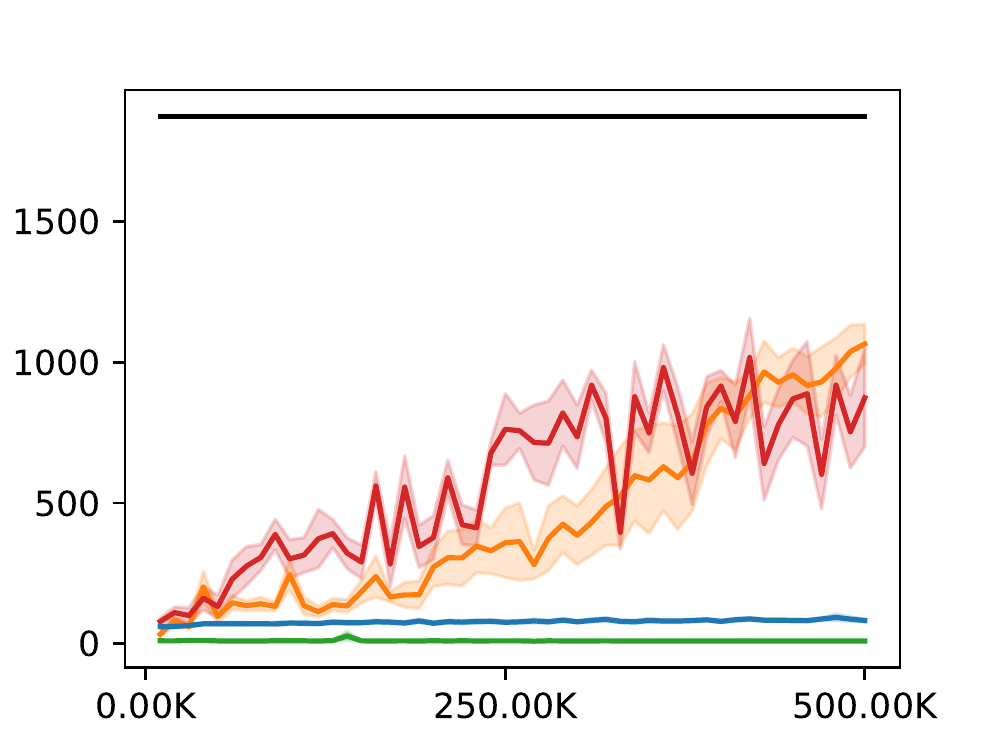}
&  \includegraphics[width=3cm]{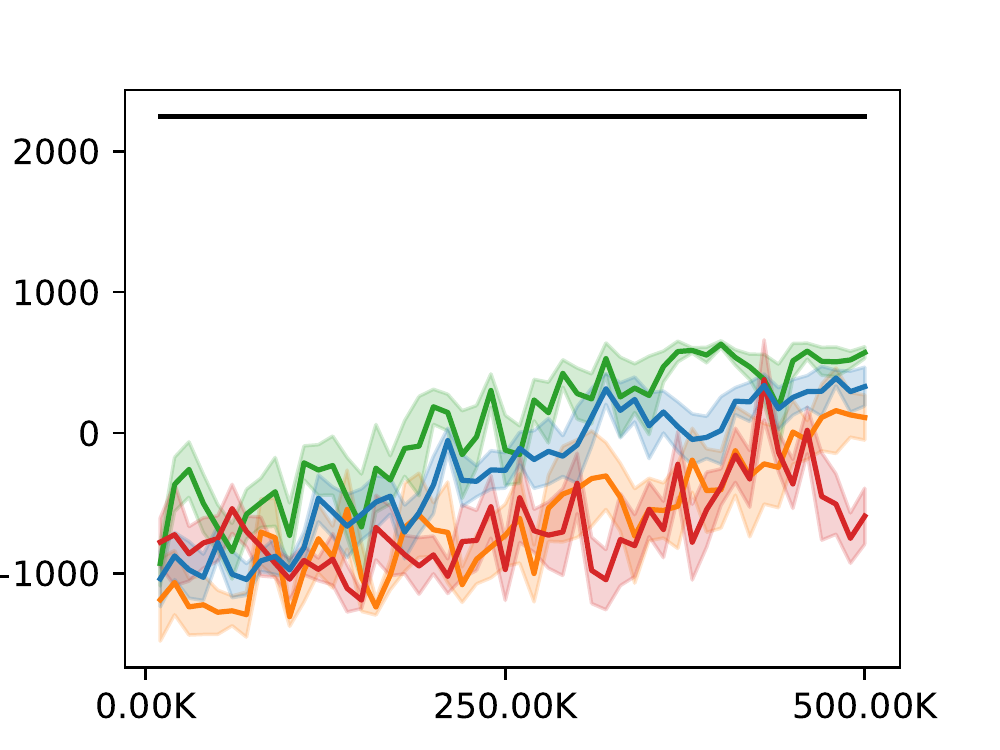} \\
\rotatebox{90}{\centering \hspace{0.2cm} 1/64 episode} &  \includegraphics[width=3cm]{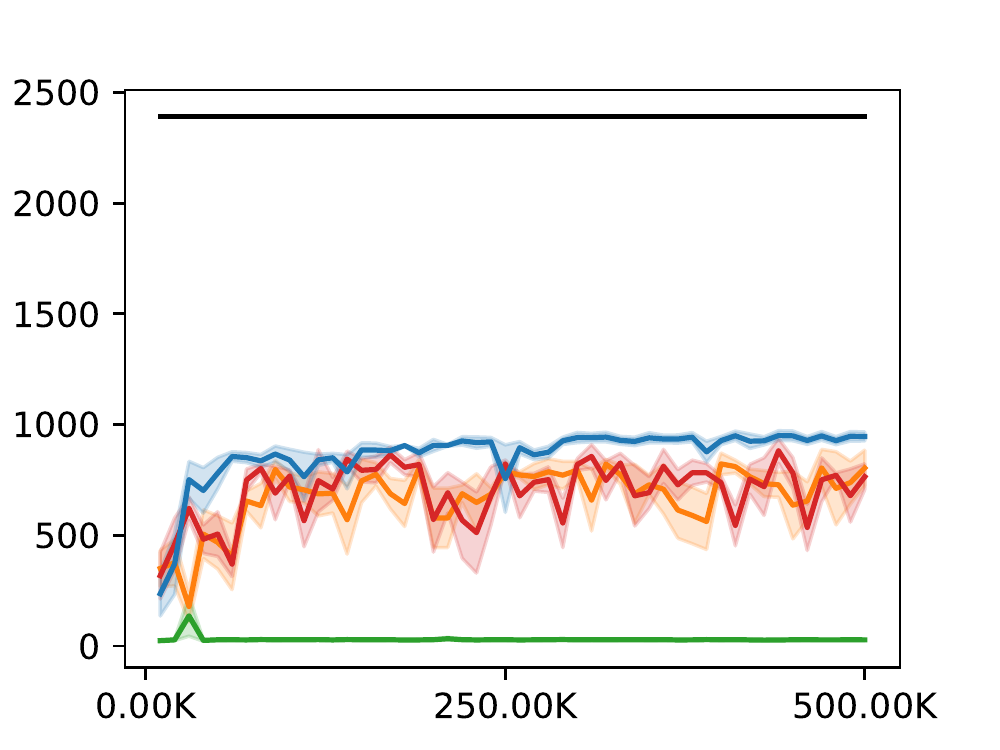}
&  \includegraphics[width=3cm]{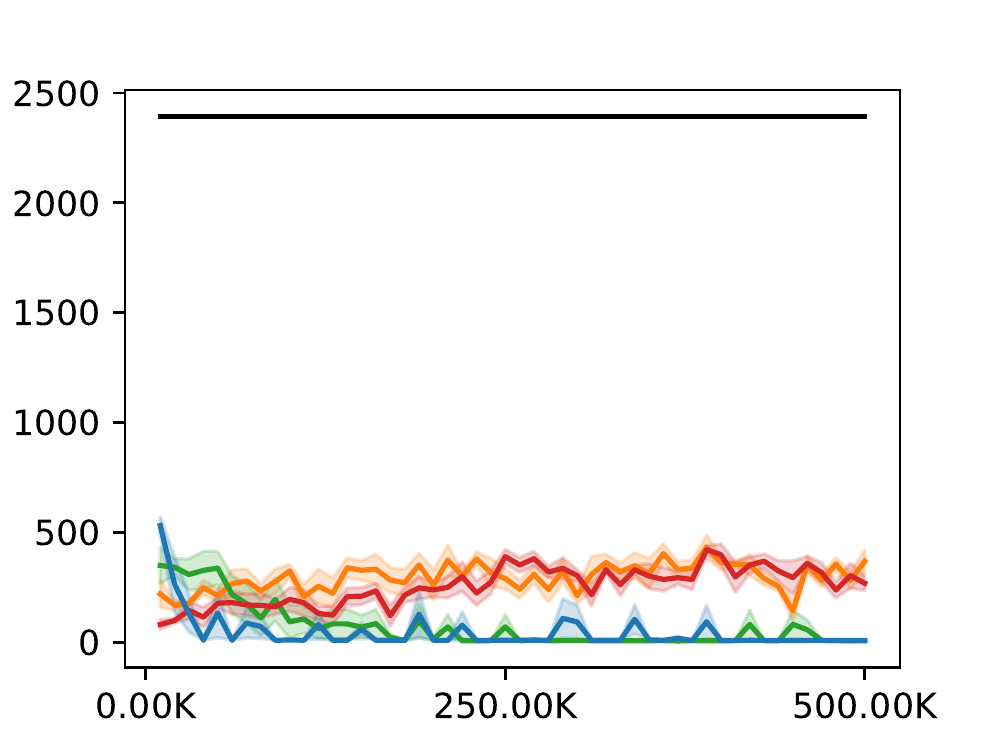}
&  \includegraphics[width=3cm]{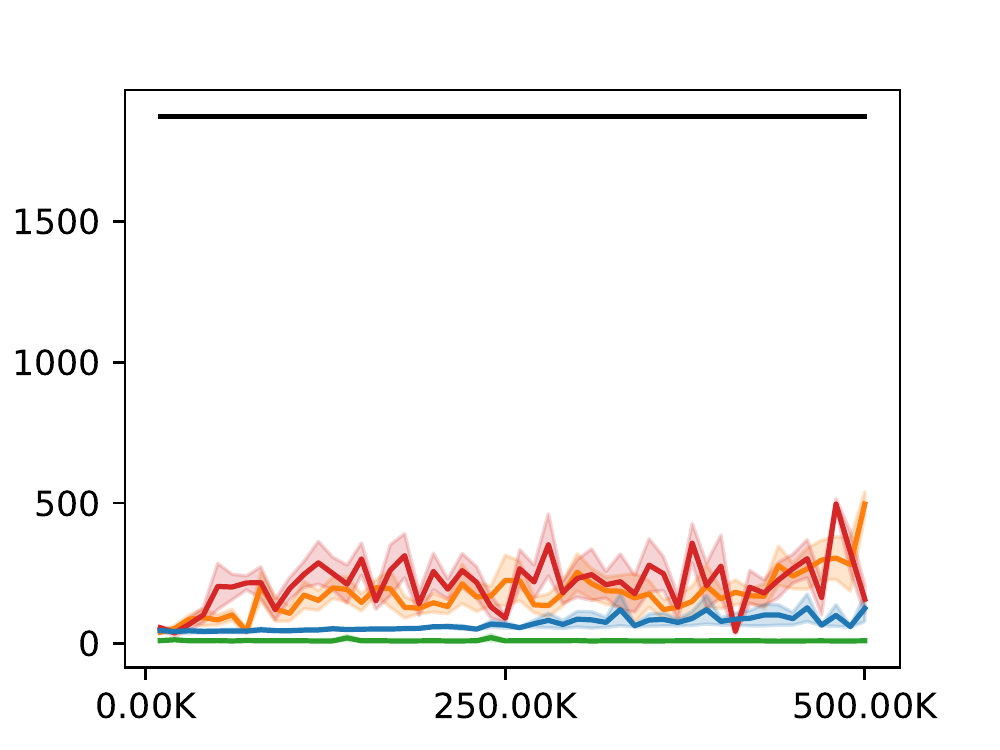}
&  \includegraphics[width=3cm]{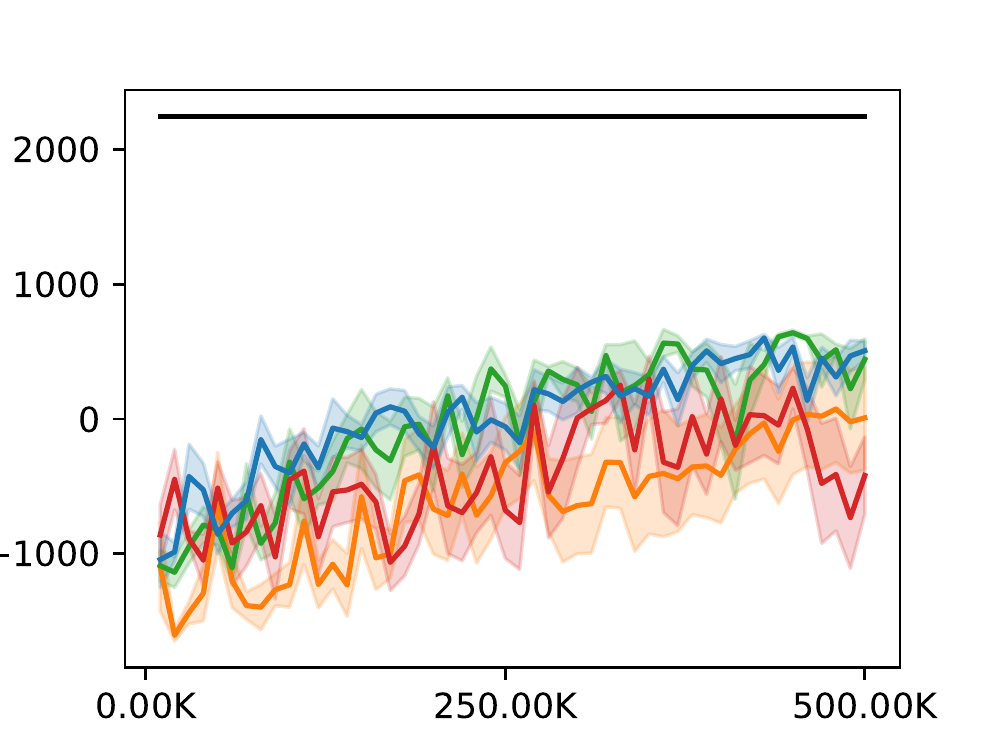} \\
\rotatebox{90}{\centering \hspace{0.2cm} 1/128 episode} &  \includegraphics[width=3cm]{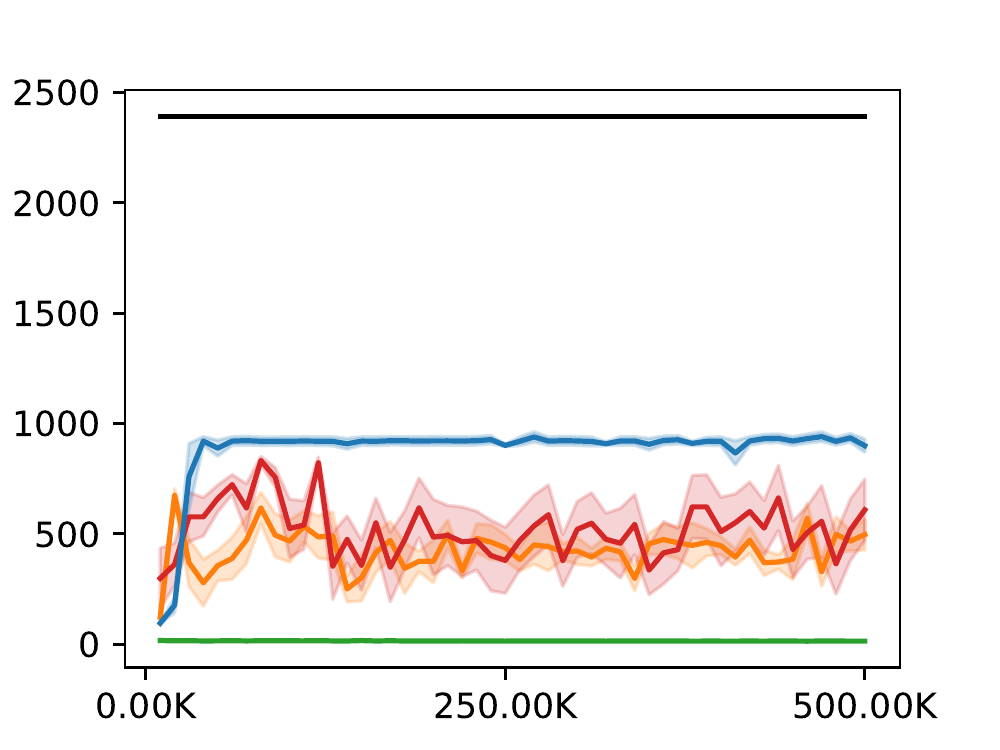}
&  \includegraphics[width=3cm]{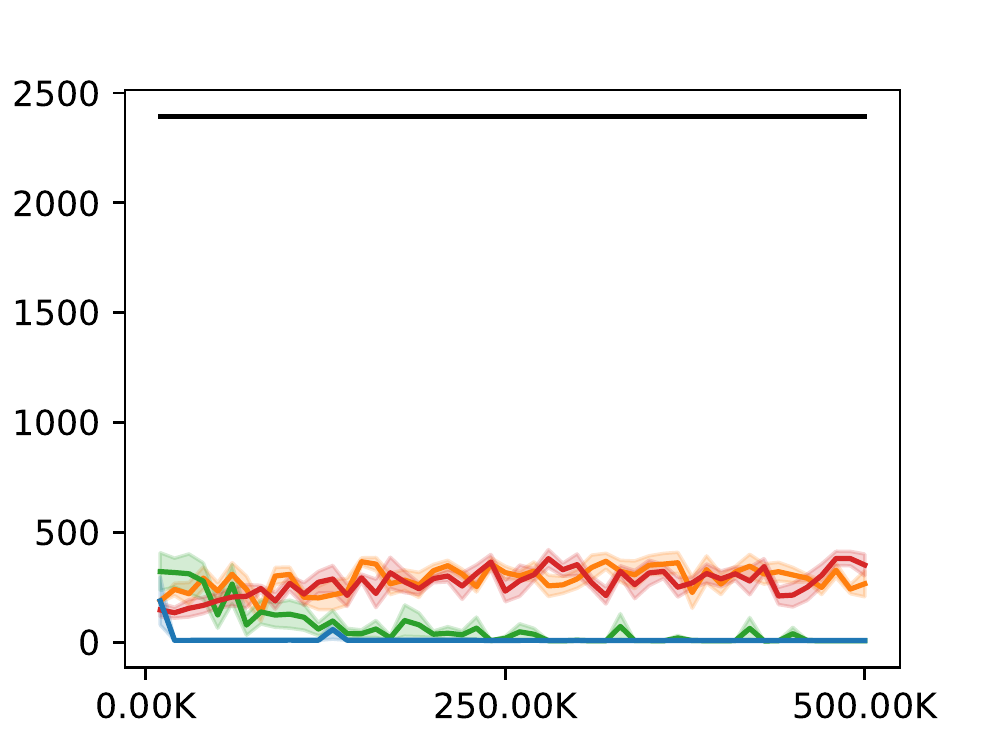}
&  \includegraphics[width=3cm]{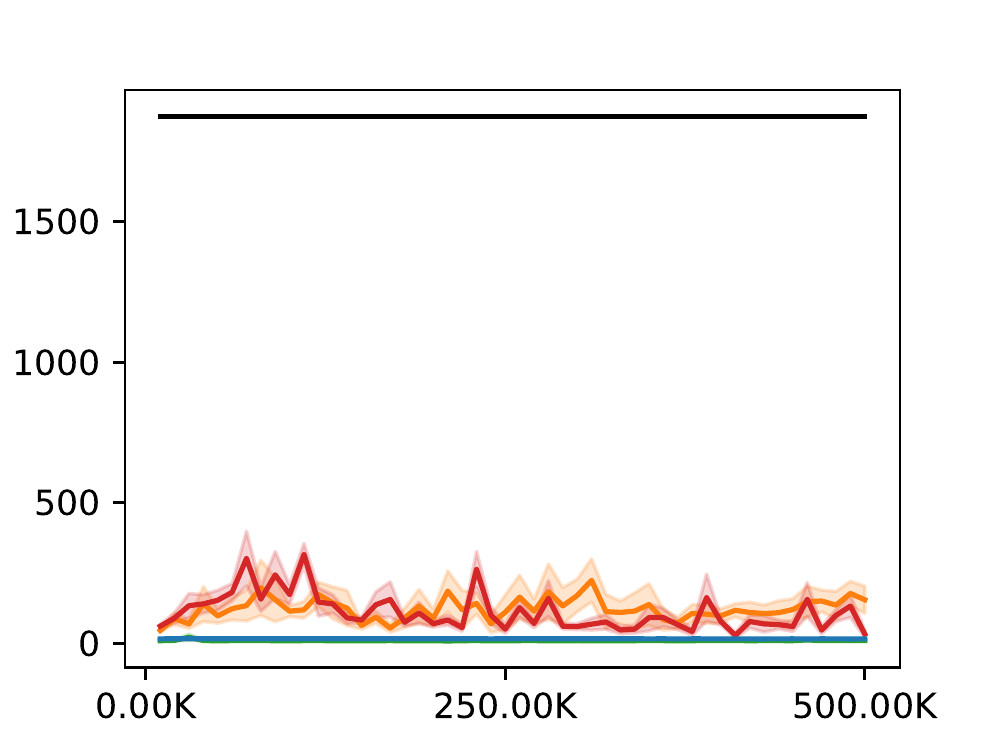}
&  \includegraphics[width=3cm]{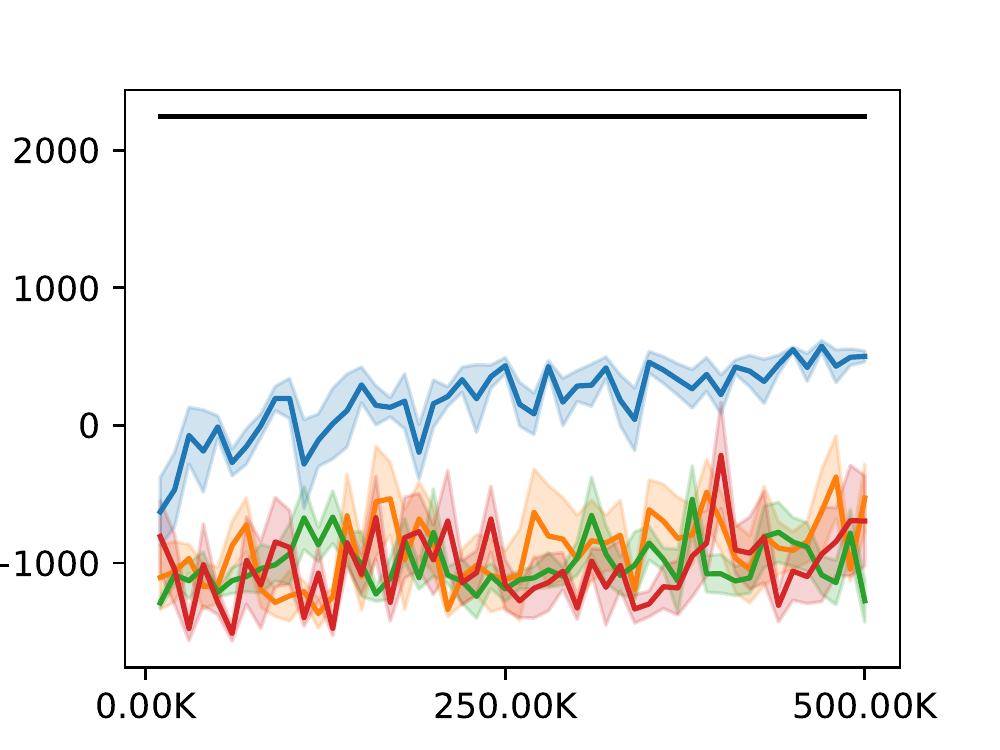} \\
\rotatebox{90}{\centering \hspace{0.2cm} 1/256 episode} &  \includegraphics[width=3cm]{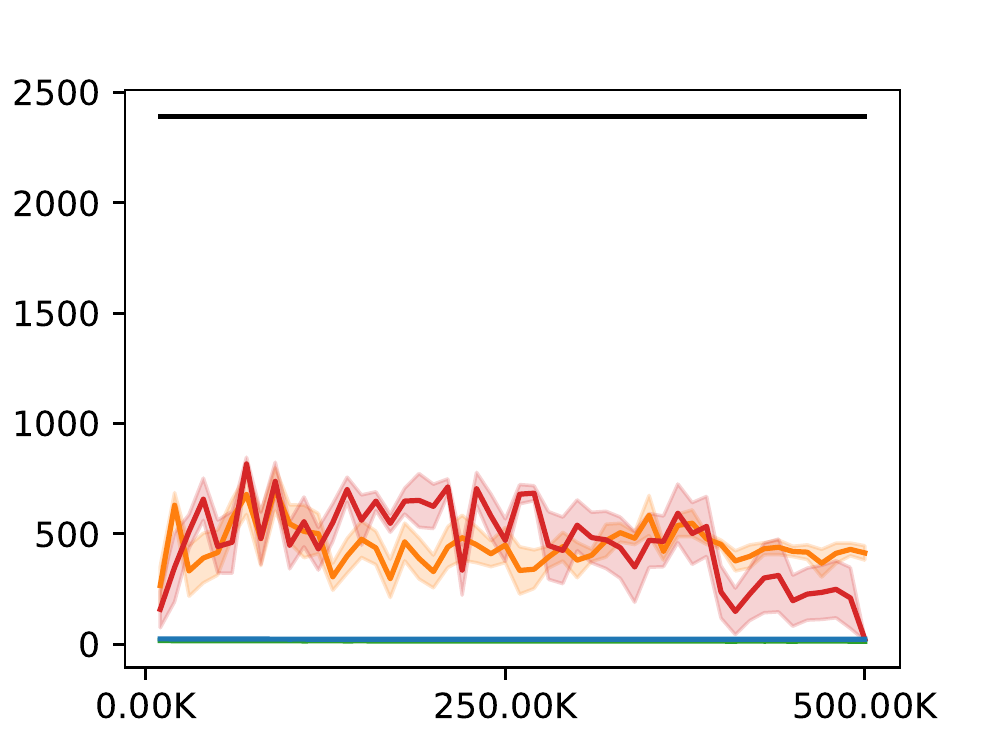}
&  \includegraphics[width=3cm]{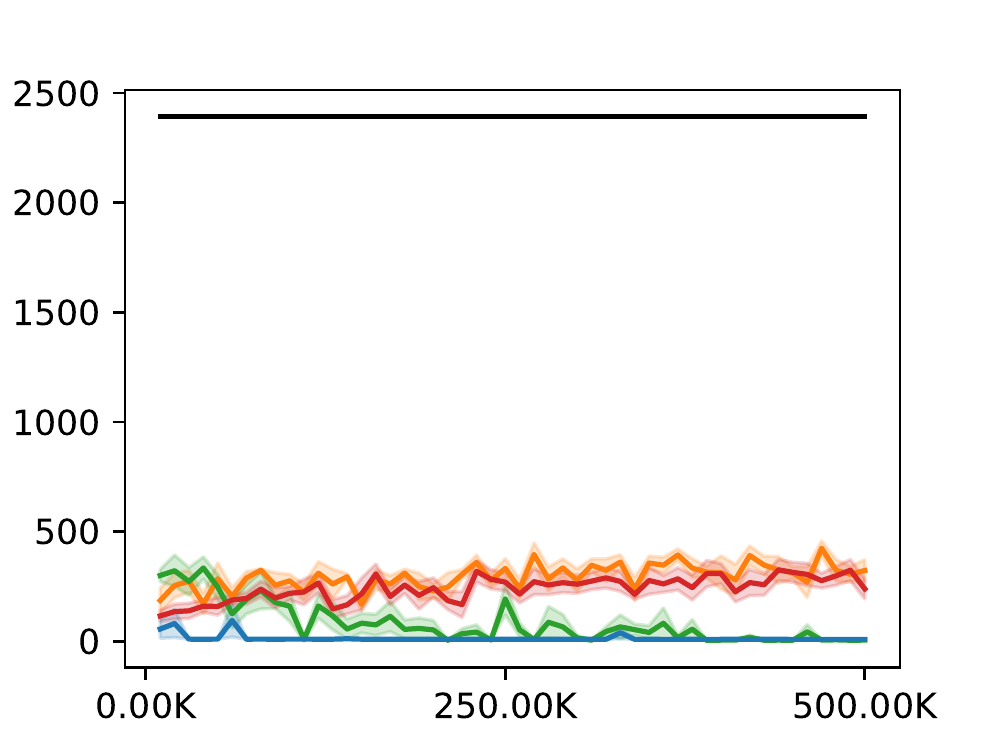}
&  \includegraphics[width=3cm]{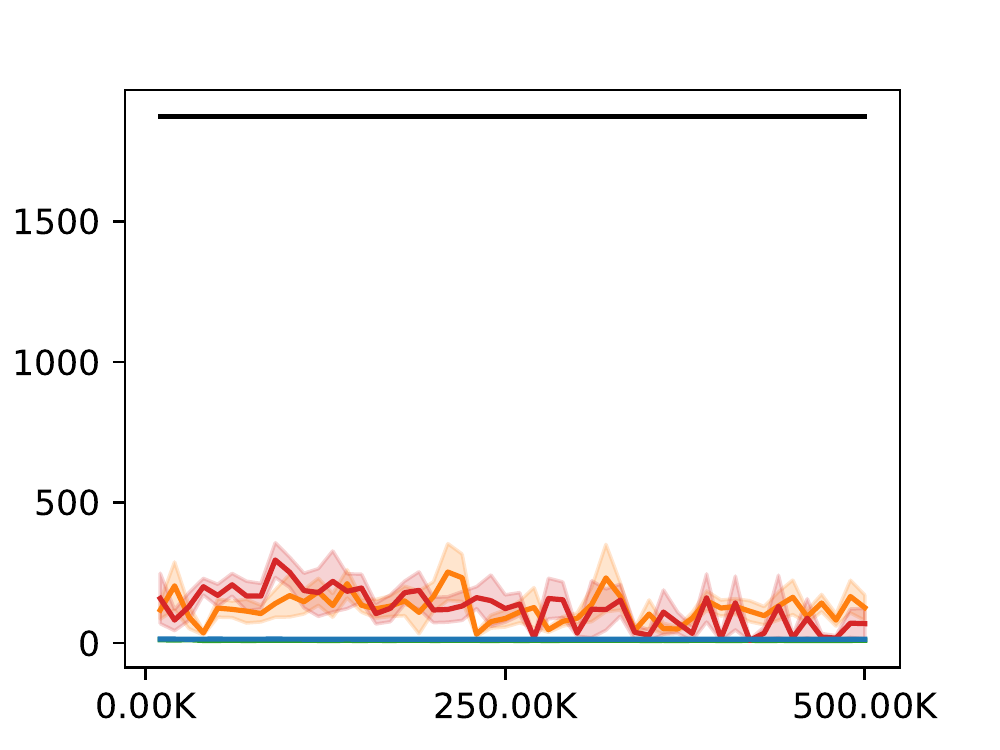}
&  \includegraphics[width=3cm]{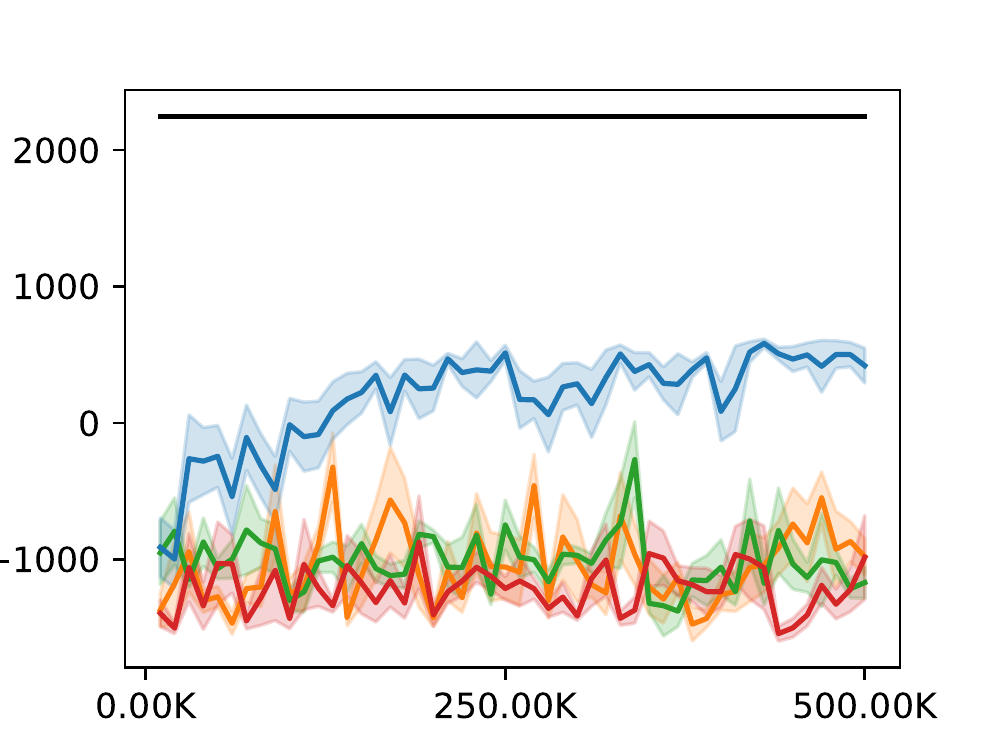} \\
\rotatebox{90}{\centering \hspace{0.2cm} 1/512 episode} &  \includegraphics[width=3cm]{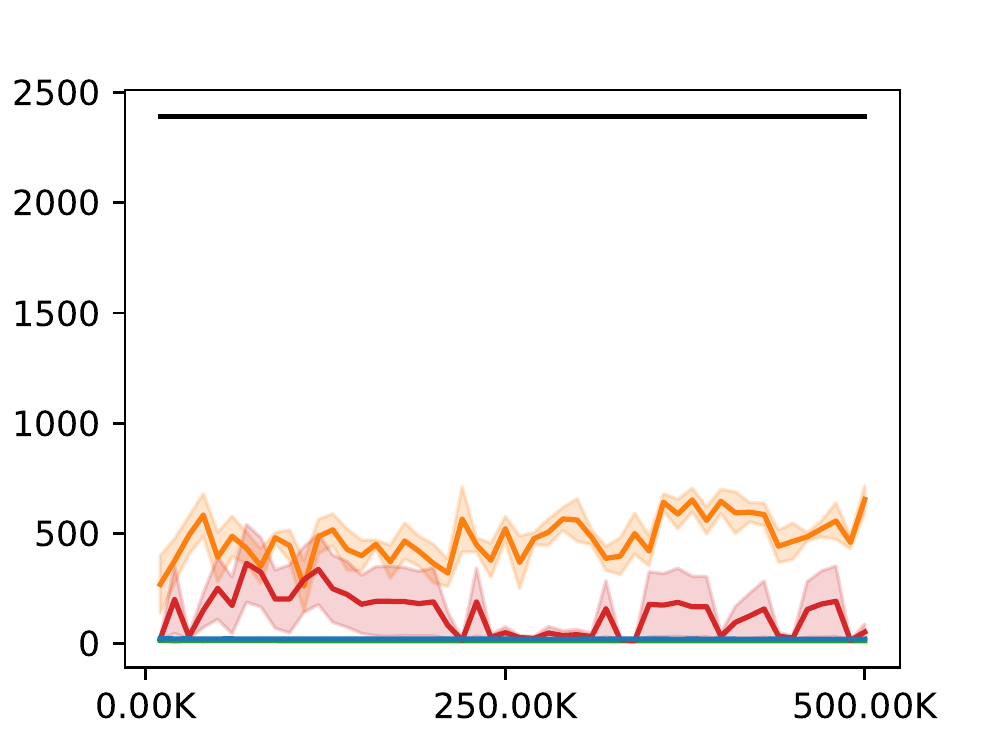}
&  \includegraphics[width=3cm]{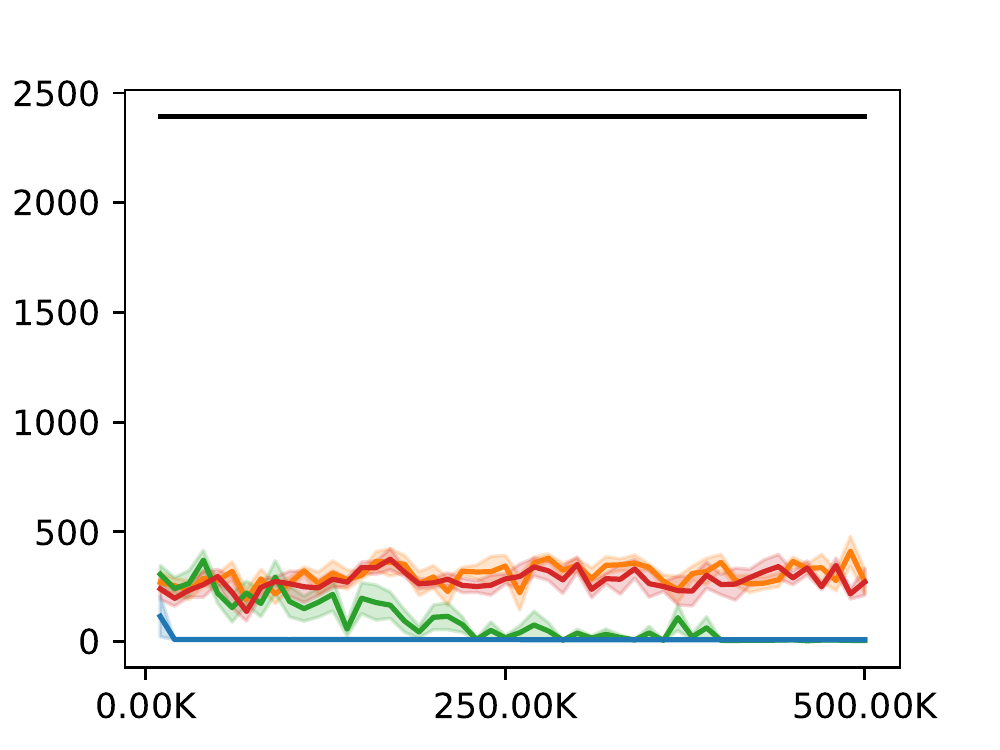}
&  \includegraphics[width=3cm]{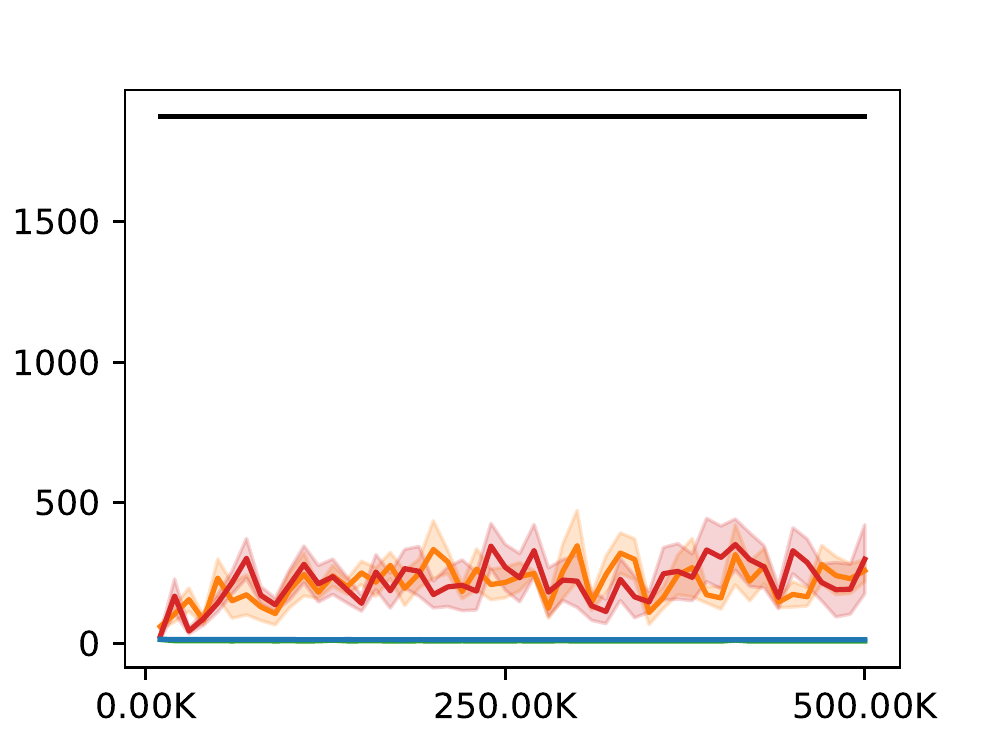}
&  \includegraphics[width=3cm]{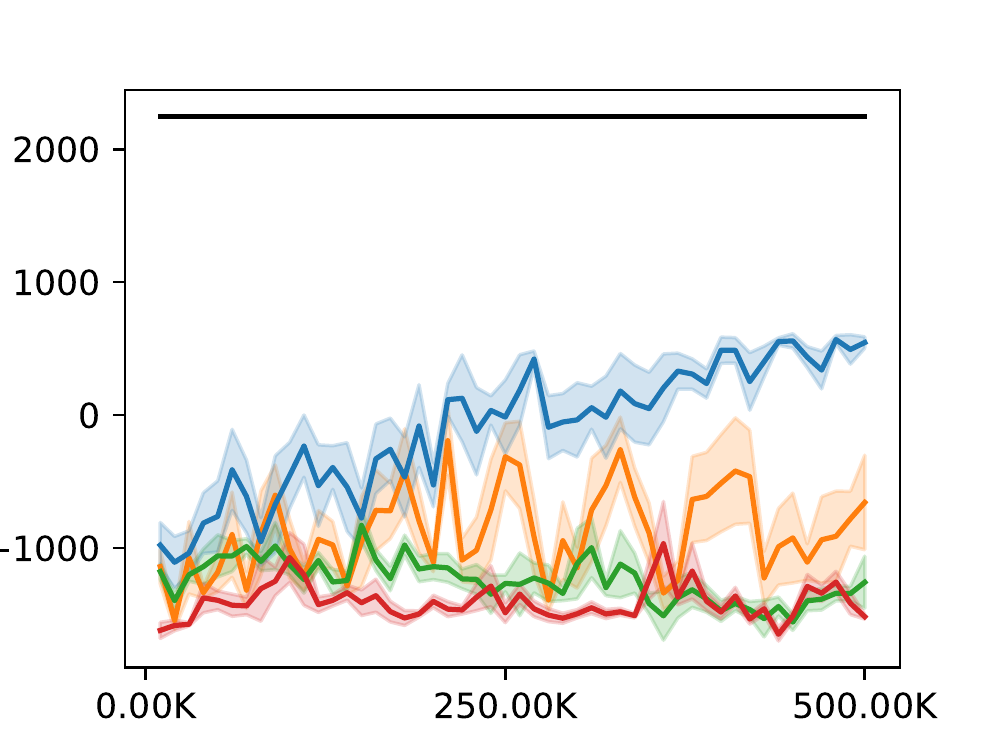} \\
\end{longtable}
\end{center}

\section{Runtime Comparison of Different Algorithms}
\label{sec-runtime}
We ran all algorithms from the paper in a controlled computational setting (no other processes running on the machine) in the hopper environment for 500K iterations. An expert dataset containing 16 episodes was used. Results are given below.

\begin{center}
\begin{tabular}{ l l }
 \bf{Algorithm} & \bf{Runtime} \\
 BC & 0m 17s \\
 SQIL & 94m 39s \\
 ILR & 99m 16s \\
 GAIL &	213m 56s \\
 GMMIL & 448m 42s \\
\end{tabular}
\end{center}

Behavioral Cloning (BC) is the fastest, because it does not have to do any interaction with the environment (it does, however, produce a policy with the worst performance). SQIL and ILR have a comparable runtime, dominated by the complexity of simulating the environment. GAIL is slower because it has to train the discriminator network. GMMIL is the slowest algorithm due to the high cost of computing the MMD divergences.

Overall, the conclusion from this benchmark is that our algorithm (ILR) is highly competitive in terms of runtime.

\section{Additional Proofs}
\label{appdx-proofs}

\begin{proof}[Proof of Lemma \ref{property-tv}]
By setting $v' = 1-v$, it follows that  $\expe_{\rho_2}[ v ]- \expe_{\rho_1} [ v ] \leq \epsilon$ for all $v$. Therefore $ | \expe_{\rho_1}[ v ]- \expe_{\rho_2} [ v ] | \leq \epsilon$ for all $v$. For any $M \subseteq X$ on the right-hand side of \eqref{eq-def-tv}, we can instantiate $v(x) = \mathbbm{1}\left\{x \in M\right\}$, completing the proof.
\end{proof}

\begin{proof}[Proof of Lemma \ref{lemma-tv-reward}]
We have
\begin{gather*}
    | \expe_{\rho_1}[ v ]- \expe_{\rho_2} [ v ] | = \left| \textstyle \sum_{x} (\rho_1(x) - \rho_2(x)) v(x) \right| \leq \textstyle \sum_{x} \left| \rho_1(x) - \rho_2(x) \right| = \| \rho_1 - \rho_2 \|_1.
\end{gather*}
Here, the first inequality follows because elements of $v$ are in the interval $[0,1]$. The statement of the lemma follows from the property $\| \rho_1 - \rho_2 \|_\TV = \frac12 \| \rho_1 - \rho_2 \|_1$.
\end{proof}

\begin{proof}[Proof of Lemma \ref{corollary-tv-close}]
By Theorem 4.9 in the book by \citet{levin2017markov}, there are constants $C>0$ and $\alpha \in (0,1)$ such that $ \max_s' \| \indicator(s')^\top P_E^t - \rho^E_S \|_\TV \leq C \alpha^t$. Defining $\tilde{\rho}_t^{s'}(s,a) = (\indicator(s')^\top P_E^t)(s) \pi_E(a |s)$, we have $ \max_{s'} \| \tilde{\rho}_t^{s'} - \rho^E \|_\TV \leq C \alpha^t$, where we used the fact that the expert policy is deterministic so that $\| \tilde{\rho}_t^{s'} - \rho^E \|_\TV = \| \indicator(s')^\top P_E^t - \rho^E_S \|_\TV$. The mixing time of the chain $\mixingTime$, defined as the smallest $t$ so that $\max_{s'} \| \indicator(s')^\top P_E^t - \rho^E_S \|_\TV \leq \frac14$ is then bounded as $\mixingTime \leq  - \log_\alpha C - \log_\alpha(4)$. 

It remains to show that, for any probability distribution $p_1$, we have $\sum_{t=0}^\infty \| p_1^\top P_E^t - \rho^E_S \| \leq 2 \mixingTime$.

Define $d(t) = \sup_p \| p^\top P_E^t - \rho^E_S \|_\TV$. By equation 4.35 in the book by \citet{levin2017markov}, we have
\begin{gather}
 d(\ell \mixingTime) \leq 2^{-\ell}.
\label{property-b}
\end{gather}
We can now write
\begin{align*}
    \sum_{t=0}^\infty \| p_1^\top P_E^t - \rho^E_S \| \leq
    \sum_{t=0}^\infty d(t) \leq \sum_{t=0}^\infty d(\mixingTime \lfloor t / \mixingTime \rfloor) \leq  \sum_{t=0}^\infty 2^{ -\lfloor t / \mixingTime \rfloor},
\end{align*}
where the first inequality follows from the definition of $d(t)$, the second one from the fact that $d(t') \leq d(t)$ for $t' \geq t$ and the last one from \eqref{property-b}. We can rewrite the right-hand side further, giving 
\begin{align*}
\sum_{t=0}^\infty 2^{ -\lfloor t / \mixingTime \rfloor} =
 \mixingTime \sum_{t=0}^\infty 2^{-t} = 2\mixingTime.  
\end{align*}

\end{proof}

\begin{proof}[Proof of Lemma \ref{lemma-ir-implies-er}]
Consider the imitation learner's trajectory of length $T$. We will now consider sub-sequences where the agent agrees with the expert. Denote by $M_\ell$ the number of such sequences of length $\ell$. Denote by $\hat{V}^{i,\ell}$ the per-step extrinsic reward obtained by the agent in the $i$th sequence of length $\ell$.

Assuming the worst case reward on states which are not in any of the sequences (i.e. where the agent disagrees with the expert), the total extrinsic return $J_T$ along the trajectory is at least $\sum_\ell \ell \sum_{i=1}^{M_\ell} \hat{V}^{i,\ell} = \sum_\ell \ell M_\ell \frac1{M_\ell} \sum_{i=1}^{M_\ell} \hat{V}^{i,\ell} $. Dividing by the sequence length, the per-step extrinsic reward is at least
\begin{gather}
\frac{J_T}{T} \geq \textstyle
\sum_\ell \frac{\ell M_\ell}{T} \frac1{M_\ell} \sum_{i=1}^{M_\ell} \hat{V}^{i,\ell}.
\label{eq-ex-r}
\end{gather}
Denote by $\Delta_\ell = \left |\frac1{M_\ell} \sum_{i=1}^{M_\ell} \hat{V}^{i,\ell} - \frac1{M_\ell} \sum_{i=1}^{M_\ell} \tilde{V}^{\ell}_{p(s_1^\ell)} \right|$ the difference between the average extrinsic reward obtained in sequences of length $\ell$ and its expected value, where we denoted by $p(s_1^\ell)$ the initial state distribution among sequences of length $\ell$. We can now re-write \eqref{eq-ex-r} as:
\begin{gather}
\frac{J_T}{T} \geq \textstyle
\sum_\ell \frac{\ell M_\ell}{T} \frac1{M_\ell} \sum_{i=1}^{M_\ell} \tilde{V}^{\ell}_{p(s_1^\ell)} - \sum_\ell \frac{\ell M_\ell}{T} \Delta_\ell. 
\end{gather}

Using Lemma \ref{erg-seq-reward}, we can re-write this further 
\begin{align}
\frac{J_T}{T} &\geq \textstyle
\sum_\ell \frac{\ell M_\ell}{T} \left( \expe_{\rho^E}[\rewardE] - \frac1\ell 4\mixingTime \right) - \sum_\ell \frac{\ell M_\ell}{T} \Delta_\ell \\ 
& = \textstyle \sum_\ell \frac{\ell M_\ell}{T} \expe_{\rho^E}[\rewardE] - \sum_\ell \frac{M_\ell}{T} 4\mixingTime  - \sum_\ell \frac{\ell M_l}{T} \Delta_\ell. 
\end{align}
Denote by $B$ the number of timesteps the imitation learner disagrees with the expert. Observe that we have $\sum_\ell \frac{M_\ell}{T} \leq \frac{B+1}{T}$ since $B$ bad timesteps can partition the trajectory into at most $B+1$ sub-sequences and $\sum_\ell M_\ell$ is the total number of sub-sequences. This gives
\begin{align}
\frac{J_T}{T} &\geq \textstyle \sum_\ell \frac{\ell M_\ell}{T} \expe_{\rho^E}[\rewardE] - \frac{B+1}{T} 4\mixingTime  - \sum_l \frac{\ell M_\ell}{T} \Delta_\ell. 
\end{align}

Now, using the fact that the imitation learner policy is ergodic, we can take limits as $T \rightarrow \infty$. The left hand side converges to $\expe_{\rho^I}[\rewardE]$ with probability one. On the right-hand side, $\sum_\ell \frac{\ell M_\ell}{T}$ (the fraction of time the imitation learner agrees with the expert) converges to $\expe_{\rho^I}[\rewardI] = 1 - \kappa$ with probability one. Using ergodicity again, $\frac{B+1}{T}$ converges to $1-\expe_{\rho^I}[\rewardI]$ with probability one. 

It remains to prove that the error term $\sum_\ell \frac{\ell M_\ell}{T} \Delta_\ell$ converges to zero with probability one. First, we will show that, for every $\ell$, either $M_\ell$ is always zero, i.e. the streak length $\ell$ does not occur in sub-sequences of any length or $M_\ell \rightarrow \infty$ as $T \rightarrow \infty$. Indeed, the chain is ergodic, which means that, if we traversed a streak of length $\ell$, we have non-zero probability of returning to where the streak begun and then retracing it. Let us call the of set of all $\ell$s where $M_\ell=0$ by $\mathbb{N} - L_\infty$. Moreover, let us call the set of $\ell$s with $M_\ell \rightarrow \infty$ as $T \rightarrow \infty$ with $L_\infty$. We have: 
\begin{gather} \textstyle
\sum_{\ell \in L_\infty} \frac{\ell M_\ell}{T} \Delta_\ell + \sum_{\ell \in \mathbb{N} - L_\infty} \frac{\ell M_\ell}{T} \Delta_\ell \leq \sup_{\ell \in L_\infty} \Delta_\ell.
\end{gather}
The second term in the sum equals zero with because of the definition of $L_\infty$. The term $\sup_{\ell \in L_\infty} \Delta_\ell$ converges to zero with probability one by the law of large numbers, where we use the fact that $M_\ell \rightarrow \infty$ as $T \rightarrow \infty$ for $M_\ell \in L_\infty$.
\end{proof}

\end{document}